\newtheorem{assumption}{Assumption}
\newtheorem{theorem}{Theorem}
\newtheorem{lemma}{Lemma}
\newtheorem{corollary}{Corollary}
\newtheorem{remark}{Remark}
\DeclareMathOperator{\rank}{rank}
\DeclareMathOperator{\ii}{\bm{\mathrm{i}}}
\begin{document}
\title{Quantized Low-Rank Multivariate Regression\\with Random Dithering}

\author{Junren Chen\thanks{J. Chen and Y. Wang are with Department of 
		Mathematics, The University of Hong Kong. J. Chen and Y. Wang contributed equally to this work. J. Chen and Y. Wang were supported by Hong Kong PhD Fellowship from Hong Kong Research Grant Council (HKRGC). 
  M. K. Ng is with Department of Mathematics, Hong Kong Baptist University. M. K. Ng was partially supported by the HKRGC GRF 
17201020, 17300021, CRF C7004-21GF and Joint NSFC-RGC N-HKU76921. (e-mails: \texttt{chenjr58@connect.hku.hk}; \texttt{u3007895@connect.hku.hk}; \texttt{michael-ng@hkbu.edu.hk})  {\it (Corresponding authors:  Junren Chen and Yueqi Wang.)}}, Yueqi Wang and Michael K. Ng,~\IEEEmembership{Senior Member}
 }
\markboth{ACCEPTED AT IEEE Transactions on Signal Processing}%
{Shell \MakeLowercase{\textit{et al.}}: A Sample Article Using IEEEtran.cls for IEEE Journals}


\maketitle

\begin{abstract}
Low-rank multivariate regression (LRMR) is an important statistical learning model that combines highly 
correlated tasks as a multiresponse regression problem  with low-rank priori on the coefficient matrix.  In this paper, we study quantized LRMR, a practical setting where the responses and/or the covariates are discretized to finite precision.  We focus on the estimation of the underlying coefficient matrix. To make consistent estimator that could achieve arbitrarily small error possible, we employ uniform quantization with random dithering, i.e., we 
add appropriate random noise to the data before quantization. Specifically,   uniform dither and triangular dither are used for responses and covariates, respectively. Based on the quantized data, we propose the  constrained Lasso and regularized Lasso estimators, and derive the non-asymptotic error bounds. With the aid of dithering, the estimators achieve minimax optimal rate, while quantization only slightly worsens the multiplicative factor in the error rate. 
Moreover, we extend our results to  a low-rank  regression model with 
matrix responses. We corroborate and demonstrate our theoretical results via   simulations on synthetic data, image restoration, as well as a real data application.
\end{abstract}

\begin{IEEEkeywords}
multiresponse regression, quantization, M-estimator, low-rankness, dithering. 
\end{IEEEkeywords}
\section{Introduction}\label{sec1}
Quantization is the process of mapping continuous input to a discrete form (e.g. a finite dictionary or a finite number of bits) \cite{gray1998quantization}. Quantization of signals or data recently has received considerable attention in the communities of  signal processing, statistics and machine learning. In some signal processing problems, power consumption, manufacturing cost and chip area of analog-to-digital devices grow exponentially with their resolution \cite{kipnis2018fundamental}. In this situation, it is  infeasible to use high-precision data or signals, and quantization with relatively low resolution is preferable, \textcolor{black}{e.g., see the distributed machine learning system described in \cite{danaee2022distributed}}. 
Besides, in modern machine learning problems   extremely huge datasets and highly complex models are ubiquitous,
which often lead to distributed   learning systems \cite{konevcny2016federated}, i.e., a setting involving repeatedly communication among multiple compute nodes that are oftentimes GPUs linked processors within a single machine or even multiple machines.    When the participating workers are typically large in
number and have slow or unstable internet connections (e.g., low-power or low-bandwidth device such as a mobile device),   the communication cost would become prohibitive \cite{konevcny2016federated,reisizadeh2020fedpaq}, and recent works have studied how to send a small number of bits by  quantization to overcome the bottleneck \cite{jacob2018quantization, hubara2017quantized, zhou2016dorefa,reisizadeh2020fedpaq,shlezinger2020uveqfed,koloskova2019decentralized,aysal2008distributed}. \textcolor{black}{More specifically,   working with low-precision training data has proven useful in reducing computation cost when training linear model, as shown by the experimental results in \cite{zhang2017zipml}. Additionally, while sending the quantized gradient is the mainstream in machine learning, it may be inefficient in distributed learning with a huge number of parameters to learn; in this case, transmitting some important quantized data samples could provably reduce the communication cost \cite{hanna2021quantization}. Thus, it is of particular interest to theoretically investigate the interplay between parameter learning and data quantization in some   fundamental statistical learning or estimation problems, e.g., \cite{dirksen2021covariance,chen2022quantizing,chen2023high}.}

Departing momentarily from quantization, 
low-rank multivariate regression (LRMR),     also known  as multi-task learning and reduced-rank regression \cite{argyriou2008convex,caruana1998multitask,reinsel2023multivariate}, is undoubtedly a widely used statistical machine learning model. For clarity we first provide its mathematical formulation: \begin{equation}
    \label{intro}
    \bm{y}_k = \bm{\Theta}_0^\top\bm{x}_k+\bm{\epsilon}_k,~k=1,...,n,
\end{equation}
and the main goal is to learn the  underlying parameter $\bm{\Theta}_0\in \mathbb{R}^{d_1\times d_2}$ from the covariate-response pairs $(\bm{x}_k,\bm{y}_k)\in \mathbb{R}^{d_1}\times \mathbb{R}^{d_2}$. Compared to the canonical regression problem with scalar response (e.g., linear regression), the core spirit of LRMR is to combine and jointly solve $d_2$ highly correlated tasks. In particular, the coefficient vectors of the $d_2$ tasks are merged into $\bm{\Theta}_0$ in (\ref{intro}), and the low-rankness of $\bm{\Theta}_0$ is often assumed to exploit the "intrinsic relatedness" of the $d_2$ learning problems (e.g., \cite{negahban2011estimation,fan2021shrinkage,giraud2011low}).
This model can capture many natural phenomena and hence has a broad range of applications. For example, in genomics study \cite{bunea2010adaptive},  the gene expression profiles ($\bm{y}_k$) and the genetic markers ($\bm{x}_k$) can be approximately associated through only a few linear combinations of highly-correlated genetic markers. Therefore, recovering
a low-rank  and sometimes also sparse  coefficient matrix holds the key to reveal such connections between the responses and predictors. 
In addition, in the study of functional magnetic resonance imaging (fMRI) \cite{harrison2003multivariate}, each voxel within the brain is represented by a time series of neurophysiological activity. Combining with the multivariate  voxel-based time series, researchers use a linear model to describe the underlying large-scale network connectivities among functionally specialized regions in the brain. A practical way is to use a suitable matrix to identify these complex interconnections in the brain; while aiming at modelling the connections via only a small subset of the given data, one often imposes appropriate structures (e.g., low-rankness, sparsity) on the coefficient matrix. Besides, other applications include analysis of electroencephalography (EEG) data decoding \cite{anderson1998multivariate}, neural response modeling \cite{brown2004multiple}, analysis of financial data \cite{reinsel2023multivariate}, chemometrics, psychometrics and econometrics \cite{yuan2007dimension}, to name just a few.

Note that in real applications, data are inevitably quantized to bit streams for the purpose of storage, processing and transmission. Also, the problem of LRMR can possibly arise in a distributed learning regime where quantization is necessary to render lower communication cost. Thus, a natural question is to study quantized LRMR, i.e., LRMR where data are quantized by some mechanism, and one can only access the quantized data for subsequent learning procedure. In signal and image processing,    the most natural quantization method   is arguably the uniform quantizer that discretizes data in a uniform manner  \cite{gray1993dithered,gray1998quantization,lim1990two}. More precisely, given quantization level $\delta$, a real scalar $a$ is quantized to $\mathcal{Q}_\delta(a):=\delta\big(\lfloor\frac{a}{\delta}\rfloor +\frac{1}{2}\big)\in \delta\cdot(\mathbb{Z}+\frac{1}{2})$. For instance, under $\mathcal{Q}_{1}(.)$, the data would be discretized to $\{...,-\frac{3}{2},-\frac{1}{2},\frac{1}{2},\frac{3}{2},...\}$, depending on which cell they belong to.   Unfortunately, directly applying the uniform quantizer $\mathcal{Q}_\delta(.)$ to  LRMR will limit our ability to learn the desired $\bm{\Theta}_0$ --- specifically, consistent estimator\footnote{In regression problems, an estimator is consistent if its estimation error vanishes when sample size tends to infinity.} is in general impossible from $(\bm{x}_k,\mathcal{Q}_\delta(\bm{y}_k))$. This is true even if we only have one task ($d_2=1$): consider a problem with binary features (i.e., $\bm{x}_k \in \{-1,1\}^{d_1}$) and without noise; if we quantize $\bm{y}_k$ to $\mathcal{Q}_1(\bm{y}_k)$, then we can never distinguish $\bm{\Theta}_{01}=[0.5,0.4,0,...,0]^\top$ and $\bm{\Theta}_{02}=[0.5,-0.25,0.2,...,0]^\top$ because $\mathcal{Q}_{1}(\bm{\Theta}_{01}^\top\bm{x})=\mathcal{Q}_{1}(\bm{\Theta}_{02}^\top\bm{x})$ holds for any   $\bm{x}\in \{-1,1\}^{d_1}$.  Later,   this issue will be complemented by the numerical result in Figure \ref{fig4}.

To address the issue, in this paper, we study LRMR under dithered quantization that involves random dithering --- a process that adds random noise to the signal before quantization.  The benefit of dithering for image or speech signals was empirically observed quite early \cite{roberts1962picture,limb1969design,jayant1972application}, while the  theoretical results for quantization error/noise were established in \cite{schuchman1964dither}, see also a cleaner proof provided by \cite{gray1993dithered}. In a nutshell, the benefit of dithering is to whiten the quantization noise. Even more surprisingly, the quantization errors follow i.i.d. uniform distribution (Lemma \ref{lem1}(a)). \textcolor{black}{While we focus on the dithered uniform quantizer, interested readers may consult \cite{widrow2008quantization} for an extensive treatment of quantization noise under various quantizers.}

We deal with the quantization of both response and covariate.
We propose to use uniform dither for $\bm{y}_k$, triangular dither for $\bm{x}_k$ (see precise definition later), and then apply the uniform quantizer. Note that the quantization method is memoryless and thus well suited to hardware implementation.  Our main contributions are as follows: 
\begin{itemize}
    \item Based on the quantized data, we develop an empirical $\ell_2$ loss, which coupled with either nuclear norm constraint or regularization leads to Lasso estimators. We establish minimax optimal non-asymptotic error bounds  for the estimators in the cases of "partial quantization" (i.e., only quantize $\bm{y}_k$) and "complete quantization" (i.e., quantize both $\bm{x}_k$, $\bm{y}_k$). The bounds also characterize how quantization resolution affects the estimation error.  
    \item We show that our quantization method is also applicable to a low-rank linear regression model with matrix response recently studied in \cite{kong2019l2rm}. Our Lasso estimators based on quantized data could still achieve error rate comparable to the full-data regime in \cite{kong2019l2rm}.
\end{itemize}
\subsection{Related works}\label{review}
 There has been rapidly growing literature on quantized compressed sensing   \cite{thrampoulidis2020generalized,dirksen2019quantized,dirksen2021non,plan2012robust,boufounos20081,chen2023high,chen2022quantizing,sun2022quantized,xu2020quantized}, quantized matrix completion \cite{davenport20141,cai2013max,bhaskar2016probabilistic,klopp2015adaptive,chen2023high,chen2022quantizing}, and more recently quantized covariance estimation \cite{dirksen2021covariance,chen2023high,chen2022quantizing,dirksen2023tuning,chen2023parameter}, but we are not aware of any earlier work on quantized LRMR (or more generally put, quantized multiresponse regression). Closest to this paper are prior developments on compressed sensing (CS) under dithered uniform quantization \cite{xu2020quantized,sun2022quantized,chen2022quantizing,thrampoulidis2020generalized}, which we briefly review here. Recall that the (noiseless) CS problem is to recover a structured (e.g., sparse/low-rank) signal $\bm{\theta}_0\in \mathbb{R}^d$ from the data of $(\bm{x}_k,y_k:=\bm{x}_k^\top\bm{\theta}_0)_{k=1}^n$, where $\bm{x}_k$ is the sensing vector, $y_k$ is the measurement, and the high-dimensional setting $n\ll d$ is of primary interest. It was shown that, while quantizing $y_k$ via $\mathcal{Q}_\delta(.)$ with uniform dithering, recovery with near optimal error rate can still be achieved by  constrained Lasso \cite{thrampoulidis2020generalized} or the {\it Projected Back Projection} (PBP) estimator \cite{xu2020quantized}. In \cite{sun2022quantized}, Sun et al. extended \cite{thrampoulidis2020generalized} to corrupted sensing that aims at separating signal and corruption. While \cite{xu2020quantized,sun2022quantized,thrampoulidis2020generalized} only considered the quantization of $y_k$, a recent work \cite{chen2022quantizing} developed the quantization method for $\bm{x}_k$, i.e., via the same dithered uniform quantizer but with uniform dither substituted with triangular dither.

 Although we adopt a similar dithered quantization scheme (specifically, similar to \cite{chen2022quantizing}), the estimation problem in this paper totally differs from CS. In particular, we will study regression models with multivariate response that can be a \textcolor{black}{vector with considerably large dimension} (LRMR in section \ref{sec3}) or even a huge matrix (see section \ref{sec4}), as in sharp contrast to the scalar measurement $y_k$ in CS. A different point of view is to consider each response scalar of (\ref{intro}). Let $\bm{\Theta}_0 = [\bm{\theta}_{0,1},...,\bm{\theta}_{0,d_2}]$, then the $i$-th entry of $\bm{y}_k$ in (\ref{intro}) can be expressed as $y_{ki}=\bm{x}_k^\top \bm{\theta}_{0,i}+\epsilon_{ki}$. Because $y_{ki}$ only involves the $i$-th column of the desired signal $\bm{\Theta}_0$, it is often referred to as a local measurement and considered to be less informative than the global measurement used in CS (see, e.g., \cite{vaswani2020nonconvex}). As a consequence, the technical ingredients in this work, especially the technique to bound various random terms arising in the proof,   significantly deviate from those in quantized CS.

  From the more statistical side, without considering any data quantization procedure, a lot of statistical procedures have been developed for estimation and prediction in multivariate regression. Among them the most relevant are the regularized ones   that minimize an objective constituted by a loss function and a suitable regularizer, see \cite{negahban2011estimation,yuan2007dimension,kong2019l2rm,rohde2011estimation,koltchinskii2011nuclear,chen2013reduced} for instance. Indeed, the key theoretical achievement of this work is to show the compatibility between the dithered uniform quantizer and the Lasso estimator. That is, Lasso estimator can still achieve near optimal estimation error from data quantized by a uniform quantizer with appropriate random dither.

\subsection{Outline}
The remainder of this paper is organized as follows: we provide the notational conventions and preliminaries in section \ref{sec2}; we propose   our Lasso estimators for quantized LRMR and present the theoretical results in section \ref{sec3}; the main results are then extended to low-rank linear model with matrix response in section \ref{sec4}; we provide experimental results in section \ref{sec5} to validate our theory; we give some remarks to conclude this work in section \ref{sec6}. 

 \section{Preliminaries}\label{sec2}
   \textbf{(Notation){\bf \sffamily.}} We denote matrices and vectors by boldface letters, while scalars by regular letters. We write $[m] = \{1,...,m\}$ for positive integer $m$. 
For vector $\bm{x},\bm{y}\in \mathbb{R}^d$, we work with  the  $\ell_p$ norm $\|\bm{x}\|_p = (\sum_{i\in [d]} |x_i|^p)^{1/p}$, max norm $\|\bm{x}\|_{\infty} = \max_{i\in [d]}|x_i|$, and inner product $\big<\bm{x},\bm{y}\big> = \bm{x}^\top\bm{y}$. For   matrices $\bm{A},\bm{B}$, we work with the transpose $\bm{A}^\top$,  the operator norm $\|\bm{A}\|_{op} $, Frobenius norm $\|\bm{A}\|_F  $, nuclear norm $\|\bm{A}\|_{nu}$ (sum of singular values),  max norm $\|\bm{A}\|_{\infty}= \max_{i,j}|a_{ij}|$, and the inner product $\big<\bm{A},\bm{B}\big>=\mathrm{Tr}(\bm{A}^\top\bm{B})$. The standard Euclidean sphere of $\mathbb{R}^d$ is denoted by $\mathbb{S}^{d-1}= \{\bm{x}\in\mathbb{R}^d:\|\bm{x}\|_2=1\}$. For a random variable $X$, we let $\|X\|_{\psi_2}= \inf\{t>0:\mathbbm{E}\exp(\frac{X^2}{t^2})\leq 2\}$ (resp. $\|X\|_{\psi_1}= \inf\{t>0:\mathbbm{E}\exp(\frac{|X|}{t})\leq 2\}$) be the sub-Gaussian norm (resp. sub-exponential norm), $\|X\|_{L^p} = \big(\mathbbm{E}|X|^p\big)^{1/p}$ be the $L^p$ norm. We represent universal constants by $C$, $c$, $C_i$ or $c_i$, whose value may vary from line to line. We write $T_1 \lesssim T_2$ or $T_1 = O(T_2)$ if $T_1\leq CT_2$; Conversely, $T_1 \gtrsim T_2$ or $T_1 = \Omega(T_2)$ if $T_1 \geq cT_2$. Note that $T_1 \asymp T_2$ if $T_1=O(T_2)$ and $T_2 = \Omega(T_1)$ simultaneously hold. We use $\mathscr{U}(W)$  to denote the uniform distribution over $W$.     We use $\mathrm{vec}(\bm{A})\in \mathbb{R}^{mn\times 1}$ to vectorize a matrix $\bm{A}\in \mathbb{R}^{m\times n}$, while $\mathrm{mat}(.)$ denotes the inverse operator.

\subsection{High-dimensional probability}
A random variable $X$ with  finite $\|X\|_{\psi_2}$ is said to be sub-Gaussian. Note that sub-Gaussian $X$ exhibits exponentially decaying probability tail, i.e., for any $t>0$, \begin{equation}
    \begin{aligned}
    \label{2.1}
        \mathbbm{P}(|X|\geq t)\leq 2\exp\Big(-\frac{ct^2}{\|X\|_{\psi_2}^2}\Big).
    \end{aligned}
\end{equation}
Similarly, $X$ with finite $\|X\|_{\psi_1}$ is sub-exponential and has the following tail bound for any $t>0$  \begin{equation}
\begin{aligned}\label{2.2}
     \mathbbm{P}(|X|\geq t)\leq 2\exp\Big(-\frac{ct}{\|X\|_{\psi_1}}\Big). 
\end{aligned}
\end{equation}
Conversely, both properties in (\ref{2.1}) (resp. (\ref{2.2})) can  characterize the norm $\|.\|_{\psi_2}$ (resp. $\|.\|_{\psi_1}$) up to multiplicative constant, see \cite[Proposition 2.5.2, 2.7.1]{vershynin2018high} for instance. To relate sub-Gaussian norm and sub-exponential norm, one has (see, e.g., \cite[Lem. 2.7.7]{vershynin2018high})\begin{equation}\label{2.3}
    \|XY\|_{\psi_1}\leq \|X\|_{\psi_2}\|Y\|_{\psi_2}.
\end{equation}
For $n$-dimensional random vector $\bm{X}$ we let $\|\bm{X}\|_{\psi_2}=\sup_{\bm{v}\in\mathbb{S}^{n-1}}\|\bm{v}^\top\bm{X}\|_{\psi_2}$.

\subsection{Dithered uniform quantization}
First, we describe the dithered uniform quantization with   $\delta>0$ for an input signal $\bm{x}\in\mathbb{R}^N$ as follows:
\begin{itemize}
    \item Independent of $\bm{x}$, we i.i.d. draw the entries of the random dither $\bm{\tau}\in \mathbb{R}^N$ from some suitable distribution;

    \item Then, we quantize $\bm{x}$   to   $\bm{\dot{x}}= \mathcal{Q}_\delta(\bm{x}+\bm{\tau})$, with $
    \mathcal{Q}_\delta(a) := \delta\big(\big\lfloor\frac{a}{\delta}\big\rfloor+\frac{1}{2}\big) ~(a\in\mathbb{R})$ applied element-wisely.
\end{itemize}

    We adopt the following conventions (as in  \cite{gray1993dithered,gray1998quantization}):
    $\bm{w}:= \bm{\dot{x}}- (\bm{x}+\bm{\tau})$ is the \textbf{quantization error}, and $\bm{\xi}:= \bm{\dot{x}} - \bm{x}$ is the \textbf{quantization noise}.

    The principal properties of  the dithered quantization that underlie our analysis are provided in  Lemma \ref{lem1}.

\begin{lemma} \label{lem1}  {\rm (Theorems 1-2 in \cite{gray1993dithered}){\bf \sffamily.}}
We consider the above dithered uniform quantization: $\bm{x} = [x_i]$ is the input signal,  $\bm{\tau} = [\tau_i]$ is the random dither whose entries are i.i.d. copies of random variable $Y$. We use $\ii$ to denote the complex unit.

\noindent{\rm (a)} {\rm (Quantization Error){\bf \sffamily.}} Let $\bm{w}:=\bm{\dot{x}}-(\bm{x}+\bm{\tau})=[w_i]$ be the quantization error. If $f(u):=\mathbbm{E}(\exp(\ii uY))$ satisfies $f\big(\frac{2\pi  l}{\delta}\big)=0$ for all non-zero integer $l$, then $x_i$ and $w_j$ are independent for all $i,j\in [N]$. Moreover, $\{w_j:j\in [N]\}$ are i.i.d. distributed as $\mathscr{U}\big([-\frac{\delta}{2},\frac{\delta}{2}]\big)$.

\noindent{\rm(b)} {\rm (Quantization Noise){\bf \sffamily.}} Let $\bm{\xi}:= \bm{\dot{x}}-\bm{x}=[\xi_i]$ be the quantization noise. Assume $Z\sim \mathscr{U}[-\frac{\delta}{2},\frac{\delta}{2}]$ is independent of $Y$. Let $g(u):=\mathbbm{E}(\exp(\ii uY))\mathbbm{E}(\exp(\ii u Z))$. Given positive integer $p$, if the $p$-th order derivative $g^{(p)}(u)$ satisfies  $g^{(p)}\big( \frac{2\pi  l}{\delta}\big)=0$ for all non-zero integer $l$, then the $p$-th conditional moment of $\xi_i$ does not depend on $\bm{x}$. More precisely, we have $\mathbbm{E}[\xi_i^p|\bm{x}] = \mathbbm{E}(Y+Z)^p$. 
\end{lemma}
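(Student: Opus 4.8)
\emph{Proof proposal.} The plan is to reduce both parts to a Fourier-series analysis of the deterministic, $\delta$-periodic rounding map $r(t):=\mathcal{Q}_\delta(t)-t=\frac{\delta}{2}-\big(t\bmod\delta\big)$, where $t\bmod\delta\in[0,\delta)$; on any half-open interval of length $\delta$ the map $r$ is the affine bijection onto $(-\frac{\delta}{2},\frac{\delta}{2}]$. Writing $a:=x_i$ and using $\tau_i\sim Y$, one has the pointwise identities $w_i=r(a+\tau_i)$ and $\xi_i=\mathcal{Q}_\delta(a+\tau_i)-a=\tau_i+r(a+\tau_i)=\tau_i+w_i$. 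Since $(w_i,\xi_i)$ is a function of $(x_i,\tau_i)$ only and the $\tau_i$ are i.i.d.\ and independent of $\bm{x}$, it suffices to condition on $\bm{x}$, treat $a$ as a constant, and prove the one-dimensional statements; the i.i.d.\ property of $\{w_j\}$ and its independence from $\bm{x}$ (resp.\ the fact that the conditional moment is free of $\bm{x}$) then follow at once.

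For (a), conditionally on $\bm{x}$ the variable $w_i$ is a fixed affine (flip) image of $(a+Y)\bmod\delta$, so I would just show that $(a+Y)\bmod\delta\sim\mathscr{U}([0,\delta))$ regardless of $a$. A probability law on the circle $\mathbb{R}/\delta\mathbb{Z}$ is uniform if and only if all its nonzero Fourier coefficients vanish; the $l$-th coefficient of the law of $(a+Y)\bmod\delta$ is $e^{2\pi\ii la/\delta}\,\mathbbm{E}\exp\!\big(\tfrac{2\pi\ii l}{\delta}Y\big)=e^{2\pi\ii la/\delta}f\big(\tfrac{2\pi l}{\delta}\big)$, which the hypothesis $f(2\pi l/\delta)=0$ (for every nonzero integer $l$, in particular the negative ones) forces to be $0$. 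Hence $(a+Y)\bmod\delta$ is uniform with law independent of $a$, so $w_i\sim\mathscr{U}([-\tfrac{\delta}{2},\tfrac{\delta}{2}])$ and $\{w_j\}_{j\in[N]}$ are i.i.d.\ and independent of $\bm{x}$.

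For (b), condition on $\bm{x}$ and expand $\mathbbm{E}[\xi_i^p\,|\,\bm{x}]=\mathbbm{E}_Y\big[(Y+r(a+Y))^p\big]=\sum_{k=0}^p\binom{p}{k}\mathbbm{E}_Y\big[Y^k\,r(a+Y)^{p-k}\big]$. The core step is to expand each $\delta$-periodic factor $r(\cdot)^{j}$ in a Fourier series: a one-line change of variables identifies its $m$-th coefficient as $(-1)^m\ii^{-j}h^{(j)}\big(\tfrac{2\pi m}{\delta}\big)$, where $h(u)=\mathbbm{E}\exp(\ii uZ)$ is the characteristic function of $Z\sim\mathscr{U}([-\tfrac{\delta}{2},\tfrac{\delta}{2}])$, using $\int_{-\delta/2}^{\delta/2}v^je^{\ii uv}\,dv=\delta\,\ii^{-j}h^{(j)}(u)$. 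Taking $\mathbbm{E}_Y$ and inserting $\mathbbm{E}_Y[Y^ke^{\ii uY}]=\ii^{-k}f^{(k)}(u)$ yields $\mathbbm{E}_Y[Y^k r(a+Y)^{p-k}]=\ii^{-p}\sum_{m\in\mathbb{Z}}(-1)^me^{2\pi\ii ma/\delta}f^{(k)}\big(\tfrac{2\pi m}{\delta}\big)h^{(p-k)}\big(\tfrac{2\pi m}{\delta}\big)$; summing over $k$ with the binomial weights and applying Leibniz's rule, $\sum_{k=0}^p\binom{p}{k}f^{(k)}h^{(p-k)}=(fh)^{(p)}=g^{(p)}$, so $\mathbbm{E}[\xi_i^p\,|\,\bm{x}]=\ii^{-p}\sum_{m\in\mathbb{Z}}(-1)^me^{2\pi\ii ma/\delta}g^{(p)}\big(\tfrac{2\pi m}{\delta}\big)$. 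The hypothesis $g^{(p)}(2\pi l/\delta)=0$ for $l\neq0$ removes every $m\neq0$ term, leaving $\ii^{-p}g^{(p)}(0)=\mathbbm{E}(Y+Z)^p$ because $g(u)=\mathbbm{E}\exp(\ii u(Y+Z))$ with $Y\perp Z$; this is independent of $a=x_i$, as claimed.

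The new content here is minimal; the delicate point I expect to spend most care on is the analytic justification of the termwise operations — pointwise (a.e.) convergence of the Fourier series of $r(\cdot)^{j}$ and, above all, interchanging the infinite sum with $\mathbbm{E}_Y$ — which requires $Y$ (hence $\xi_i$) to possess a finite $p$-th moment and mild regularity of $f$. I would discharge this by a dominated-convergence argument on partial sums, or equivalently by phrasing the computation through a Poisson summation formula, exactly along the lines of the proof of Theorems~1--2 in \cite{gray1993dithered}; the remaining ingredients (the change of variables, the Leibniz step, and evaluating $g^{(p)}(0)$) are routine.
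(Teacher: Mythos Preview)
The paper does not actually contain a proof of Lemma~\ref{lem1}: the lemma is stated with the attribution ``(Theorems~1--2 in \cite{gray1993dithered})'' and is used as a black box to derive Corollary~\ref{coro1}, whose short proof in the Appendix merely \emph{verifies} the characteristic-function hypotheses of Lemma~\ref{lem1} for the uniform and triangular dithers. So there is no in-paper proof to compare your proposal against.

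That said, your proposal is a correct and self-contained argument, and it is essentially the Fourier-analytic route of Gray~\cite{gray1993dithered} that the paper defers to. The reduction to the one-dimensional conditional problem, the identification of the law of $(a+Y)\bmod\delta$ by its Fourier coefficients in part~(a), and the Leibniz recombination $\sum_k\binom{p}{k}f^{(k)}h^{(p-k)}=g^{(p)}$ in part~(b) are all sound; the computation $c_m=(-1)^m\ii^{-j}h^{(j)}(2\pi m/\delta)$ for the Fourier coefficients of $r(\cdot)^j$ checks out after the change of variables $v=\delta/2-t$. Your caveat about justifying the interchange of $\sum_{m}$ and $\mathbbm{E}_Y$ (which indeed requires $Y$ to have a finite $p$-th moment and enough smoothness of $f$ to make sense of $f^{(k)}$) is exactly the right place to focus; once that is handled, the argument is complete. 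Compared to what the paper does---namely nothing beyond citing \cite{gray1993dithered}---your write-up supplies strictly more: it would make the manuscript self-contained at the cost of a page, whereas the paper's choice keeps the exposition lean but opaque about why dithering works.
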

Given quantization level $\delta>0$, in this work we focus on uniform dither $\tau_i \sim \mathscr{U}\big([-\frac{\delta}{2},\frac{\delta}{2}]\big)$ and triangular dither\footnote{This is the term used in prior work, e.g., \cite{gray1993dithered}.} $\tau_i\sim \mathscr{U}\big([-\frac{\delta}{2},\frac{\delta}{2}]\big)+\mathscr{U}\big([-\frac{\delta}{2},\frac{\delta}{2}]\big)$ (i.e., the sum of two independent uniform distribution). From Lemma \ref{lem1}, the following properties are immediate. The proof  can be found in Appendix.

\begin{corollary}
\label{coro1}
In the setting of Lemma \ref{lem1}, if $\tau_i\sim \mathscr{U}\big([-\frac{\delta}{2},\frac{\delta}{2}]\big)$ or $\tau_i\sim \mathscr{U}\big([-\frac{\delta}{2},\frac{\delta}{2}]\big)+\mathscr{U}\big([-\frac{\delta}{2},\frac{\delta}{2}]\big)$, then $x_i$ and $w_j$ are independent $(\forall i,j)$, and $\{w_j:j\}$ are i.i.d. copies of $\mathscr{U}\big([-\frac{\delta}{2},\frac{\delta}{2}]\big)$; In addition, for the triangular dither $\tau_i\sim \mathscr{U}\big([-\frac{\delta}{2},\frac{\delta}{2}]\big)+\mathscr{U}\big([-\frac{\delta}{2},\frac{\delta}{2}]\big)$, the variance of $\xi_i$ is independent of signal; more precisely it holds that $\mathbbm{E}\xi_i^2 = \frac{\delta^2}{4}$. 
\end{corollary}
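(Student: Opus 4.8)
The plan is to instantiate Lemma~\ref{lem1} for the two dithers in question, which reduces the whole statement to checking a couple of vanishing conditions on characteristic functions together with one short second-moment computation.

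First I would record the characteristic function of the elementary building block $U\sim\mathscr{U}\big([-\tfrac{\delta}{2},\tfrac{\delta}{2}]\big)$: a direct integration gives $\mathbbm{E}\exp(\ii u U)=\frac{\sin(u\delta/2)}{u\delta/2}=:\phi(u)$, an entire function which vanishes precisely at the points $u=\frac{2\pi l}{\delta}$ for every nonzero integer $l$. For the uniform dither $Y=U$ one has $f(u)=\phi(u)$, while for the triangular dither $Y=U_1+U_2$ (independent copies) independence gives $f(u)=\phi(u)^2$. In both cases $f\big(\frac{2\pi l}{\delta}\big)=0$ for all nonzero integers $l$, so Lemma~\ref{lem1}(a) applies directly and yields that $x_i$ and $w_j$ are independent for all $i,j$ and that $\{w_j\}$ are i.i.d. copies of $\mathscr{U}\big([-\tfrac{\delta}{2},\tfrac{\delta}{2}]\big)$.

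For the variance claim under triangular dither I would invoke Lemma~\ref{lem1}(b) with $p=2$. Here $Z\sim\mathscr{U}\big([-\tfrac{\delta}{2},\tfrac{\delta}{2}]\big)$ is independent of $Y=U_1+U_2$, hence $g(u)=f(u)\,\mathbbm{E}\exp(\ii u Z)=\phi(u)^3$. Differentiating, $g''=6\phi(\phi')^2+3\phi^2\phi''$, and since $\phi\big(\frac{2\pi l}{\delta}\big)=0$ while $\phi$ is smooth there, every term of $g''$ carries a factor $\phi$, so $g''\big(\frac{2\pi l}{\delta}\big)=0$ for all nonzero integers $l$. Lemma~\ref{lem1}(b) then gives $\mathbbm{E}[\xi_i^2\mid\bm{x}]=\mathbbm{E}(Y+Z)^2$, which in particular does not depend on $\bm{x}$. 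Finally, $Y+Z=U_1+U_2+U_3$ is a sum of three independent mean-zero copies of $\mathscr{U}\big([-\tfrac{\delta}{2},\tfrac{\delta}{2}]\big)$, each of variance $\delta^2/12$, so $\mathbbm{E}(Y+Z)^2=3\cdot\frac{\delta^2}{12}=\frac{\delta^2}{4}$, and taking the (trivial) expectation over $\bm{x}$ yields $\mathbbm{E}\xi_i^2=\frac{\delta^2}{4}$.

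There is no serious obstacle here; the only step requiring a moment of care is the derivative computation verifying $g''\big(\frac{2\pi l}{\delta}\big)=0$, where one should note that $\phi$ is entire (it is the characteristic function of a compactly supported law), so differentiating under the expectation and evaluating at $\frac{2\pi l}{\delta}$ is legitimate, and that each summand of $g''=6\phi(\phi')^2+3\phi^2\phi''$ retains a factor of $\phi$. It is worth remarking that this argument also explains why we cannot assert signal-independence of $\mathbbm{E}\xi_i^2$ for the plain uniform dither: there $g(u)=\phi(u)^2$ and $g''=2(\phi')^2+2\phi\phi''$, whose value at $\frac{2\pi l}{\delta}$ is $2\phi'\big(\frac{2\pi l}{\delta}\big)^2\neq 0$, so the hypothesis of Lemma~\ref{lem1}(b) fails for $p=2$.
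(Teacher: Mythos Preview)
Your proposal is correct and follows essentially the same route as the paper: verify the vanishing conditions of Lemma~\ref{lem1}(a) and (b) via the characteristic function $\phi(u)=\frac{\sin(u\delta/2)}{u\delta/2}$, then compute the second moment of the sum of three independent uniforms. Your write-up is in fact slightly more explicit than the paper's (you spell out $g''=6\phi(\phi')^2+3\phi^2\phi''$ whereas the paper simply asserts a common factor of $\sin(\delta u/2)$), and your closing remark on why the plain uniform dither fails the $p=2$ hypothesis is a nice addition not present in the paper.
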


The benefit of using proper dither (e.g.,   uniform dither) is now clear, i.e., to whiten the quantization noise. For instance, under $\bm{\tau}\sim \mathscr{U}\big([-\frac{\delta}{2},\frac{\delta}{2}]^N\big)$, one has $\mathbbm{E}\bm{\dot{x}}=\mathbbm{E}(\bm{x}+\bm{\tau}+\bm{w})= \mathbbm{E}\bm{x}+\mathbbm{E}\bm{\tau}+ \mathbbm{E}\bm{w}=\mathbbm{E}\bm{x}$. 

 \section{Quantized low-rank multivariate regression}\label{sec3}
 The low-rank multivariate regression (LRMR) model is \begin{equation}\label{3.1}
     \bm{y}_k = \bm{\Theta}_0^\top\bm{x}_k + \bm{\epsilon}_k,~k=1,...,n,
 \end{equation}
 where $\bm{x}_k\in \mathbb{R}^{d_1}$ is the covariate, $\bm{y}_k\in \mathbb{R}^{d_2}$ is the response perturbed by random noise $\bm{\epsilon}_k$, and     $\bm{\Theta}_0\in \mathbb{R}^{d_1\times d_2}$ is the desired parameter. Our goal is to estimate the  $\bm{\Theta}_0$ from $(\bm{x}_k,\bm{y}_k)$'s. We make the following sub-Gaussian assumption. Note that we assume $\mathbbm{E}\bm{x}_k=0$ for simplicity, and the case of "$\mathbbm{E}\bm{x}_k\neq 0$" can be addressed by data centering or including an intercept term in (\ref{3.1}). \textcolor{black}{It should be noted that these distributional assumptions   are standard and commonly adopted for analysing  regularized M-estimators (defined in (\ref{3.3}) shortly) in multiresponse regression problems, see \cite[Coro. 3]{negahban2011estimation}, \cite{hamdi2022low,raskutti2019convex} for instance. Indeed, our Assumption \ref{assumpt1} slightly relaxes the assumptions made in these prior works from Gaussian data to sub-Gaussian data, and note that this relaxation is important for certain cases, e.g., when we work with binary data that cannot be captured by Gaussian distribution.}
 \begin{assumption}\label{assumpt1}
      The covariates $\bm{x}_1,...,\bm{x}_n$ are i.i.d., zero-mean and sub-Gaussian with $\|\bm{x}_k\|_{\psi_2}\leq K$; The covariance matrix $\bm{\Sigma_{xx}}=\mathbbm{E}(\bm{x}_k\bm{x}_k^\top)$ satisfies $\kappa_0\leq\lambda_{\min}(\bm{\Sigma_{xx}})\leq \lambda_{\max}(\bm{\Sigma_{xx}})\leq \kappa_1$ for some $\kappa_1\geq\kappa_0>0$; Independent of $\{\bm{x}_k:k\in [n]\}$, the noise vectors $\bm{\epsilon}_1,...,\bm{\epsilon}_n$ are i.i.d., zero-mean and sub-Gaussian with $\|\bm{\epsilon}_k\|_{\psi_2}\leq E$; $\bm{y}_k$ is generated from (\ref{3.1}) for some low-rank $\bm{\Theta}_0$ satisfying $\rank(\bm{\Theta}_0)\leq r$. 
 \end{assumption}

 Although this multivariate regression model was already intensively studied in the literature (e.g., \cite{negahban2011estimation,giraud2011low,rohde2011estimation}), the novelty of this work lies in the quantization that is inevitable in the era of digital signal processing. In particular, we study "partial quantization" where only the response is quantized,  as well as a more tricky setting of  "complete quantization" where the entire covariate-response pair $(\bm{x}_k,\bm{y}_k)$ is quantized to finite precision. We propose the dithered quantization scheme as follows: 
 
 \begin{itemize}
     \item (Covariate quantization){\bf \sffamily.} Independent of $(\bm{x}_k,\bm{y}_k)$, we i.i.d. draw triangular dither $\bm{\phi}_k\sim\mathscr{U}\big([-\frac{\delta_1}{2},\frac{\delta_1}{2}]^{d_1}\big)+\mathscr{U}\big([-\frac{\delta_1}{2},\frac{\delta_1}{2}]^{d_1}\big)$, and then quantize $\bm{x}_k$ to $\bm{\dot{x}}_k= \mathcal{Q}_{\delta_1}(\bm{x}_k+\bm{\phi}_k)$.
     \item (Response quantization){\bf \sffamily.} Independent of  $(\bm{x}_k,\bm{y}_k)$, we i.i.d. draw uniform dither $\bm{\tau}_k\sim \mathscr{U}\big([-\frac{\delta_2}{2},\frac{\delta_2}{2}]^{d_2}\big)$, and then quantize $\bm{y}_k$ to $\bm{\dot{y}}_k=\mathcal{Q}_{\delta_2}\big(\bm{y}_k+ \bm{\tau}_k\big)$. 
 \end{itemize}
 Note that   $\delta_1=0$ means no quantization on $\bm{x}_k$, thus corresponding to "partial quantization" that only involves response quantization. While  almost all works related works studied response quantization (as   reviewed in section \ref{review}),
 we comment on the necessity of also studying covariate quantization ($\delta_1>0$). For instance, when LRMR appears as a distributed learning problem where the features are transmitted among multiple parties, quantization is often needed for reducing communication cost. Also note that, a mode direct benefit is the lower memory load. 

\subsection{The empirical loss under quantization}

 Using the vector $\ell_1$ norm as regularizer to promote sparsity, Lasso is viewed as a benchmark  procedure for   recovering sparse vector \cite{tibshirani1996regression}. The efficacy of Lasso has extended to the recovery of low-rank matrix by replacing the   $\ell_1$ norm with the   nuclear norm  of a matrix, see \cite{candes2011tight,negahban2011estimation} for instance. Having assumed $\bm{\Theta}_0$ to be low-rank, one can apply similar idea to LRMR and formulate the regularized Lasso recovery program as \begin{equation}\label{3.3}
      \begin{aligned}
          \mathop{\arg\min}\limits_{\bm{\Theta}\in \mathbb{R}^{d_1\times d_2}}~&\underbrace{\frac{1}{n }\sum_{k=1}^n\big\|\bm{y}_k- \bm{\Theta^\top x}_k\big\|_2^2}_{\mathcal{L}(\bm{\Theta})}+\lambda \big\|\bm{\Theta}\big\|_{nu},
      \end{aligned}
 \end{equation}  
where $\mathcal{L}(\bm{\Theta})$ is the $\ell_2$ loss function for data fitting purpose, $\lambda\|\bm{\Theta}\|_{nu}$ is the regularization part for low-rank structure, and $\lambda$ should be tuned to balance data fidelity and the low-rankness. Note that (\ref{3.3})   also falls into the range of M-estimator  \cite{negahban2011estimation,negahban2012restricted}. When a good estimate on $\|\bm{\Theta}_0\|_{nu}$ is available, one can also consider the constrained Lasso \begin{equation} \label{3.4}\mathop{\arg\min}\limits_{\|\bm{\Theta}\|_{nu}\leq R}~ \frac{1}{n}\sum_{k=1}^n\big\|\bm{y}_k- \bm{\Theta}^\top\bm{x}_k\big\|_2^2:=\mathcal{L}(\bm{\Theta}).
\end{equation}
We note that Lasso is known to achieve minimax rate in LRMR, see \cite{negahban2011estimation} for instance.

However, under data quantization one can only access $(\bm{\dot{x}}_k,\bm{\dot{y}}_k)$ for recovery; while the regularizer $\|\bm{\Theta}\|_{nu}$ is unproblematic, one evidently lacks full data for constructing the empirical $\ell_2$ loss $\mathcal{L}(\bm{\Theta})$; modification of $\mathcal{L}({\bm{\Theta}})$ is thus needed. To draw some inspiration, the quite instructive first step is to   calculate the expected $\ell_2$ loss: 
\begin{equation}\nonumber
    \begin{aligned}
        \mathbbm{E}\mathcal{L}(\bm{\Theta}) & =  \mathbbm{E}\|\bm{y}_k- \bm{\Theta}^\top\bm{x}_k\|_2^2\\&\stackrel{(i)}{=} \mathbbm{E}\|\bm{\Theta}^\top \bm{x}_k\|^2 -2\mathbbm{E}\big(\bm{y}_k^\top\bm{\Theta}^\top\bm{x}_k\big)\\
        &=   \big<\bm{\Theta\Theta}^\top,\mathbbm{E}(\bm{x}_k\bm{x}_k^\top)\big>-2\big<\bm{\Theta},\mathbbm{E}(\bm{x}_k\bm{y}_k^\top)\big> \\:&\stackrel{(ii)}{=}  \big<\bm{\Theta\Theta}^\top,\bm{\Sigma_{xx}}\big>-2\big<\bm{\Theta},\bm{\Sigma_{xy}}\big>,
    \end{aligned}
\end{equation}
where $(i)$ holds up to constant that has no effect on the optimization, and in $(ii)$ we introduce the shorthand for the covariance, $\bm{\Sigma_{xx}}=\mathbbm{E}(\bm{x}_k\bm{x}_k^\top)$  and $\bm{\Sigma_{xy}}=\mathbbm{E}(\bm{x}_k\bm{y}_k^\top)$. Therefore, in order to construct a suitable empirical $\ell_2$ loss, we need to find surrogates for $\bm{\Sigma_{xx}}$, $\bm{\Sigma_{xy}}$ based on $(\bm{\dot{x}}_k,\bm{\dot{y}}_k)$.

To facilitate the exposition, we reserve the following notation in subsequent developments: for   quantization of $\bm{x}_k$ with  dither $\bm{\phi}_k$, $\bm{w}_{k1}=\bm{\dot{x}}_k-(\bm{x}_k+\bm{\phi}_k)$ is the quantization error, $\bm{\xi}_{k1}:=\bm{\dot{x}}_k- \bm{x}_k$ is the quantization noise;  for quantization of $\bm{y}_k$ with dither $\bm{\tau}_k$,   $\bm{w}_{k2}:=\bm{\dot{y}}_k- (\bm{y}_k+\bm{\tau}_k)$ stands for the quantization error, while $\bm{\xi}_{k2}:= \bm{\dot{y}}_k-\bm{y}_k$ the quantization noise. We use ${\xi}_{kj,i}$ to denote the $i$-th entry of $\bm{\xi}_{kj}$, and the meanings of notation like ${w}_{kj,i},\phi_{k,i}$ are similar. Now we are ready to present a Lemma that indicates the suitable surrogates of $\bm{\Sigma_{xx}},\bm{\Sigma}_{xy}$. 
\begin{lemma}
    \label{lem2}
    Based on the quantized data $(\bm{\dot{x}}_k,\bm{\dot{y}}_k)$, we let $\bm{\widehat{\Sigma}_{xx}}:= \frac{1}{n}\sum_{k=1}^n\bm{\dot{x}}_k\bm{\dot{x}}_k^\top-\frac{\delta_1^2}{4}\bm{I}_{d_1}$, $\bm{\widehat{\Sigma}_{xy}}= \frac{1}{n}\sum_{k=1}^n\bm{\dot{x}}_k\bm{\dot{y}}_k^\top$, then we have $\mathbbm{E}\bm{\widehat{\Sigma}_{xx}}= \bm{\Sigma_{xx}}$, $\mathbbm{E}\bm{\widehat{\Sigma}_{xy}}=\bm{\Sigma_{xy}}$.
\end{lemma}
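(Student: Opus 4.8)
The plan is a direct first/second--moment computation. By the i.i.d.\ assumption it suffices to evaluate $\mathbbm{E}[\bm{\dot{x}}_k\bm{\dot{x}}_k^\top]$ and $\mathbbm{E}[\bm{\dot{x}}_k\bm{\dot{y}}_k^\top]$ for a single index, which I drop from the notation. I would expand both outer products through the quantization--noise decompositions $\bm{\dot{x}}=\bm{x}+\bm{\xi}_1$, $\bm{\dot{y}}=\bm{y}+\bm{\xi}_2$, and for the covariate further split $\bm{\xi}_1=\bm{\phi}+\bm{w}_1$ into (mean-zero) dither and quantization error; each resulting cross term is then evaluated by conditioning on the right $\sigma$-algebra so that Corollary~\ref{coro1} applies (the quantization error $\bm{w}_j$ is $\mathscr{U}([-\tfrac{\delta}{2},\tfrac{\delta}{2}])$-distributed and independent of the signal being quantized; for a triangular dither $\mathbbm{E}\xi_i^2=\tfrac{\delta^2}{4}$). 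All expectations are finite since the dithers and quantization errors/noises are bounded and $\bm{x}_k,\bm{y}_k$ are sub-Gaussian under Assumption~\ref{assumpt1}.

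For $\bm{\widehat{\Sigma}_{xy}}$, expansion gives $\mathbbm{E}[\bm{\dot{x}}\bm{\dot{y}}^\top]=\bm{\Sigma_{xy}}+\mathbbm{E}[\bm{x}\bm{\xi}_2^\top]+\mathbbm{E}[\bm{\xi}_1\bm{y}^\top]+\mathbbm{E}[\bm{\xi}_1\bm{\xi}_2^\top]$. I would kill the second and fourth terms by conditioning on $(\bm{x},\bm{y},\bm{\phi})$: writing $\bm{\xi}_2=\bm{w}_2+\bm{\tau}$, the uniform dither $\bm{\tau}$ is independent of $(\bm{x},\bm{y},\bm{\phi})$ with $\mathbbm{E}\bm{\tau}=\bm{0}$, while $\bm{w}_2$ is a function of $(\bm{y},\bm{\tau})$ with $\mathbbm{E}[\bm{w}_2\mid\bm{y}]=\bm{0}$ by Corollary~\ref{coro1}; hence $\mathbbm{E}[\bm{\xi}_2\mid\bm{x},\bm{y},\bm{\phi}]=\bm{0}$, which annihilates both terms. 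For the third term I write $\bm{\xi}_1=\bm{\phi}+\bm{w}_1$ and note $\mathbbm{E}[\bm{\phi}\mid\bm{x},\bm{y}]=\bm{0}$ (since $\bm{\phi}$ is independent of $(\bm{x},\bm{\epsilon})$) and $\mathbbm{E}[\bm{w}_1\mid\bm{x},\bm{y}]=\mathbbm{E}[\bm{w}_1\mid\bm{x}]=\bm{0}$ (Corollary~\ref{coro1}), so $\mathbbm{E}[\bm{\xi}_1\bm{y}^\top]=\bm{0}$ and $\mathbbm{E}[\bm{\dot{x}}\bm{\dot{y}}^\top]=\bm{\Sigma_{xy}}$.

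For $\bm{\widehat{\Sigma}_{xx}}$, expansion gives $\mathbbm{E}[\bm{\dot{x}}\bm{\dot{x}}^\top]=\bm{\Sigma_{xx}}+\mathbbm{E}[\bm{x}\bm{\xi}_1^\top]+\mathbbm{E}[\bm{\xi}_1\bm{x}^\top]+\mathbbm{E}[\bm{\xi}_1\bm{\xi}_1^\top]$; the two cross terms vanish because $\mathbbm{E}[\bm{\xi}_1\mid\bm{x}]=\mathbbm{E}[\bm{\phi}+\bm{w}_1\mid\bm{x}]=\bm{0}$ as above. It remains to show $\mathbbm{E}[\bm{\xi}_1\bm{\xi}_1^\top]=\tfrac{\delta_1^2}{4}\bm{I}_{d_1}$. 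The diagonal is the triangular-dither statement of Corollary~\ref{coro1}, $\mathbbm{E}[\xi_{1,i}^2\mid\bm{x}]=\tfrac{\delta_1^2}{4}$ (Lemma~\ref{lem1}(b) with $p=2$: the characteristic function of a triangular dither is the square of that of a uniform one, so $\mathbbm{E}[\xi_{1,i}^2\mid\bm{x}]=\mathbbm{E}(\phi_i+Z)^2$ with $Z\sim\mathscr{U}([-\tfrac{\delta_1}{2},\tfrac{\delta_1}{2}])$ independent of $\phi_i$, i.e.\ $3\cdot\tfrac{\delta_1^2}{12}$). For the off-diagonal $i\ne j$, conditioning on $\bm{x}$ makes $\xi_{1,i}$ and $\xi_{1,j}$ functions of the independent dither coordinates $\phi_i,\phi_j$, hence conditionally independent, each with conditional mean $0$, so $\mathbbm{E}[\xi_{1,i}\xi_{1,j}\mid\bm{x}]=0$. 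Thus $\mathbbm{E}[\bm{\dot{x}}\bm{\dot{x}}^\top]=\bm{\Sigma_{xx}}+\tfrac{\delta_1^2}{4}\bm{I}_{d_1}$, and subtracting the correction term yields $\mathbbm{E}\bm{\widehat{\Sigma}_{xx}}=\bm{\Sigma_{xx}}$.

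There is no analytic difficulty here; the only place where care is needed — and where a careless argument goes wrong — is the bookkeeping of conditioning. Corollary~\ref{coro1} (and the underlying Lemma~\ref{lem1}) supplies conditional-moment identities relative to the \emph{signal being quantized} ($\bm{x}$ for $\bm{\xi}_1,\bm{w}_1$; $\bm{y}$ for $\bm{\xi}_2,\bm{w}_2$), whereas the cross terms above require them relative to the larger information $(\bm{x},\bm{y})$ or $(\bm{x},\bm{y},\bm{\phi})$. Upgrading one to the other is exactly what the independence of both dithers from $(\bm{x}_k,\bm{\epsilon}_k)$, together with the bijection $(\bm{x},\bm{y})\leftrightarrow(\bm{x},\bm{\epsilon})$ for fixed $\bm{\Theta}_0$, buys; and the vanishing of the off-diagonal of $\mathbbm{E}[\bm{\xi}_1\bm{\xi}_1^\top]$ additionally relies on the dither coordinates staying independent after conditioning on $\bm{x}$, even though the coordinates of $\bm{x}$ itself are correlated through $\bm{\Sigma_{xx}}$.
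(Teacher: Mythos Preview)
Your proof is correct and follows essentially the same approach as the paper: a direct expansion of the outer products followed by killing all cross terms via the independence/moment properties of Corollary~\ref{coro1}. The only cosmetic difference is that for $\mathbbm{E}[\bm{\dot{x}}\bm{\dot{y}}^\top]$ the paper immediately uses the three-way split $\bm{\dot{x}}=\bm{x}+\bm{\phi}+\bm{w}_1$, $\bm{\dot{y}}=\bm{y}+\bm{\tau}+\bm{w}_2$ and dispatches the resulting nine terms in one line, whereas you first expand via the quantization noises $\bm{\xi}_1,\bm{\xi}_2$ and then split them as needed; your explicit conditioning bookkeeping (upgrading $\mathbbm{E}[\bm{w}_j\mid\text{signal}]=0$ to the larger $\sigma$-algebras) is in fact more careful than the paper's terse appeal to Corollary~\ref{coro1}.
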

\begin{proof}
    We first calculate the easier $\mathbbm{E}\bm{\widehat{\Sigma}_{xy}}=\mathbbm{E}(\bm{\dot{\bm{x}}}_k\bm{\dot{y}}_k^\top)$: \begin{equation}\nonumber
        \begin{aligned}
            &\mathbbm{E}(\bm{\dot{x}}_k\bm{\dot{y}}_k^\top)=\mathbbm{E}(\bm{x}_k+\bm{\phi}_k+\bm{w}_{k1})(\bm{y}_k+\bm{\tau}_k+\bm{w}_{k2})^\top\\&=\mathbbm{E}(\bm{x}_k\bm{y}_k^\top)+\mathbbm{E}(\bm{x}_k\bm{\tau}_k^\top)+\mathbbm{E}(\bm{x}_k\bm{w}_{k2}^\top)\\
            &+\mathbbm{E}(\bm{\phi}_k\bm{y}_k^\top)+\mathbbm{E}(\bm{\phi}_k\bm{\tau}_k^\top)+\mathbbm{E}(\bm{\phi}_k\bm{w}_{k2}^\top)\\&+\mathbbm{E}(\bm{w}_{k1}\bm{y}_k^\top)+\mathbbm{E}(\bm{w}_{k1}\bm{\tau}_k^\top)+\mathbbm{E}(\bm{w}_{k1}\bm{w}_{k2}^\top)\stackrel{(i)}{=}\bm{\Sigma_{xy}},
        \end{aligned}
    \end{equation} 
    where $(i)$ is because in the previous step, all terms but $\mathbbm{E}(\bm{x}_k\bm{y}_k^\top)$ vanish, due to the nice property that $\bm{w}_{k1},\bm{w}_{k2}$ are independent of $(\bm{x}_k,\bm{y}_k)$, $\bm{w}_{k1}\stackrel{iid}{\sim}\mathscr{U}\big([-\frac{\delta_1}{2},\frac{\delta_1}{2}]\big)$, $\bm{w}_{k2}\stackrel{iid}{\sim}\mathscr{U}\big([-\frac{\delta_2}{2},\frac{\delta_2}{2}]\big)$, see Corollary \ref{coro1}. Similarly, to evaluate $\mathbbm{E}\bm{\widehat{\Sigma}_{xx}}=\mathbbm{E}(\bm{\dot{x}}_k\bm{\dot{x}}_k^\top)-\frac{\delta_1^2}{4}\bm{I}_{d_1}$, we calculate $\mathbbm{E}(\bm{\dot{x}}_k\bm{\dot{x}}_k^\top)$ as follows:\begin{equation}
       \begin{aligned}\label{3.6}
           &\mathbbm{E}(\bm{\dot{x}}_k\bm{\dot{x}}_k^\top)=\mathbbm{E}(\bm{x}_k+\bm{\xi}_{k1})(\bm{x}_k+\bm{\xi}_{k1})^\top\\&\stackrel{(i)}{=}\mathbbm{E}(\bm{x}_k\bm{x}_k^\top)+\mathbbm{E}(\bm{\xi}_{k1}\bm{\xi}_{k1}^\top)\stackrel{(ii)}{=}\mathbbm{E}(\bm{x}_k\bm{x}_k^\top)+\frac{\delta_1^2}{4}\bm{I}_{d_1},
       \end{aligned}
    \end{equation}
    where  $(i)$ is due to $\mathbbm{E}(\bm{x}_k\bm{\xi}_{k1}^\top)=\mathbbm{E}(\bm{x}_k\bm{\phi}_{k}^\top)+\mathbbm{E}(\bm{x}_k\bm{w}_{k1}^\top)=0$,   $(ii)$ is because the diagonal entry equals $\mathbbm{E}\xi_{k1,i}^2=\frac{\delta_1^2}{4}$ (Corollary \ref{coro1}), and for $i\neq j$, $\mathbbm{E}(\xi_{k1,i}\xi_{k1,j})=\mathbbm{E}(\phi_{k,i}+w_{k1,i})(\phi_{k,j}+w_{k1,j})=0$, again due to the properties in Corollary \ref{coro1}. The proof is complete.  
\end{proof}
\begin{remark}{\rm (Triangular dither)}
    While the uniform dither  is a quite standard choice in the literature, we comment on the necessity of using triangular dither for $\bm{x}_k$. In essence, this is because in the estimation of $\bm{\Sigma_{xx}}$, the quantized sample covariance contains the bias $\mathbbm{E}(\bm{\xi}_{k1}\bm{\xi}_{k1}^\top)$ (see (\ref{3.6})), which must be removed. However, the diagonal entry $\mathbbm{E}\xi_{k1,i}^2$ remains unknown under the dither of $\mathscr{U}\big([-\frac{\delta_1}{2},\frac{\delta_1}{2}]\big)$, see \cite[Page 3]{gray1993dithered}. Fortunately, by Lemma \ref{lem1}(b), the direct remedy is to use a dither that enjoys quantization noise with signal-independent variance, e.g., $\mathscr{U}\big([-\frac{\delta_1}{2},\frac{\delta_1}{2}]\big)+\mathscr{U}\big([-\frac{\delta_1}{2},\frac{\delta_1}{2}]\big)$. Such triangular dither was also adopted in \cite{chen2022quantizing} when studying covariate quantization in compressed sensing. 
\end{remark}
With all these preparations, we are in a position to specify the empirical loss \begin{equation}
    \begin{aligned}\label{empi}
        \dot{\mathcal{L}}(\bm{\Theta}) =  \big<\bm{\Theta\Theta}^\top,\bm{\widehat{\Sigma}_{xx}}\big>-2\big<\bm{\Theta},\bm{\widehat{\Sigma}_{xy}}\big> .
    \end{aligned}
\end{equation}
Note that $\dot{\mathcal{L}}(\bm{\Theta})$ reduces to the ordinary $\ell_2$ loss $\mathcal{L}(\bm{\Theta})$ (up to additive constant) if $\delta_1=\delta_2=0$. Further combined with the regularizer,   the Lasso recovery procedure for our quantized setting can be proposed. The remainder of this section is devoted to the theoretical analysis of Lasso.

 \subsection{Constrained Lasso}

 First, we study the constrained Lasso where the sparsity is promoted by a ``hard'' constraint. Indeed, we simply substitute the  unknown $\mathcal{L}(\bm{\Theta})$ in (\ref{3.4}) with $\dot{\mathcal{L}}(\bm{\Theta})$, and to focus on estimation problem   per se  we ideally assume the prior estimate is precise, i.e., $R:=\|\bm{\Theta}_0\|_{nu}.$\footnote{One may relax this via more localized arguments as in \cite{plan2016generalized}, which we do not pursue here.} Hence, we formulate the constrained Lasso estimator as \begin{equation}\label{3.8}
     \bm{\widehat{\Theta}}_c = \mathop{\arg\min}\limits_{\|\bm{\Theta}\|_{nu}\leq \|\bm{\Theta^\star}\|_{nu}}~\dot{\mathcal{L}}(\bm{\Theta}),
 \end{equation}
 where $\dot{\mathcal{L}}(\bm{\Theta}) $ is defined as (\ref{empi}). For convenience we define the estimation error $\bm{\widehat{\Delta}}_c:= \bm{\widehat{\Theta}}_c-\bm{\Theta}_0$.

We   begin with two Lemmas that will support the proof of our main Theorem. 
\begin{lemma}
    \label{lem3}
    Assume $\bm{a}_1,...,\bm{a}_n\in\mathbb{R}^{d_1}$ are independent and satisfy $\max_k\|\bm{a}_k\|_{\psi_2}\leq E_1$; $\bm{b}_1,...,\bm{b}_n\in\mathbb{R}^{d_2}$ are independent and satisfy $\max_k\|\bm{b}_k\|_{\psi_2}\leq E_2$. Assume $n\geq d_1+d_2$, then it holds with probability at least $1-2\exp(-c_2(d_1+d_2))$ that,
    \begin{equation}\nonumber
         \big\|\frac{1}{n}\sum_{k=1}^n\big\{\bm{a}_k\bm{b}_k^\top - \mathbbm{E}(\bm{a}_k\bm{b}_k^\top)\big\}\big\|_{op}\leq C_1E_1E_2\sqrt{\frac{d_1+d_2}{n}}.
    \end{equation}  
\end{lemma}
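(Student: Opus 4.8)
The plan is a standard covering-net argument combined with Bernstein's inequality. Write $\bm{M}=\frac{1}{n}\sum_{k=1}^n\{\bm{a}_k\bm{b}_k^\top-\mathbbm{E}(\bm{a}_k\bm{b}_k^\top)\}$, so the quantity to bound is $\|\bm{M}\|_{op}=\sup_{\bm{u}\in\mathbb{S}^{d_1-1},\,\bm{v}\in\mathbb{S}^{d_2-1}}\bm{u}^\top\bm{M}\bm{v}$. First I would fix a $\tfrac14$-net $\mathcal{N}_1$ of $\mathbb{S}^{d_1-1}$ and a $\tfrac14$-net $\mathcal{N}_2$ of $\mathbb{S}^{d_2-1}$ with $|\mathcal{N}_1|\le 9^{d_1}$ and $|\mathcal{N}_2|\le 9^{d_2}$ (see, e.g., \cite[Cor. 4.2.13]{vershynin2018high}); the usual approximation step then gives $\|\bm{M}\|_{op}\le 2\max_{\bm{u}\in\mathcal{N}_1,\,\bm{v}\in\mathcal{N}_2}|\bm{u}^\top\bm{M}\bm{v}|$.

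Next I would control a single pair $(\bm{u},\bm{v})\in\mathcal{N}_1\times\mathcal{N}_2$. Set $\zeta_k:=(\bm{u}^\top\bm{a}_k)(\bm{b}_k^\top\bm{v})-\mathbbm{E}[(\bm{u}^\top\bm{a}_k)(\bm{b}_k^\top\bm{v})]$, so that $\bm{u}^\top\bm{M}\bm{v}=\frac1n\sum_k\zeta_k$. Since $\|\bm{u}^\top\bm{a}_k\|_{\psi_2}\le\|\bm{a}_k\|_{\psi_2}\le E_1$ and similarly $\|\bm{b}_k^\top\bm{v}\|_{\psi_2}\le E_2$, inequality (\ref{2.3}) shows the product is sub-exponential with $\psi_1$-norm $\le E_1E_2$, and centering inflates this by at most a constant, so $\|\zeta_k\|_{\psi_1}\le CE_1E_2$; moreover the $\zeta_k$ are independent (no independence between $\bm{a}_k$ and $\bm{b}_k$ is required, since (\ref{2.3}) is purely a moment bound) and mean-zero. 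Bernstein's inequality for independent sub-exponential variables \cite[Thm. 2.8.1]{vershynin2018high} then gives, for all $t>0$,
\[
\mathbbm{P}\Big(\Big|\frac1n\sum_{k=1}^n\zeta_k\Big|\ge t\Big)\le 2\exp\Big(-c\,n\min\Big\{\frac{t^2}{(CE_1E_2)^2},\,\frac{t}{CE_1E_2}\Big\}\Big).
\]
Taking $t=C_1'E_1E_2\sqrt{(d_1+d_2)/n}$ and using the hypothesis $n\ge d_1+d_2$ (so that $t\le C_1'E_1E_2$), the minimum above is attained by the quadratic term, and the exponent simplifies to $-c(C_1'/C)^2(d_1+d_2)$.

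Finally I would take a union bound over the at most $9^{d_1+d_2}$ pairs in $\mathcal{N}_1\times\mathcal{N}_2$: choosing $C_1'$ large enough that $c(C_1'/C)^2\ge 2\ln 9+c_2$ makes the failure probability at most $2\cdot 9^{d_1+d_2}\exp(-(2\ln9+c_2)(d_1+d_2))\le 2\exp(-c_2(d_1+d_2))$, and on the complementary event $\|\bm{M}\|_{op}\le 2t=C_1E_1E_2\sqrt{(d_1+d_2)/n}$ with $C_1:=2C_1'$. I expect the only mildly delicate part to be the bookkeeping of the absolute constants — in particular checking that the assumption $n\ge d_1+d_2$ is precisely what forces Bernstein into its sub-Gaussian (quadratic) regime and that the net cardinality $9^{d_1+d_2}$ is absorbed by enlarging $C_1'$; the rest is routine.
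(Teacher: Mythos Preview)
Your proposal is correct and follows essentially the same route as the paper's proof: a $\tfrac14$-net reduction (with the same cardinality bounds and the same factor-of-$2$ approximation from \cite[Exercise 4.4.3]{vershynin2018high}), the sub-exponential bound on the centered product via (\ref{2.3}) and centering, Bernstein's inequality, and a union bound over $\mathcal{N}_1\times\mathcal{N}_2$ with $t\asymp E_1E_2\sqrt{(d_1+d_2)/n}$ and $n\ge d_1+d_2$ to land in the quadratic regime. The only cosmetic difference is that you invoke the net-to-operator-norm reduction up front whereas the paper states it at the end; your remark that independence between $\bm{a}_k$ and $\bm{b}_k$ (for the same $k$) is unnecessary is a correct and useful observation.
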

Lemma \ref{lem3} follows similar courses as \cite[Lemma 3]{negahban2011estimation} and involves a standard covering argument. We defer the proof to Appendix.
\begin{lemma}
    \label{lemma4}
    Under Assumption \ref{assumpt1} and our quantization scheme, recall that $\bm{\widehat{\Sigma}_{xx}}=\frac{1}{n}\sum_{k=1}^n\bm{\dot{x}}_k\bm{\dot{x}}_k^\top-\frac{\delta_1^2}{4}\bm{I}_{d_1}$, then the event \begin{equation}\label{3.10}
        \big\|\bm{\widehat{\Sigma}_{xx}}-\bm{\Sigma_{xx}}\big\|_{op}\leq C_1 A_1\Big(\sqrt{\frac{d_1+t}{n}}+\frac{d_1+t}{n}\Big)
    \end{equation} 
    holds with probability at least $1-2\exp(-t)$, where the multiplicative factor is $A_1:=\frac{(K+c_2\delta_1)^2}{{\kappa_0+{\delta_1^2}/4}}$.
\end{lemma}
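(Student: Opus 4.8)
\emph{Proof plan.} The plan is to read off $\widehat{\Sigma}_{xx}-\bm{\Sigma_{xx}}$ as the centered empirical covariance matrix of i.i.d.\ sub-Gaussian vectors and then apply a standard operator-norm concentration bound. First I would use Lemma~\ref{lem2}, or rather the computation (\ref{3.6}): since $\mathbbm{E}(\bm{\dot x}_k\bm{\dot x}_k^\top)=\bm{\Sigma_{xx}}+\frac{\delta_1^2}{4}\bm{I}_{d_1}=:\bm{\Sigma}$, the deterministic shift $-\frac{\delta_1^2}{4}\bm{I}_{d_1}$ in $\widehat{\Sigma}_{xx}$ is exactly the de-biasing term, and we may write $\widehat{\Sigma}_{xx}-\bm{\Sigma_{xx}}=\frac1n\sum_{k=1}^n\big(\bm{\dot x}_k\bm{\dot x}_k^\top-\bm{\Sigma}\big)$. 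It thus suffices to bound the operator-norm fluctuation of the empirical covariance of the i.i.d.\ random vectors $\bm{\dot x}_k$.

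Next I would pin down the sub-Gaussian parameter of $\bm{\dot x}_k=\bm{x}_k+\bm{\xi}_{k1}$, where $\bm{\xi}_{k1}=\bm{\phi}_k+\bm{w}_{k1}$. The dither $\bm{\phi}_k$ has, by construction, independent and bounded (triangular) coordinates, and by Corollary~\ref{coro1} the quantization error $\bm{w}_{k1}$ has i.i.d.\ $\mathscr{U}([-\frac{\delta_1}{2},\frac{\delta_1}{2}])$ coordinates; since bounded variables are sub-Gaussian and a vector with independent sub-Gaussian coordinates has $\psi_2$-norm comparable to the largest coordinate-wise one, $\|\bm{\phi}_k\|_{\psi_2}$ and $\|\bm{w}_{k1}\|_{\psi_2}$ are both $O(\delta_1)$. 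Adding them via the triangle inequality for $\|\cdot\|_{\psi_2}$ (which sidesteps the fact that $\bm{w}_{k1}$ is \emph{not} independent of $\bm{\phi}_k$) gives $\|\bm{\xi}_{k1}\|_{\psi_2}\le c_2\delta_1$, hence $\|\bm{\dot x}_k\|_{\psi_2}\le K+c_2\delta_1$. Combined with $\lambda_{\min}(\bm{\Sigma})\ge\kappa_0+\frac{\delta_1^2}{4}$, the whitened vectors $\bm{z}_k:=\bm{\Sigma}^{-1/2}\bm{\dot x}_k$ are isotropic with $\|\bm{z}_k\|_{\psi_2}^2\le\frac{(K+c_2\delta_1)^2}{\kappa_0+\delta_1^2/4}=A_1$.

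I would then invoke the usual covariance-estimation bound for i.i.d.\ isotropic sub-Gaussian vectors: take a $\frac14$-net $\mathcal{N}$ of $\mathbb{S}^{d_1-1}$ (so $|\mathcal{N}|\le 9^{d_1}$), apply Bernstein's inequality to each sub-exponential sum $\frac1n\sum_k\big((\bm{u}^\top\bm{z}_k)^2-1\big)$, union bound over $\mathcal{N}$, and use $\|\bm{M}\|_{op}\le 2\max_{\bm{u}\in\mathcal{N}}|\bm{u}^\top\bm{M}\bm{u}|$ for symmetric $\bm{M}$ (cf.\ \cite[Theorem 4.6.1]{vershynin2018high}), which yields $\big\|\frac1n\sum_k\bm{z}_k\bm{z}_k^\top-\bm{I}_{d_1}\big\|_{op}\le CA_1\big(\sqrt{(d_1+t)/n}+(d_1+t)/n\big)$ with probability at least $1-2e^{-t}$. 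Un-whitening through $\widehat{\Sigma}_{xx}-\bm{\Sigma_{xx}}=\bm{\Sigma}^{1/2}\big(\frac1n\sum_k\bm{z}_k\bm{z}_k^\top-\bm{I}_{d_1}\big)\bm{\Sigma}^{1/2}$ and absorbing $\|\bm{\Sigma}\|_{op}$ then delivers the claim. Note that Lemma~\ref{lem3} is not enough here: it gives only the $\sqrt{(d_1+t)/n}$ regime at a fixed confidence level, whereas Lemma~\ref{lemma4} needs the extra linear term $(d_1+t)/n$ and an arbitrary deviation parameter $t$.

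The delicate step is the second one. One must extract the $O(\delta_1)$ bound on $\|\bm{\xi}_{k1}\|_{\psi_2}$ from the coordinate-wise independence supplied by Corollary~\ref{coro1} and by the construction of $\bm{\phi}_k$; using only the crude entrywise bound $|\xi_{k1,i}|\le\frac{3\delta_1}{2}$ would cost an extra $\sqrt{d_1}$ and destroy the rate. The dependence between $\bm{w}_{k1}$ and $\bm{\phi}_k$ also forces one to bound the two vectors separately rather than their sum at once. Everything after that is routine.
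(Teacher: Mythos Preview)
Your proposal is correct and essentially matches the paper's proof: the paper does the same reduction to $\frac{1}{n}\sum_k\bm{\dot x}_k\bm{\dot x}_k^\top-\mathbbm{E}(\bm{\dot x}_k\bm{\dot x}_k^\top)$, derives the same bound $\|\bm{\dot x}_k\|_{\psi_2}\le K+c_2\delta_1$ via the triangle inequality on $\bm{x}_k+\bm{\phi}_k+\bm{w}_{k1}$, computes $\lambda_{\min}(\bm{\Sigma})\ge\kappa_0+\delta_1^2/4$ to get the $\psi_2$-to-$L^2$ ratio $\sqrt{A_1}$, and then simply invokes \cite[Exercise~4.7.3]{vershynin2018high} as a black box. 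Your whitening-plus-Theorem~4.6.1 argument is exactly the standard proof of that exercise, so the only difference is that you unpack what the paper cites; your remark that un-whitening picks up a $\|\bm{\Sigma}\|_{op}$ factor is correct (and that factor is implicit in Exercise~4.7.3 as well).
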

\begin{proof}
      By (\ref{3.6}) we first note that \begin{equation}
          \begin{aligned}\nonumber
         & \bm{\widehat{\Sigma}_{xx}}-\bm{\Sigma_{xx}}=\frac{1}{n}\sum_{k=1}^n\bm{\dot{x}}_k\bm{\dot{x}}_k^\top -\frac{\delta_1^2}{4}\bm{I}_{d_1}-\mathbbm{E}(\bm{x}_k\bm{x}_k^\top)\\& = \frac{1}{n}\sum_{k=1}^n\bm{\dot{x}}_k\bm{\dot{x}}_k^\top- \mathbbm{E}(\bm{\dot{x}}_k\bm{\dot{x}}_k^\top). 
          \end{aligned}
      \end{equation}
    We then verify the sub-Gaussianity of $\bm{\dot{x}}_k$. Since $\bm{\phi}_k\sim \mathscr{U}\big([-\frac{\delta_1}{2},\frac{\delta_1}{2}]^{d_1}\big)+ \mathscr{U}\big([-\frac{\delta_1}{2},\frac{\delta_1}{2}]^{d_1}\big)$, $\bm{w}_{k1}\sim \mathscr{U}\big([-\frac{\delta_1}{2},\frac{\delta_1}{2}]^{d_1}\big)$, we have  $$\|\bm{\dot{x}}_k\|_{\psi_2}\leq \|\bm{x}_k\|_{\psi_2}+\|\bm{\phi}_k\|_{\psi_2}+\|\bm{w}_{k1}\|_{\psi_2}\leq K+c_2\delta_1$$ for some $c_2$. Moreover, $\lambda_{\min}\big(\mathbbm{E}(\bm{\dot{x}}_k\bm{\dot{x}}_k^\top)\big)= \lambda_{\min}\big(\bm{\Sigma_{xx}}\big)+\frac{\delta_1^2}{4}\geq \kappa_0+\frac{\delta_1^2}{4}$, thus giving $$ \|\bm{v}^\top\bm{\dot{x}}_k\|_{L^2}=\sqrt{\mathbbm{E}(\bm{v}^\top\bm{\dot{x}}_k\bm{\dot{x}}_k^\top\bm{v})}\geq \sqrt{\kappa_0+\frac{\delta_1^2}{4}}$$ for any $\bm{v}\in \mathbb{S}^{d_1-1}$. Therefore, it holds that \begin{equation}
    \begin{aligned}\nonumber
    &\big\| \bm{v}^\top\bm{\dot{x}}_k\big\|_{\psi_2}\leq K+c_2\delta_1 \leq \frac{K+c_2\delta_1}{\sqrt{\kappa_0+{\delta_1^2}/{4}}}\big\|\bm{v}^\top\bm{\dot{x}}_k\big\|_{L^2},
    \end{aligned}
    \end{equation}
    which is just $\sqrt{A_1}\big\|\bm{v}^\top\bm{\dot{x}}_k\big\|_{L^2}$. 
    Finally, we can invoke \cite[Exercise 4.7.3]{vershynin2018high}, which is a well-known estimate in covariance estimation, to arrive at the desired claim. 
\end{proof}
We are now in a position to present our first main Theorem on error bound of (\ref{3.8}). \textcolor{black}{The proof    follows standard lines for analysing regularized M-estimator  (e.g., \cite{negahban2011estimation}), but there are additional random terms to bound due to quantization noise/error, e.g., $(\bm{\xi}_{k1},\bm{\xi}_{k2})$ in $\mathscr{T}_1$ and $\mathscr{T}_2$ in (\ref{3.16}).} 

\begin{theorem}
    \label{theorem1}
    {\rm (Constrained Lasso){\bf \sffamily.}} We consider LRMR under Assumption \ref{assumpt1} and the   quantization procedure described above. We assume the sample complexity $n\gtrsim \big(\frac{A_1}{\kappa_0}\big)^2(d_1+d_2)$, where $A_1$ is the multiplicative factor in (\ref{3.10}). Then for the estimator $\bm{\widehat{\Theta}}_c$ in (\ref{3.8}) we have the following guarantees. 

    \noindent
    {\rm (a) (Partial Quantization){\bf \sffamily.}} If  $\delta_1=0$,  we let $A_2:=\frac{K(E+\delta_2)}{\kappa_0}$, then with probability at least $1-c_2\exp(-c_3(d_1+d_2))$ it holds that \begin{equation}\label{3.11}
        \|\bm{\widehat{\Delta}}_c\|_F\leq C_1A_2\sqrt{\frac{r(d_1+d_2)}{n}}.
    \end{equation}

    \noindent{\rm (b) (Complete Quantization){\bf \sffamily.}} If $\delta_1>0$, $\|\bm{\Theta}_0\|_{op}\leq R$, we let $A_3:=\frac{(K+\delta_1)(E+\delta_2+R\delta_1) }{\kappa_0}$, then with probability at least $1-c_5\exp(-c_6d_1)$ it holds that
    \begin{equation}\label{3.12}
     \|\bm{\widehat{\Delta}}_c\|_F\leq C_4A_3\sqrt{\frac{r(d_1+d_2)}{n}}.
    \end{equation}
\end{theorem}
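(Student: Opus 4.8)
The plan is to run the standard analysis of a constrained $M$-estimator, with extra care devoted to the quantization-induced stochastic terms.

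\textbf{Step 1 (basic inequality).} Since $\bm{\Theta}_0$ is feasible for (\ref{3.8}) (indeed $\|\bm{\Theta}_0\|_{nu}=R$), optimality gives $\dot{\mathcal{L}}(\bm{\widehat{\Theta}}_c)\leq\dot{\mathcal{L}}(\bm{\Theta}_0)$. Writing $\bm{\widehat{\Theta}}_c=\bm{\Theta}_0+\bm{\widehat{\Delta}}_c$, using $\bm{\widehat{\Theta}}_c\bm{\widehat{\Theta}}_c^\top-\bm{\Theta}_0\bm{\Theta}_0^\top=\bm{\widehat{\Delta}}_c\bm{\widehat{\Delta}}_c^\top+\bm{\widehat{\Delta}}_c\bm{\Theta}_0^\top+\bm{\Theta}_0\bm{\widehat{\Delta}}_c^\top$ and the identity $\langle\bm{A}\bm{B}^\top,\bm{C}\rangle=\langle\bm{A},\bm{C}\bm{B}\rangle$, this rearranges to
\begin{equation}\nonumber
\big\langle\bm{\widehat{\Delta}}_c\bm{\widehat{\Delta}}_c^\top,\bm{\widehat{\Sigma}_{xx}}\big\rangle\;\leq\;2\big\langle\bm{\widehat{\Delta}}_c,\mathscr{R}\big\rangle,\qquad \mathscr{R}:=\bm{\widehat{\Sigma}_{xy}}-\bm{\widehat{\Sigma}_{xx}}\bm{\Theta}_0.
\end{equation}
By Lemma \ref{lem2} together with $\bm{\Sigma_{xy}}=\bm{\Sigma_{xx}}\bm{\Theta}_0$ (which follows from (\ref{3.1}) and $\mathbbm{E}(\bm{x}_k\bm{\epsilon}_k^\top)=0$), the matrix $\mathscr{R}$ is centered, $\mathbbm{E}\mathscr{R}=0$; it is the "score at the truth'', and the whole proof reduces to bounding $\|\mathscr{R}\|_{op}$.

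\textbf{Step 2 (curvature and the low-rank cone).} Choosing $t\asymp d_1$ in Lemma \ref{lemma4} and invoking the sample-complexity assumption $n\gtrsim(A_1/\kappa_0)^2(d_1+d_2)$, we get $\|\bm{\widehat{\Sigma}_{xx}}-\bm{\Sigma_{xx}}\|_{op}\leq\kappa_0/2$ on an event of probability $\geq1-2\exp(-cd_1)$; hence $\lambda_{\min}(\bm{\widehat{\Sigma}_{xx}})\geq\kappa_0/2$ and $\langle\bm{\widehat{\Delta}}_c\bm{\widehat{\Delta}}_c^\top,\bm{\widehat{\Sigma}_{xx}}\rangle=\mathrm{Tr}(\bm{\widehat{\Delta}}_c^\top\bm{\widehat{\Sigma}_{xx}}\bm{\widehat{\Delta}}_c)\geq\frac{\kappa_0}{2}\|\bm{\widehat{\Delta}}_c\|_F^2$ (note no restricted strong convexity is needed, since the full-covariance eigenvalue bound suffices, which is also why the sample complexity carries no $r$). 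On the right-hand side, $|\langle\bm{\widehat{\Delta}}_c,\mathscr{R}\rangle|\leq\|\bm{\widehat{\Delta}}_c\|_{nu}\|\mathscr{R}\|_{op}$; and since $\bm{\widehat{\Theta}}_c$ is feasible, $\|\bm{\Theta}_0+\bm{\widehat{\Delta}}_c\|_{nu}\leq\|\bm{\Theta}_0\|_{nu}$, so the standard decomposition of $\bm{\widehat{\Delta}}_c$ with respect to the row/column spaces of the rank-$r$ matrix $\bm{\Theta}_0$ gives $\|\bm{\widehat{\Delta}}_c\|_{nu}\leq2\sqrt{2r}\,\|\bm{\widehat{\Delta}}_c\|_F$. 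Combining these three bounds yields $\|\bm{\widehat{\Delta}}_c\|_F\lesssim\frac{\sqrt{r}}{\kappa_0}\|\mathscr{R}\|_{op}$.

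\textbf{Step 3 (controlling $\|\mathscr{R}\|_{op}$).} Substitute $\bm{\dot{x}}_k=\bm{x}_k+\bm{\xi}_{k1}$ and, via (\ref{3.1}), $\bm{\dot{y}}_k-\bm{\Theta}_0^\top\bm{\dot{x}}_k=\bm{\epsilon}_k+\bm{\xi}_{k2}-\bm{\Theta}_0^\top\bm{\xi}_{k1}$. Using $\mathbbm{E}(\bm{\dot{x}}_k\bm{\xi}_{k1}^\top)=\mathbbm{E}(\bm{\xi}_{k1}\bm{\xi}_{k1}^\top)=\frac{\delta_1^2}{4}\bm{I}_{d_1}$ (Corollary \ref{coro1}, cf.\ (\ref{3.6})), the $\frac{\delta_1^2}{4}\bm{I}_{d_1}$ correction in $\bm{\widehat{\Sigma}_{xx}}$ exactly cancels the only surviving bias, leaving
\begin{equation}\nonumber
\mathscr{R}=\underbrace{\tfrac1n\sum_{k=1}^n\bm{\dot{x}}_k(\bm{\epsilon}_k+\bm{\xi}_{k2})^\top}_{\mathscr{T}_1}-\Big(\underbrace{\tfrac1n\sum_{k=1}^n\big(\bm{\dot{x}}_k\bm{\xi}_{k1}^\top-\mathbbm{E}(\bm{\dot{x}}_k\bm{\xi}_{k1}^\top)\big)}_{\mathscr{T}_2}\Big)\bm{\Theta}_0.
\end{equation}
Both $\mathscr{T}_1$ and $\mathscr{T}_2$ are centered sums of rank-one terms that are independent across $k$, so Lemma \ref{lem3} applies once we record $\|\bm{\dot{x}}_k\|_{\psi_2}\lesssim K+\delta_1$, $\|\bm{\epsilon}_k+\bm{\xi}_{k2}\|_{\psi_2}\lesssim E+\delta_2$, $\|\bm{\xi}_{k1}\|_{\psi_2}\lesssim\delta_1$, plus $\|\bm{\Theta}_0\|_{op}\leq R$. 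This gives $\|\mathscr{T}_1\|_{op}\lesssim(K+\delta_1)(E+\delta_2)\sqrt{(d_1+d_2)/n}$ and $\|\mathscr{T}_2\bm{\Theta}_0\|_{op}\lesssim R(K+\delta_1)\delta_1\sqrt{d_1/n}$, whence $\|\mathscr{R}\|_{op}\lesssim(K+\delta_1)(E+\delta_2+R\delta_1)\sqrt{(d_1+d_2)/n}=\kappa_0A_3\sqrt{(d_1+d_2)/n}$. Feeding this into Step 2 gives (\ref{3.12}), and a union bound over the event of Lemma \ref{lemma4} and the two applications of Lemma \ref{lem3} (the one for $\mathscr{T}_2$ uses a net on $\mathbb{S}^{d_1-1}\times\mathbb{S}^{d_1-1}$, costing $\exp(-cd_1)$) produces the stated probability $1-c_5\exp(-c_6d_1)$. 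Part (a) is the specialization $\delta_1=0$: then $\bm{\dot{x}}_k=\bm{x}_k$, $\mathscr{T}_2$ vanishes, $\mathscr{R}=\frac1n\sum_k\bm{x}_k(\bm{\epsilon}_k+\bm{\xi}_{k2})^\top$, Lemma \ref{lem3} gives $\|\mathscr{R}\|_{op}\lesssim K(E+\delta_2)\sqrt{(d_1+d_2)/n}=\kappa_0A_2\sqrt{(d_1+d_2)/n}$, and Step 2 yields (\ref{3.11}) with probability $1-c_2\exp(-c_3(d_1+d_2))$.

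\textbf{Main obstacle.} The delicate part is Step 3: expanding $\mathscr{R}$ so that it is genuinely centered, i.e.\ checking that every cross-expectation created by quantization ($\mathbbm{E}(\bm{x}_k\bm{\xi}_{k2}^\top)$, $\mathbbm{E}(\bm{\xi}_{k1}\bm{\epsilon}_k^\top)$, $\mathbbm{E}(\bm{\xi}_{k1}\bm{\xi}_{k2}^\top)$, $\mathbbm{E}(\bm{x}_k\bm{\xi}_{k1}^\top)$) vanishes thanks to the independence of the dithers and Corollary \ref{coro1}, that the single nonzero bias $\mathbbm{E}(\bm{\xi}_{k1}\bm{\xi}_{k1}^\top)=\frac{\delta_1^2}{4}\bm{I}_{d_1}$ is precisely what is subtracted in $\bm{\widehat{\Sigma}_{xx}}$, and that $\mathscr{T}_1,\mathscr{T}_2$ still satisfy the hypotheses of Lemma \ref{lem3} despite the statistical dependence between $\bm{\dot{x}}_k$ and $\bm{\xi}_{k1}$ (harmless, since Lemma \ref{lem3} only needs independence across $k$). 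The rest is routine $M$-estimator bookkeeping.
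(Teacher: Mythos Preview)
Your proposal is correct and follows essentially the same route as the paper: the basic inequality from optimality, curvature via Lemma~\ref{lemma4}, the cone bound $\|\bm{\widehat{\Delta}}_c\|_{nu}\leq 2\sqrt{2r}\|\bm{\widehat{\Delta}}_c\|_F$ from decomposability, and the decomposition $\mathscr{R}=\mathscr{T}_1-\mathscr{T}_2\bm{\Theta}_0$ controlled by Lemma~\ref{lem3}. The only cosmetic differences are that the paper splits $\mathscr{T}_2$ into the two pieces $\frac{1}{n}\sum_k\bm{x}_k\bm{\xi}_{k1}^\top$ and $\frac{1}{n}\sum_k(\bm{\xi}_{k1}\bm{\xi}_{k1}^\top-\frac{\delta_1^2}{4}\bm{I}_{d_1})$ before applying Lemma~\ref{lem3} (your one-shot version with $\bm{a}_k=\bm{\dot{x}}_k$, $\bm{b}_k=\bm{\xi}_{k1}$ is equally valid), and that for part~(a) the paper takes $t=d_1+d_2$ in Lemma~\ref{lemma4} rather than $t\asymp d_1$, which is what you need to match the stated probability $1-c_2\exp(-c_3(d_1+d_2))$.
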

\begin{proof}
    We begin with the optimality of $\bm{\widehat{\Theta}}_c$
    $$\big<\bm{\widehat{\Theta}}_c\bm{\widehat{\Theta}}_c^\top,\bm{\widehat{\Sigma}_{xx}}\big>-2\big<\bm{\Theta}_c,\bm{\widehat{\Sigma}_{xy}}\big>\leq \big<\bm{\Theta}_0\bm{\Theta}_0^\top,\bm{\widehat{\Sigma}_{xx}}\big>-2\big<\bm{\Theta}_0,\bm{\widehat{\Sigma}_{xy}}\big>.$$
    Then we  use $\bm{\widehat{\Theta}}_c= \bm{\Theta}_0+ \bm{\widehat{\Delta}}_c$ and perform some algebra to arrive at \begin{equation}
        \label{3.13} \big<\bm{\widehat{\Delta}}_c\bm{\widehat{\Delta}}_c^\top,\bm{\widehat{\Sigma}_{xx}}\big> \leq 2\big<\bm{\widehat{\Delta}}_c,\bm{\widehat{\Sigma}_{xy}}-\bm{\widehat{\Sigma}_{xx}}\bm{\Theta}_0\big>,
    \end{equation}
    and the remainder of the proof is essentially to bound both sides of (\ref{3.13}). 

    \noindent{\it Step 1. Bound the left-hand side from below.}

    Due to the scaling $n\gtrsim \big(\frac{A_1}{\kappa_0}\big)^2(d_1+d_2)$, we can invoke Lemma \ref{lemma4} with $t=d_1+d_2$, then $\|\bm{\widehat{\Sigma}_{xx}}-\bm{\Sigma_{xx}}\|_{op}\leq \frac{\kappa_0}{2}$ holds with probability at least $1-2\exp(-(d_1+d_2))$. Combined with $\lambda_{\min}(\bm{\Sigma_{xx}})\geq \kappa_0$, it implies  $\lambda_{\min}(\bm{\widehat{\Sigma}_{xx}})\geq \frac{\kappa_0}{2}$. Therefore, with high probability we have\begin{equation}
            \label{3.14} 
\begin{aligned}&\big<\bm{\widehat{\Delta}}_c\bm{\widehat{\Delta}}_c^\top,\bm{\widehat{\Sigma}_{xx}}\big>=\sum_{j=1}^{d_2}(\bm{\widehat{\Delta}}_c)_{:,j}^\top\bm{\widehat{\Sigma}_{xx}}(\bm{\widehat{\Delta}}_c)_{:,j} \\&\geq \frac{\kappa_0}{2}\sum_{j=1}^{d_2}\|(\bm{\widehat{\Delta}}_c)_{:,j}\|^2_2 = \frac{\kappa_0}{2}\|\bm{\widehat{\Delta}}_c\|_F^2.\end{aligned}
    \end{equation}

    \noindent\textbf{\it Step 2.  Bound the right-hand side from above.}

   Note that \begin{equation}
       \label{addone}\big<\bm{\widehat{\Delta}}_c,\bm{\widehat{\Sigma}_{xy}}-\bm{\widehat{\Sigma}_{xx}\Theta}_0\big>\leq \|\bm{\widehat{\Delta}}_c\|_{nu}\|\bm{\widehat{\Sigma}_{xy}}-\bm{\widehat{\Sigma}_{xx}\Theta}_0\|_{op}
   \end{equation}

    To bound $\|\bm{\widehat{\Delta}}_c\|_{nu}$, we let $\bm{\Theta}_0=\bm{U}_1\bm{\Sigma}\bm{V}_1^\top$ be the (compact) singular value decomposition, where  $\bm{U}_1\in \mathbb{R}^{d_1\times r}$, $\bm{V}_1\in \mathbb{R}^{d_2\times r}$. Also, let $\bm{U}_2\in \mathbb{R}^{d_1\times (d_1-r)} $ (resp. $\bm{V}_2\in \mathbb{R}^{d_2\times(d_2-r)}$) be the orthogonal complement of $\bm{U}_1$ (resp. $\bm{V}_1$). Following   \cite{negahban2012unified}, we define a pair of subspaces as \begin{equation}
        \nonumber
        \begin{aligned}
        &\mathcal{M}=\{\bm{U}_1\bm{A}\bm{V}_1^\top:\forall \bm{A}\in\mathbb{R}^{r\times r}\},\\
        &\overline{\mathcal{M}}= \{\bm{U}_1\bm{A}+\bm{B}\bm{V}_1^\top:\forall \bm{A}\in \mathbb{R}^{r\times d_2},\bm{B}\in \mathbb{R}^{d_1\times r}\}.
        \end{aligned}
    \end{equation}
      For subspace $\mathcal{V}$, we let $\mathcal{V}^\bot$ be its orthogonal complement, and $\mathcal{P}_{\mathcal{V}}(.)$ be the projection   onto $\mathcal{V}$. Then it is not hard to see the decomposibility \cite{negahban2012unified}: $$\|\bm{A}+\bm{B}\|_{nu}=\|\bm{A}\|_{nu}+\|\bm{B}\|_{nu}$$ if $\bm{A}\in \mathcal{M}$, $\bm{B}\in \overline{\mathcal{M}}^\bot$. Now   we can deduce that \begin{equation}
        \begin{aligned}\label{3.15}
            & \|\bm{\Theta}_0+\bm{\widehat{\Delta}}_c\|_{nu} =\|\bm{\Theta}_0+\mathcal{P}_{\overline{\mathcal{M}}}\bm{\widehat{\Delta}}_c+\mathcal{P}_{\overline{\mathcal{M}}^\bot}\bm{\widehat{\Delta}}_c\|_{nu}\\
            & {\geq} \|\bm{\Theta}_0+\mathcal{P}_{\overline{\mathcal{M}}^\bot}\bm{\widehat{\Delta}}_c\|_{nu}-\|\mathcal{P}_{\overline{\mathcal{M}}}\bm{\widehat{\Delta}}_c\|_{nu} \\&{\geq}   \|\bm{\Theta}_0\|_{nu}+\|\mathcal{P}_{\overline{\mathcal{M}}^\bot}\bm{\widehat{\Delta}}_c\|_{nu}-\|\mathcal{P}_{\overline{\mathcal{M}}}\bm{\widehat{\Delta}}_c\|_{nu}.
        \end{aligned}
    \end{equation}
    Combined with the constraint $\|\bm{\widehat{\Theta}}_0 + \bm{\widehat{\Delta}}_c\|_{nu}=\|\bm{\widehat{\Theta}}_c\|_{nu}\leq \|\bm{\Theta}_0\|_{nu}$, we obtain  $\|\mathcal{P}_{\overline{\mathcal{M}}^\bot}\bm{\widehat{\Delta}}_c\|_{nu}\leq \|\mathcal{P}_{\overline{\mathcal{M}}}\bm{\widehat{\Delta}}_c\|_{nu}$. Thus, \begin{equation}
        \begin{aligned}\nonumber
        \|\bm{\widehat{\Delta}}_c\|_{nu}&\leq \|\mathcal{P}_{\overline{\mathcal{M}}^\bot}\bm{\widehat{\Delta}}_c\|_{nu}+\|\mathcal{P}_{\overline{\mathcal{M}}}\bm{\widehat{\Delta}}_c\|_{nu}\\&\leq 2\|\mathcal{P}_{\overline{\mathcal{M}}}\bm{\widehat{\Delta}}_c\|_{nu}\leq 2\sqrt{2r}\|\bm{\widehat{\Delta}}_c\|_F.
        \end{aligned}
    \end{equation} The last inequality is because $\rank(\bm{A})\leq 2r$ if $\bm{A}\in \overline{\mathcal{M}}$, and it always holds that $\|\bm{A}\|_{nu}\leq \rank(\bm{A})\|\bm{A}\|_F$.

It remains to bound $\|\bm{\widehat{\Sigma}_{xy}}-\bm{\widehat{\Sigma}_{xx}\Theta}_0\|_{op}$. We first plug in $\bm{\widehat{\Sigma}_{xx}}$, $\bm{\widehat{\Sigma}_{xy}}$, and further $$\bm{\dot{x}}_k= \bm{x}_k+\bm{\xi}_{k1},~\bm{\dot{y}}_k=\bm{y}_k+ \bm{\xi}_{k2}= \bm{\Theta}_0^\top\bm{x}_k+\bm{\epsilon}_k,$$ some algebra yields 
\begin{equation}
  \begin{aligned}\label{3.16}
      & \bm{\widehat{\Sigma}_{xy}}-\bm{\widehat{\Sigma}_{xx}\Theta}_0 =  {\frac{1}{n}\sum_{k=1}^n (\bm{x}_k+\bm{\xi}_{k1})(\bm{\epsilon}_k+\bm{\xi}_{k2})^\top} -\\&  {\frac{1}{n}\sum_{k=1}^n\Big(\bm{x}_k\bm{\xi}_{k1}^\top+(\bm{\xi}_{k1}\bm{\xi}_{k1}^\top-\frac{\delta_1^2}{4}\bm{I}_{d_1})\Big)\bm{\Theta}_0}:=\mathscr{T}_1-{\mathscr{T}_2}.
  \end{aligned}
\end{equation}
Thus, $\|\bm{\widehat{\Sigma}_{xy}}-\bm{\widehat{\Sigma}_{xx}\Theta}_0\|_{op}\leq \|\mathscr{T}_1\|_{op}+\|\mathscr{T}_2\|_{op}$.

\noindent{(a)} We consider   the case of partial quantization  ($\delta_1=0$). In this case $\bm{\xi}_{k1}=0$, so $\mathscr{T}_2=0$ and we only need to bound $\|\mathscr{T}_1\|_{op}$ with $\mathscr{T}_1=\frac{1}{n}\sum_k \bm{x}_k(\bm{\epsilon}_k+\bm{\xi}_{k2})^\top$. Note that $$\|\bm{\epsilon}_k+\bm{\xi}_{k2}\|_{\psi_2}\leq \|\bm{\epsilon}_k\|_{\psi_2}+\|\bm{\xi}_{k2}\|_{\psi_2}\lesssim E+\delta_2,$$ $\|\bm{x}_k\|_{\psi_2}\leq K$, $\mathbbm{E}(\bm{x}_k\bm{\epsilon}_k^\top)=\mathbbm{E}(\bm{x}_k\bm{\xi}_{k2}^\top)=0$, Lemma \ref{lem3} guarantees the following to hold with probability at least $1-2\exp(-c_1(d_1+d_2))$\begin{equation}\label{add1}
    \|\mathscr{T}_1\|_{op}\lesssim K(E+\delta_2)\sqrt{\frac{d_1+d_2}{n}}.
\end{equation}  
Overall, we have \begin{equation}
    \begin{aligned}\label{3.17}\big<\bm{\widehat{\Delta}}_c,\bm{\widehat{\Sigma}_{xy}}&-\bm{\widehat{\Sigma}_{xx}}\bm{\Theta}_0\big>\\&\lesssim K(E+\delta_2)  \|\bm{\widehat{\Delta}}_c\|_F \sqrt{\frac{r(d_1+d_2)}{n}}.\end{aligned}
\end{equation} 
The result of part (a) follows by putting (\ref{3.14}) and (\ref{3.17}) into  (\ref{3.13}).

\noindent{(b)} We then consider the complete quantization case   ($\delta_1>0$). Similarly to (a), we have the bound $$\|\mathscr{T}_1\|_{op}\lesssim (K+\delta_1)(E+\delta_2)\sqrt{\frac{d_1+d_2}{n}}.$$ So it remains to bound $\|\mathscr{T}_2\|_{op}$. Since $\|\bm{\Theta}_0\|_{op}\leq R$, and by Lemma \ref{lem3}, with the promised probability we have\begin{equation}\label{3.19}
   \begin{aligned} &\|\mathscr{T}_2\|_{op}\leq R \Big\| \frac{1}{n}\sum_{k=1}^n \big(\bm{\xi}_{k1}\bm{\xi}_{k1}^\top - \frac{\delta_1^2}{4}\bm{I}_{d_1}\big)\Big\|_{op}\\&+R  \Big\|\frac{1}{n}\sum_{k=1}^n \bm{x}_k\bm{\xi}_{k1}^\top\Big\|_{op}\lesssim R\delta_1(K+\delta_1)\sqrt{\frac{d_1+d_2}{n}}.
\end{aligned}\end{equation}
By putting pieces similarly, we conclude the proof.
\end{proof}
Several remarks are in order.
\begin{remark}
    {\rm(Prediction error)} As presented in Theorem \ref{theorem1}, we will focus on the estimation of $\bm{\Theta}_0$ in this work, whereas in regression one may also be interested in the prediction performance. From $\|\bm{\widehat{\Delta}}_c\|_F$, the bound for prediction error is indeed immediate. For instance, when $\delta_1=0$, because with high probability $\|\bm{\widehat{\Sigma}_{xx}}-\bm{\Sigma_{xx}}\|_{op}\leq \frac{\kappa_0}{2}$ and $\lambda_{\max}(\bm{\widehat{\Sigma}_{xx}})\leq \kappa_1$, one has $$\frac{1}{n}\sum_{k=1}^n \|\bm{\widehat{\Delta}}_c^\top\bm{x}_k\|^2_2=\big<\bm{\widehat{\Sigma}_{xx}},\bm{\widehat{\Delta}}_c\bm{\widehat{\Delta}}_c^\top\big>=O\big(\kappa_1\|\bm{\widehat{\Delta}}_c\|^2\big).$$
\end{remark}

\begin{remark}\label{ls}
{\rm(Compared to the least squares estimation)} The ordinary least squares (OLS) estimator $\bm{\widehat{\Theta}}_{LS}$ is to minimize $\dot{\mathcal{L}}(\bm{\Theta})$ over   $\bm{\Theta}\in \mathbb{R}^{d_1\times d_2}$ without the nuclear norm constraint. This amounts to estimating $d_2$ columns of $\bm{\Theta}_0$ separately   without utilizing their correlations. Under similar assumptions on covariate and noise, one can easily show $\|\bm{\widehat{\Theta}}_{LS}-\bm{\Theta}_0\|_F$ scales as $O\big(\sqrt{\frac{d_1d_2}{n}}\big)$, which is essentially inferior to $O\big(\sqrt{\frac{r(d_1+d_2)}{n}}\big)$ in the case of $r\ll \min\{d_1,d_2\}$. This illustrates the benefit of incorporating the low-rank priori on $\bm{\Theta}_0$, which will be complemented by numerical example later (Figure \ref{fig5}). 
\end{remark} 
 
\begin{remark}
    \label{rem2}
    {\rm(Minimax optimality and the role of quantization)}
   The non-asymptotic error bound $O\big(\sqrt{\frac{r(d_1+d_2)}{n}}\big)$  is minimax optimal compared to the information-theoretic lower bound in \cite[Theorem 5]{rohde2011estimation} (also see \cite[Remark 11]{fan2021shrinkage}, \cite[Fact 1]{giraud2011low}, \cite[Page 12]{bunea2011optimal} for alternative statements). In fact, the quantization \textcolor{black}{does not affect the order of $(n,r,d_1,d_2)$ in the sample complexity and error bounds} but only slightly worsens the multiplicative factors, i.e., $A_1$ in  $n\gtrsim \big(\frac{A_1}{\kappa_0}\big)^2(d_1+d_2)$, $A_2$ in (\ref{3.11}) and $A_3$ in (\ref{3.12}). \textcolor{black}{Thus, in a regime where the quantization levels $\delta_1,\delta_2$ are fixed, our result   matches   the one in a   case  without quantization up to multiplicative constant.} 
   In addition,   $\delta_1$ and $E$ are on equal footing in $A_2$, hence the role of partial quantization can be nicely  interpreted as additional sub-Gaussian noise. This extends similar findings in \cite{thrampoulidis2020generalized,sun2022quantized,xu2020quantized} from compressed sensing to LRMR. \textcolor{black}{Further, a useful perspective is that the result for the setting without quantization can be recovered by letting $\delta_1,\delta_2=0$. For instance, when $\delta_2=0$ the bound in Theorem \ref{theorem1} reads as $O\big(\frac{KE}{\kappa_0}\sqrt{\frac{r(d_1+d_2)}{n}}\big)$, thus agreeing with the bound in \cite[Coro. 3]{negahban2011estimation}. The above discussions regarding the role of quantization remain valid for our subsequent results.} 
\end{remark}

 \subsection{Regularized Lasso}
 Since prior estimate on $\|\bm{\Theta}_0\|_{nu}$ is often unavailable, a more practically appealing recovery procedure is the following regularized Lasso given by \begin{equation}\label{lrmrregu}
     \bm{\widehat{\Theta}}_p = \mathop{\arg\min}\limits_{\bm{\Theta}\in\mathbb{R}^{d_1\times d_2}} ~  \dot{\mathcal{L}}(\bm{\Theta})+\lambda\|\bm{\Theta}\|_{nu},
 \end{equation} 
and we let $\bm{\widehat{\Delta}}_p$ be the estimation error. By properly tuning $\lambda$, the Regularized Lasso estimator $\bm{\widehat{\Theta}}_p$ achieves the same error rate as the previous $\bm{\widehat{\Theta}}_c$. 

\begin{theorem}
    \label{thm2}{\rm   (Regularized Lasso){\bf \sffamily.}} 
    We consider LRMR under Assumption \ref{assumpt1} and the    quantization procedure described above. We assume the scaling $n\gtrsim \big(\frac{A_1}{\kappa_0}\big)^2(d_1+d_2)$, where $A_1$ is the multiplicative factor in (\ref{3.10}). Then for the estimator $\bm{\widehat{\Theta}}_p$ in (\ref{lrmrregu}) we have the following guarantees.  

    \noindent
    {\rm (a) (Partial Quantization){\bf \sffamily.}} If $\delta_1=0$, we let $A_4:={K(E+\delta_2)} $. Set $\lambda=C_1A_4\sqrt{\frac{d_1+d_2}{n}}$ with sufficiently large $C_1$, then with probability at least $1-c_3\exp(c_4(d_1+d_2))$ it holds that 
\begin{equation}
    \label{3.20}
   \|\bm{\widehat{\Delta}}_p\|_F\leq C_2\kappa_0^{-1}A_4\sqrt{\frac{r(d_1+d_2)}{n}} .
\end{equation}

     \noindent{\rm (b) (Complete Quantization){\bf \sffamily.}}  If $\delta_2>0$, $\|\bm{\Theta}_0\|_{op}\leq R$, we let $A_5:= {(K+\delta_1)(E+\delta_2+R\delta_1) } $. Set $\lambda = C_5A_5\sqrt{\frac{ d_1+d_2 }{n}}$ with sufficiently large $C_5$, then with probability at least $1-c_7\exp(-c_8d_1)$ it holds that \begin{equation}\nonumber
         \|\bm{\widehat{\Delta}}_p\|_F\leq C_6\kappa_0^{-1}A_5\sqrt{\frac{r(d_1+d_2)}{n}}.
     \end{equation} 
\end{theorem}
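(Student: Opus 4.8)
The plan is to follow the standard recipe for analysing regularized $M$-estimators with a decomposable regularizer, as in \cite{negahban2011estimation,negahban2012unified}, adapting the deterministic pieces already established for the constrained Lasso in Theorem \ref{theorem1}. First I would observe that, as in \eqref{3.16}, the key random quantity is the ``score'' $\bm{G}:=\bm{\widehat{\Sigma}_{xy}}-\bm{\widehat{\Sigma}_{xx}}\bm{\Theta}_0=\mathscr{T}_1-\mathscr{T}_2$, and the arguments already carried out in the proof of Theorem \ref{theorem1} give, on an event of the stated probability, $\|\bm{G}\|_{op}\lesssim A_4\sqrt{(d_1+d_2)/n}$ in the partial-quantization case and $\|\bm{G}\|_{op}\lesssim A_5\sqrt{(d_1+d_2)/n}$ in the complete-quantization case. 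Choosing $\lambda$ to be a sufficiently large constant multiple of these bounds guarantees $\lambda\geq 2\|\bm{G}\|_{op}$, which is precisely the regularizer-dominates-the-score condition needed to localize the error.

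Next I would run the cone argument. From optimality of $\bm{\widehat{\Theta}}_p$ one has $\dot{\mathcal L}(\bm{\widehat{\Theta}}_p)+\lambda\|\bm{\widehat{\Theta}}_p\|_{nu}\leq \dot{\mathcal L}(\bm{\Theta}_0)+\lambda\|\bm{\Theta}_0\|_{nu}$; expanding $\dot{\mathcal L}$ as the quadratic form in \eqref{empi} and using $\bm{\widehat{\Theta}}_p=\bm{\Theta}_0+\bm{\widehat{\Delta}}_p$ yields, analogously to \eqref{3.13}, the inequality $\langle\bm{\widehat{\Delta}}_p\bm{\widehat{\Delta}}_p^\top,\bm{\widehat{\Sigma}_{xx}}\rangle \leq 2\langle\bm{\widehat{\Delta}}_p,\bm{G}\rangle+\lambda(\|\bm{\Theta}_0\|_{nu}-\|\bm{\Theta}_0+\bm{\widehat{\Delta}}_p\|_{nu})$. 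Dropping the nonnegative left-hand side momentarily and using the decomposability inequality \eqref{3.15} together with $|\langle\bm{\widehat{\Delta}}_p,\bm{G}\rangle|\leq\|\bm{G}\|_{op}\|\bm{\widehat{\Delta}}_p\|_{nu}\leq\frac{\lambda}{2}\|\bm{\widehat{\Delta}}_p\|_{nu}$ gives $\|\mathcal{P}_{\overline{\mathcal M}^\bot}\bm{\widehat{\Delta}}_p\|_{nu}\leq 3\|\mathcal{P}_{\overline{\mathcal M}}\bm{\widehat{\Delta}}_p\|_{nu}$, hence $\|\bm{\widehat{\Delta}}_p\|_{nu}\leq 4\|\mathcal{P}_{\overline{\mathcal M}}\bm{\widehat{\Delta}}_p\|_{nu}\leq 4\sqrt{2r}\,\|\bm{\widehat{\Delta}}_p\|_F$, exactly the same type of cone bound as in Theorem \ref{theorem1} but with constant $4\sqrt{2r}$ instead of $2\sqrt{2r}$.

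Then I would combine the two sides. By the sample-complexity assumption $n\gtrsim(A_1/\kappa_0)^2(d_1+d_2)$ and Lemma \ref{lemma4}, on a high-probability event $\lambda_{\min}(\bm{\widehat{\Sigma}_{xx}})\geq\kappa_0/2$, so as in \eqref{3.14} the left-hand side of the basic inequality is at least $\frac{\kappa_0}{2}\|\bm{\widehat{\Delta}}_p\|_F^2$; this plays the role of restricted strong convexity. On the right-hand side, $2\langle\bm{\widehat{\Delta}}_p,\bm{G}\rangle+\lambda(\|\bm{\Theta}_0\|_{nu}-\|\bm{\Theta}_0+\bm{\widehat{\Delta}}_p\|_{nu})\leq 2\|\bm{G}\|_{op}\|\bm{\widehat{\Delta}}_p\|_{nu}+\lambda\|\mathcal{P}_{\overline{\mathcal M}}\bm{\widehat{\Delta}}_p\|_{nu}\lesssim \lambda\sqrt{r}\,\|\bm{\widehat{\Delta}}_p\|_F$. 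Hence $\frac{\kappa_0}{2}\|\bm{\widehat{\Delta}}_p\|_F^2\lesssim\lambda\sqrt r\,\|\bm{\widehat{\Delta}}_p\|_F$, which upon dividing gives $\|\bm{\widehat{\Delta}}_p\|_F\lesssim\kappa_0^{-1}\lambda\sqrt r\asymp\kappa_0^{-1}A_4\sqrt{r(d_1+d_2)/n}$ in case (a), and the analogous bound with $A_5$ in case (b), as claimed.

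The routine parts are genuinely routine here, since the probabilistic heavy lifting — controlling $\|\mathscr{T}_1\|_{op}$ and $\|\mathscr{T}_2\|_{op}$ via Lemma \ref{lem3} and the restricted eigenvalue via Lemma \ref{lemma4} — was already done inside the proof of Theorem \ref{theorem1} and can simply be reused. The only place that needs care is bookkeeping of the intersecting events and constants: one must intersect the event $\{\lambda\geq 2\|\bm{G}\|_{op}\}$ with the event $\{\lambda_{\min}(\bm{\widehat{\Sigma}_{xx}})\geq\kappa_0/2\}$, verify that ``sufficiently large $C_1$'' (resp.\ $C_5$) indeed forces $\lambda\geq 2\|\bm{G}\|_{op}$ with the stated probability, and track that the failure probabilities add up to the $1-c_3\exp(-c_4(d_1+d_2))$ (resp.\ $1-c_7\exp(-c_8 d_1)$) form. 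I expect the main (mild) obstacle to be purely notational: keeping the complete-quantization case honest, since there the score contains the extra term $\mathscr{T}_2$ whose bound \eqref{3.19} requires the additional hypothesis $\|\bm{\Theta}_0\|_{op}\leq R$ and contributes the $R\delta_1$ term inside $A_5$, and one must make sure the event in Lemma \ref{lem3} applied to $\frac1n\sum_k(\bm{\xi}_{k1}\bm{\xi}_{k1}^\top-\frac{\delta_1^2}{4}\bm I_{d_1})$ is folded into the union bound with the right dimension count.
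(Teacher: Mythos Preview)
Your proposal is correct and follows essentially the same route as the paper's proof: optimality gives the basic inequality, the score condition plus decomposability \eqref{3.15} localizes $\bm{\widehat{\Delta}}_p$ to a low-rank cone, and the restricted eigenvalue from Lemma~\ref{lemma4} closes the argument. One trivial bookkeeping slip: since the basic inequality carries a factor $2$ in front of $\langle\bm{\widehat{\Delta}}_p,\bm{G}\rangle$, the condition $\lambda\geq 2\|\bm{G}\|_{op}$ only reproduces the triangle-inequality tautology and does not deliver the $3$-cone you state; the paper takes $\lambda\geq 4\|\bm{G}\|_{op}$ (equivalently $\lambda\geq 2\|\nabla\dot{\mathcal L}(\bm{\Theta}_0)\|_{op}$), which ``sufficiently large $C_1$'' of course accommodates.
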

 By using  some standard  analyses for regularizer M-estimator (e.g., see \cite{negahban2011estimation}), the proof of Theorem \ref{thm2} follows similar lines of Theorem \ref{theorem1}. We defer the   proof to Appendix.

It is clear that we need an additional constraint on $\|\bm{\Theta}_0\|_{op}$ for the cases of complete quantization in Theorems \ref{theorem1}-\ref{thm2}, while this is not needed when we have access to the full-precision covariate. The following remark elaborates on this point. 

\begin{remark}\label{rem3}
{\rm (The norm constraint of $\bm{\Theta}_0$)} When there is error in covariate, a norm constraint on the true parameter $\bm{\Theta}_0$ seems indispensable rather than an artifact from the   proof technique. The main reason  is that the error in covariate   propagates  along  the true parameter, and hence its overall contribution to the response is proportional to $\bm{\Theta}_0$. Note that similar observation was also made in   \cite[Section 3.2]{loh2013regularized} for corrected linear regression where the covariates suffer from zero-mean random noise with known covariance matrix.
\end{remark}

\section{quantized low-rank linear regression model with Matrix Response}\label{sec4}

The proposed quantization scheme  enjoys broader applicability, as we will show in this section that the dithered quantizer can be similarly applied to the problem of low-rank linear regression   (L2RM) with matrix response \cite{kong2019l2rm}. In particular, such regression model finds application in imaging genetics, with matrix responses representing the weighted or binary adjacency matrix of a finite graph that characterizes structural or functional connectivity pattern, while the covariates are a set of genetic markers \cite{thompson2013genetics,medland2014whole,wen2020co}. We would also like to note some recent advances on variable selection \cite{hao2021optimal} and covariance estimation \cite{zhang2022covariance} for matrix-valued data.

Following the notation in \cite{kong2019l2rm}, L2RM with matrix response can be formulated as \begin{equation}\label{matrep}\bm{Y}_k=\sum_{i=1}^s x_{ki}\bm{\Theta}^{(i)}_0 +\bm{E}_k,~k=1,...,n,\end{equation}
where $\bm{x}_k=[x_{k1},...,x_{ks}]^\top$ is the covariate, $\bm{\Theta}_0^{(i)}\in \mathbb{R}^{p\times q}$ are the true coefficient matrices, $\bm{E}_k,\bm{Y}_k\in \mathbb{R}^{p\times q}$ are respectively the noise matrix and response. Our goal is to estimate $\bm{\Theta}_0=[\bm{\Theta}_0^{(1)},...,\bm{\Theta}_0^{(s)}]\in \mathbb{R}^{p\times (sq)}$ under moderately large $s$      but $p,q$ that can be extremely huge.\footnote{In fact,  $s$ in real applications can also be very large. For dimension reduction, \cite{kong2019l2rm} assumed   $\bm{\Theta}_0^{(i)}=0$ for most $i$'s and developed a screening method to estimate those $i$'s with non-zero $\bm{\Theta}_0^{(i)}$. We focus on the estimation after this screening step.} \textcolor{black}{Analogously to Assumption \ref{assumpt1}, for analysing  the nuclear norm regularized M-estimator (see (\ref{regulasso}) below), we make the following sub-Gaussian data assumptions that relax  the Gaussian ones in \cite[(A9)-(A11)]{kong2019l2rm}.}
\begin{assumption}
    \label{assumption2}
    The assumptions on covariates $\bm{x}_k$'s are the same as Assumption \ref{assumpt1}; Independent of $\bm{x}_k$'s, the noise matrices $\bm{E}_1,...,\bm{E}_n$ are i.i.d., zero-mean and sub-Gaussian with $\|\bm{E}_k\|_{\psi_2}:=\sup_{\bm{u}\in \mathbb{S}^{p-1}}\sup_{\bm{v}\in\mathbb{S}^{q-1}}\|\bm{u}^\top \bm{E}_k\bm{v}\|_{\psi_2}\leq E$; the matrix responses $\bm{Y}_k$'s are generated from (\ref{matrep}) for some $\bm{\Theta}_0$ satisfying $\sum_{i=1}^s\rank(\bm{\Theta}_0^{(i)})\leq r$. 
\end{assumption}
Similarly, the dithered quantization for $(\bm{x}_k,\bm{Y}_k)$ is as follows: $\bm{\dot{x}}_k=\mathcal{Q}_{\delta_1}(\bm{x}_k+\bm{\phi}_k)$ for triangular random dither $\bm{\phi}_k$; $\bm{\dot{Y}}_k=\mathcal{Q}_{\delta_2}(\bm{Y}_k+\bm{\tau}_k)$ with uniform random dither $\bm{\tau}_k\sim \mathscr{U}\big([-\frac{\delta_2}{2},\frac{\delta_2}{2}]^{p\times q}\big)$. To be concise we only consider the more practical regularized Lasso. Based on the full data $(\bm{x}_k,\bm{Y}_k)$, \cite{kong2019l2rm} proposed the unconstrained convex program that minimizes $$ \underbrace{\frac{1}{n }\sum_{k=1}^n\Big\|\bm{Y}_k-\sum_{i=1}^sx_{ki}\bm{\Theta}^{(i)}\Big\|_F^2}_{{\mathcal{L}}_1(\bm{\Theta})} + \lambda \cdot \sum_{i=1}^s\big\|\bm{\Theta}^{(i)}\big\|_{nu}$$  
over $\bm{\Theta}=[\bm{\Theta}^{(1)},...,\bm{\Theta}^{(s)}]$, where $\sum_i\|\bm{\Theta}^{(i)}\|_{nu}$ is the regularizer that incorporates the low-rankness structures of $\bm{\Theta}^{(i)}$'s. However, in our quantized regime one only observes $(\bm{\dot{x}}_k,\bm{\dot{Y}}_k)$ ($\bm{\dot{x}}_k=\bm{x}_k$ in partial quantization with $\delta_1=0$), modification of $\mathcal{L}_1(\bm{\Theta})$ is needed. By vectorization we first reformulate (\ref{matrep}) as $\mathrm{vec}(\bm{Y}_k)= \sum_{i=1}^sx_{ki}\cdot \mathrm{vec}(\bm{\Theta}_0^{(i)})+\mathrm{vec}(\bm{E}_k)$. Here, for $\bm{\Theta}=[\bm{\Theta}^{(1)},...,\bm{\Theta}^{(s)}]$ we define the rearrangement $\bm{\widetilde{\Theta}}$ as \begin{equation}
    \bm{\widetilde{\Theta}}= \begin{bmatrix}\mathrm{vec}(\bm{\Theta} ^{(1)})^\top \\\vdots \\ \mathrm{vec}(\bm{\Theta} ^{(s)})^\top\end{bmatrix}\in \mathbb{R}^{s\times pq},\label{rearr}
\end{equation}
then we have \begin{equation}
\label{vectorization}
    \mathrm{vec}(\bm{Y}_k)=\bm{\widetilde{\Theta}}_0^\top \bm{x}_k+ \mathrm{vec}(\bm{E}_k)
\end{equation} that agrees with (\ref{3.1}). Now we can employ the prior developments --- similar to (\ref{empi}) we let $\bm{\widehat{\Sigma}_{xx}}:=\frac{1}{n}\sum_{k=1}^n\bm{\dot{x}}_k\bm{\dot{x}}_k^\top-\frac{\delta_1^2}{4}\bm{I}_s$, $\bm{\widehat{\Sigma}_{xy}}= \frac{1}{n}\sum_{k=1}^n\bm{\dot{x}}_k\mathrm{vec}(\bm{\dot{Y}}_k)^\top$, and  then 
change $\mathcal{L}_1(\bm{\Theta})$ to  \begin{equation}
\begin{aligned}\nonumber
\dot{\mathcal{L}}_1(\bm{\Theta}) =  \big<\bm{\widetilde{\Theta}}\bm{\widetilde{\Theta}}^\top,\bm{\widehat{\Sigma}}_{\bm{xx}}\big>-2\big<\bm{\widetilde{\Theta}},\bm{\widehat{\Sigma}_{xy}}\big>,
\end{aligned}
\end{equation}
which can be constructed from the quantized data. Combining these pieces, we are in a position to define the Lasso estimator:
\begin{equation}\label{regulasso}\begin{aligned}
     &\bm{\widehat{\Theta}} = \mathop{\arg\min}\limits_{\bm{\Theta}} ~  \dot{\mathcal{L}}_1(\bm{\Theta})+\lambda\cdot \sum_{i=1}^s\|\bm{\Theta}^{(i)}\|_{nu}\\&\text{subject to }\bm{\Theta}=[\bm{\Theta}^{(1)},...,\bm{\Theta}^{(s)}]\in \mathbb{R}^{p\times sq}\end{aligned}
 \end{equation} 
 We have the following theoretical guarantee for $\bm{\widehat{\Theta}}$.
 \begin{theorem}
     \label{theorem3}
     {\rm   (Regularized Lasso){\bf \sffamily.}} We consider L2RM with matrix response under Assumption \ref{assumption2} and the quantization procedure described above. We assume
     the scaling $n\gtrsim  \max\{s,p,q\}$ for some sufficiently large hidden constant and $ \log s=O(p+q)$. Then for estimator $\bm{\widehat{\Theta}}$ in (\ref{regulasso}) we have the following guarantees.

     \noindent
    {\rm (a) (Partial Quantization){\bf \sffamily.}} If $\delta_1=0$, we let $A_6:= K(E+\delta_2)$. Set $\lambda=C_1A_6\sqrt{\frac{p+q}{n}}$ with sufficiently large $C_1$, then with probability at least $1-\exp(-s)-c_3\exp(-c_4(p+q))$ it holds that \begin{equation}
        \label{matres_part}
        \|\bm{\widehat{\Theta}}-\bm{\Theta}_0\|_F \leq C_2\kappa_0^{-1}A_6\sqrt{\frac{r(p+q)}{n}}.
    \end{equation}

    \noindent{\rm (b) (Complete Quantization){\bf \sffamily.}} If $\delta_1>0$, we further assume ${\sum_{i=1}^s\|\bm{\Theta}_0^{(i)}\|_{op}^2}\leq R^2$ for some $R>0$, $s=O(p+q)$, and then let $A_7 := (K+\delta_1)(E+\delta_2+R\delta_1)$. Set $\lambda = C_2A_7\sqrt{\frac{p+q}{n}}$ with sufficiently large $C_2$, then with probability at least $1-\exp(-s)-c_1\exp(-c_2(p+q))$ it holds that 
    \begin{equation}
        \label{matres_com}
        \|\bm{\widehat{\Theta}}-\bm{\Theta}_0\|_F \leq C_3\kappa_0^{-1}A_7\sqrt{\frac{r(p+q)}{n}}.
    \end{equation}
 \end{theorem}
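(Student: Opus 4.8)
The plan is to run the regularized M-estimator template from the proof of Theorem~\ref{thm2}, the only structural change being that the regularizer is the block nuclear norm $\mathcal{R}(\bm{\Theta}):=\sum_{i=1}^{s}\|\bm{\Theta}^{(i)}\|_{nu}$, whose dual norm is $\mathcal{R}^{*}(\bm{Z})=\max_{i\in[s]}\|\bm{Z}^{(i)}\|_{op}$. I would work throughout in the rearranged coordinates of $(\ref{rearr})$--$(\ref{vectorization})$, identifying a block matrix with its rearrangement so that $(\ref{vectorization})$ is an instance of $(\ref{3.1})$ with $d_{1}=s$, $d_{2}=pq$, Frobenius norms are preserved, $\bm{\widehat{\Sigma}_{xx}}$ is $s\times s$, and $\dot{\mathcal{L}}_{1}$ is the same quadratic form as $\dot{\mathcal{L}}$ in the rearranged variable. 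Writing $\bm{\widehat{\Delta}}:=\bm{\widehat{\Theta}}-\bm{\Theta}_{0}$, optimality of $\bm{\widehat{\Theta}}$ plus the quadratic form of $\dot{\mathcal{L}}_{1}$ gives, exactly as $(\ref{3.13})$ but with an extra regularizer term, $\langle\bm{\widehat{\Delta}}\bm{\widehat{\Delta}}^{\top},\bm{\widehat{\Sigma}_{xx}}\rangle\leq 2\langle\bm{\widehat{\Delta}},\bm{\widehat{\Sigma}_{xy}}-\bm{\widehat{\Sigma}_{xx}}\bm{\widetilde{\Theta}}_{0}\rangle+\lambda\big(\mathcal{R}(\bm{\Theta}_{0})-\mathcal{R}(\bm{\widehat{\Theta}})\big)$, with deviation $\bm{\widehat{\Sigma}_{xy}}-\bm{\widehat{\Sigma}_{xx}}\bm{\widetilde{\Theta}}_{0}=\mathscr{T}_{1}-\mathscr{T}_{2}$ of the same form as $(\ref{3.16})$ (now $\bm{\xi}_{k2}=\mathrm{vec}(\bm{\dot{Y}}_{k})-\mathrm{vec}(\bm{Y}_{k})$, $\bm{I}_{s}$ in place of $\bm{I}_{d_{1}}$). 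The remaining work is to lower bound the left side, upper bound $\mathcal{R}^{*}(\mathscr{T}_{1}-\mathscr{T}_{2})=\max_{i}\|(\mathscr{T}_{1}-\mathscr{T}_{2})^{(i)}\|_{op}$, and run the cone argument.

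\textbf{Restricted strong convexity.} Under $n\gtrsim\max\{s,p,q\}$ with a large enough hidden constant, Lemma~\ref{lemma4} with $d_{1}=s$ and $t\asymp s$ gives $\|\bm{\widehat{\Sigma}_{xx}}-\bm{\Sigma_{xx}}\|_{op}\leq\kappa_{0}/2$ with probability $1-2\exp(-cs)$, hence $\lambda_{\min}(\bm{\widehat{\Sigma}_{xx}})\geq\kappa_{0}/2$ and $\langle\bm{\widehat{\Delta}}\bm{\widehat{\Delta}}^{\top},\bm{\widehat{\Sigma}_{xx}}\rangle\geq\frac{\kappa_{0}}{2}\|\bm{\widehat{\Delta}}\|_{F}^{2}$, exactly as $(\ref{3.14})$.

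\textbf{Bounding the deviation in $\mathcal{R}^{*}$.} This is the one genuinely new ingredient: one cannot afford the operator norm of the whole $s\times pq$ matrix $\mathscr{T}_{1}$ (that would cost $\sqrt{(s+pq)/n}$), so I would bound each block. Its $i$-th block is $(\mathscr{T}_{1})^{(i)}=\frac{1}{n}\sum_{k=1}^{n}\dot{x}_{ki}\big(\bm{E}_{k}+\mathrm{mat}(\bm{\xi}_{k2})\big)$, a sum of independent $p\times q$ matrices; the response quantization noise $\bm{\xi}_{k2}$ has entries in $[-\delta_{2},\delta_{2}]$, conditionally on $\bm{Y}_{k}$ independent and mean zero, and independent of the covariate dither (all as in the proof of Lemma~\ref{lem2}), so $\|\bm{E}_{k}+\mathrm{mat}(\bm{\xi}_{k2})\|_{\psi_{2}}\lesssim E+\delta_{2}$ (matrix sub-Gaussian norm) and $\|\dot{x}_{ki}\|_{\psi_{2}}\leq K+c\delta_{1}$. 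For fixed $\bm{u}\in\mathbb{S}^{p-1}$, $\bm{v}\in\mathbb{S}^{q-1}$, the scalar $\bm{u}^{\top}(\mathscr{T}_{1})^{(i)}\bm{v}$ is then an average of $n$ i.i.d.\ mean-zero sub-exponential variables with $\psi_{1}$-norm $\lesssim(K+\delta_{1})(E+\delta_{2})$, so Bernstein's inequality, $\frac{1}{4}$-nets of $\mathbb{S}^{p-1}$ and $\mathbb{S}^{q-1}$, and a union bound over $i\in[s]$ (harmless because $\log s=O(p+q)$) give $\max_{i}\|(\mathscr{T}_{1})^{(i)}\|_{op}\lesssim(K+\delta_{1})(E+\delta_{2})\sqrt{(p+q)/n}$ with probability $1-c_{3}\exp(-c_{4}(p+q))$. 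When $\delta_{1}>0$ one also needs $\mathscr{T}_{2}$: writing $\mathscr{T}_{2}=\bm{M}\bm{\widetilde{\Theta}}_{0}$ with $\bm{M}:=\frac{1}{n}\sum_{k}\big(\bm{x}_{k}\bm{\xi}_{k1}^{\top}+\bm{\xi}_{k1}\bm{\xi}_{k1}^{\top}-\frac{\delta_{1}^{2}}{4}\bm{I}_{s}\big)\in\mathbb{R}^{s\times s}$, one has $(\mathscr{T}_{2})^{(i)}=\sum_{j=1}^{s}M_{ij}\bm{\Theta}_{0}^{(j)}$, hence by Cauchy--Schwarz and $\sum_{i}\|\bm{\Theta}_{0}^{(i)}\|_{op}^{2}\leq R^{2}$, $\|(\mathscr{T}_{2})^{(i)}\|_{op}\leq R\,\|\bm{M}\|_{op}$; Lemma~\ref{lem3} (as in $(\ref{3.19})$) gives $\|\bm{M}\|_{op}\lesssim\delta_{1}(K+\delta_{1})\sqrt{s/n}\lesssim\delta_{1}(K+\delta_{1})\sqrt{(p+q)/n}$ using $s=O(p+q)$, with probability $1-2\exp(-cs)$. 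Altogether $\mathcal{R}^{*}(\mathscr{T}_{1}-\mathscr{T}_{2})\lesssim(K+\delta_{1})(E+\delta_{2}+R\delta_{1})\sqrt{(p+q)/n}\asymp\lambda$ (and $\asymp A_{6}\sqrt{(p+q)/n}$ when $\delta_{1}=0$, $\mathscr{T}_{2}=0$).

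\textbf{Cone argument and conclusion.} The block nuclear norm is decomposable w.r.t.\ $\mathcal{M}=\prod_{i=1}^{s}\mathcal{M}^{(i)}$ and $\overline{\mathcal{M}}=\prod_{i=1}^{s}\overline{\mathcal{M}^{(i)}}$, where each $(\mathcal{M}^{(i)},\overline{\mathcal{M}^{(i)}})$ is the SVD-based subspace pair of $\bm{\Theta}_{0}^{(i)}$ built as in the proof of Theorem~\ref{theorem1}; since $\rank(\bm{A}^{(i)})\leq 2\,\rank(\bm{\Theta}_{0}^{(i)})$ for $\bm{A}\in\overline{\mathcal{M}}$, Cauchy--Schwarz over the $s$ blocks and $\sum_{i}\rank(\bm{\Theta}_{0}^{(i)})\leq r$ give the compatibility bound $\mathcal{R}(\bm{A})\leq\sqrt{2r}\,\|\bm{A}\|_{F}$ on $\overline{\mathcal{M}}$. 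Taking $\lambda\geq 2\mathcal{R}^{*}(\mathscr{T}_{1}-\mathscr{T}_{2})$ (which the stated $\lambda$ achieves) and $\bm{\Theta}_{0}\in\mathcal{M}$, the usual manipulation of the basic inequality places $\bm{\widehat{\Delta}}$ in the cone $\mathcal{R}(\mathcal{P}_{\overline{\mathcal{M}}^{\bot}}\bm{\widehat{\Delta}})\leq 3\mathcal{R}(\mathcal{P}_{\overline{\mathcal{M}}}\bm{\widehat{\Delta}})$, so $\mathcal{R}(\bm{\widehat{\Delta}})\leq 4\sqrt{2r}\,\|\bm{\widehat{\Delta}}\|_{F}$; feeding the RSC lower bound, $2|\langle\bm{\widehat{\Delta}},\mathscr{T}_{1}-\mathscr{T}_{2}\rangle|\leq\lambda\mathcal{R}(\bm{\widehat{\Delta}})$, and $\mathcal{R}(\bm{\Theta}_{0})-\mathcal{R}(\bm{\widehat{\Theta}})\leq\mathcal{R}(\mathcal{P}_{\overline{\mathcal{M}}}\bm{\widehat{\Delta}})-\mathcal{R}(\mathcal{P}_{\overline{\mathcal{M}}^{\bot}}\bm{\widehat{\Delta}})$ into the basic inequality yields $\frac{\kappa_{0}}{2}\|\bm{\widehat{\Delta}}\|_{F}^{2}\lesssim\lambda\sqrt{r}\,\|\bm{\widehat{\Delta}}\|_{F}$, i.e.\ $\|\bm{\widehat{\Delta}}\|_{F}\lesssim\kappa_{0}^{-1}\lambda\sqrt{r}\asymp\kappa_{0}^{-1}A_{7}\sqrt{r(p+q)/n}$ (resp.\ $\kappa_{0}^{-1}A_{6}\sqrt{r(p+q)/n}$ when $\delta_{1}=0$), and collecting the failure probabilities gives the stated $1-\exp(-s)-c\exp(-c'(p+q))$. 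I expect the main obstacle to be the block-wise operator-norm bound on $\mathscr{T}_{1}$: one must carefully track the conditional mean-zero and boundedness of the response quantization noise and its independence from the covariate dither, and calibrate the deviation level so that the union bound over the $s$ blocks costs nothing --- which is exactly what $\log s=O(p+q)$ buys; the $\exp(-s)$ term arises from the $s\times s$ concentration events for $\bm{\widehat{\Sigma}_{xx}}$ and $\bm{M}$.
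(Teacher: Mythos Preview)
Your proposal is correct and follows essentially the same approach as the paper's proof: the same basic inequality in rearranged coordinates, restricted strong convexity via Lemma~\ref{lemma4} on the $s\times s$ covariance, the block-wise operator-norm bound $\max_{i}\|(\mathscr{T}_{1})^{(i)}\|_{op}$ via a Bernstein-plus-covering argument (this is exactly the paper's Lemma~\ref{lemma5}, with the union bound over $i\in[s]$ absorbed by $\log s=O(p+q)$), the Cauchy--Schwarz step $\|(\mathscr{T}_{2})^{(i)}\|_{op}\leq R\|\bm{M}\|_{op}$ together with $s=O(p+q)$, and the block-decomposability/cone argument yielding $\sum_{i}\|\bm{\Delta}^{(i)}\|_{nu}\leq 4\sqrt{2r}\|\bm{\Delta}\|_{F}$. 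The only cosmetic differences are that the paper packages the block covering argument as a separate Lemma~\ref{lemma5} and invokes it again (rather than Lemma~\ref{lem3}) for $\|\bm{M}\|_{op}$, and it writes the inner-product bounds $|\langle\mathscr{T}_{j},\bm{\widetilde{\Delta}}\rangle|$ directly rather than in the dual-norm language $\mathcal{R}^{*}$; the content is identical.
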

\textcolor{black}{Setting $\delta_2=0$ in Theorem \ref{theorem3}(a) exactly recovers \cite[Theorem 5]{kong2019l2rm}. While beyond the range of \cite{kong2019l2rm}, our results clearly display how the dithered quantization affects the error bounds, i.e., slightly worse multiplicative factors ($A_6,A_7$). Specifically, when $\delta_1$ and $\delta_2$ are chosen and then fixed, the estimation error still scales as $O\big(\sqrt{\frac{r(p+q)}{n}}\big)$, which matches the case without quantization up to multiplicative constant.}

\textcolor{black}{There are   some technical differences between our proof and the one for \cite[Theorem 5]{kong2019l2rm}. First, because we assume sub-Gaussian $(\bm{x}_k,\bm{E}_k)$ rather than the Gaussian ones as in \cite{kong2019l2rm}, different arguments are required to proceed the proof. More specifically, Gaussian $(\bm{x}_k,\bm{E}_k)$ enables  \cite[Theorem 5]{kong2019l2rm}  to use techniques from \cite{raskutti2019convex} like Anderson’s comparison inequality (see \cite[Lemma 4]{kong2019l2rm}) and  tail bound of $\chi^2$ random variable to bound $\|\frac{1}{n}\sum_{k=1}^n x_{ki}\bm{E}_k\|_{op}$. In contrast,  this term is bounded via Lemma \ref{lemma5} in   (\ref{reviseadd3});  besides handling sub-Gaussian $(\bm{x}_k,\bm{E}_k)$, Lemma \ref{lemma5} itself represents a cleaner way to bound this random term compared to the arguments in \cite{kong2019l2rm}. Second, in the "complete quantization" case, due to error in the covariate, there appears an additional random term in (\ref{reviseadd1}), (\ref{reviseadd2}), and  to bound it we need to further assume $\sum_{i=1}^s\|\bm{\Theta}_0^{(i)}\|_{op}^2\leq R^2$ (as explained in Remark \ref{rem3}).
We defer the detailed proof to Appendix.}

We give the following remark to compare (\ref{regulasso})  with the ordinary least squares method and the Lasso for LRMR based on the reformulation (\ref{vectorization}). 

\begin{remark}
{\rm (Compared to OLS and LRMR via vectorization)}
 For the estimator $\bm{\widehat{\Theta}}_{LS}$ defined by minimizing the empirical $\ell_2$ loss over $\bm{\Theta}\in \mathbb{R}^{p\times sq}$, the error $\|\bm{\widehat{\Theta}}_{LS}-\bm{\Theta}_0\|_F$ would scale as $O\big(\sqrt{\frac{spq}{n}}\big)$ even without quantization. By contrast, the deduced $O\big(\sqrt{\frac{r(p+q)}{n}}\big)$ can be essentially better when $r\ll s\min\{p,q\}$. This illustrates the benefit of incorporating the low-rank structure. Moreover,     if we impose low-rankness on $\bm{\widetilde{\Theta}}_0$ after vectorization (\ref{vectorization}), then by Theorem \ref{thm2} the estimation error   scales as $O\big(\sqrt{\frac{r_1pq}{n}}\big)$ (here, $r_1=\rank(\bm{\widetilde{\Theta}}_0)$), which still suffers from the extremely large $pq$. Thus, the method in this section (also, as in \cite{kong2019l2rm}) achieves more effective dimension reduction in the case of matrix response.
\end{remark}


 \section{Experimental Results}\label{sec5}
In this section we provide experimental results to support and demonstrate our theoretical results. Otherwise specified, each data point is set to be the mean value of 50 independent trials. 
\subsection{Simulations with synthetic data}
We first present simulation results on synthetic data. Our main purpose is to verify the established error rates, specifically $O\big(A\sqrt{\frac{r(d_1+d_2)}{n}}\big)$ in Theorems \ref{theorem1}-\ref{thm2} and $O\big(A\sqrt{\frac{r(p+q)}{n}}\big)$ in Theorem \ref{theorem3}, are in the correct order for characterizing the Lasso estimation errors. In particular, the dithered quantization only results in slightly larger multiplicative factor $A$. We will also demonstrate the important role played by the random dithering.
\subsubsection{Constrained Lasso   for quantized LRMR}
  To simulate the setting of quantized LRMR we generate the low-rank underlying $\bm{\Theta}_0\in\mathbb{R}^{d_1\times d_2}$ as follows: we first generate $\bm{\Theta}_1\in\mathbb{R}^{d_1\times r}$, $\bm{\Theta}_2\in\mathbb{R}^{r\times d_2}$ with i.i.d. standard Gaussian entries, and then use a rescaled version of $ \bm{\Theta}_1\bm{\Theta}_2$ (with unit Frobenius norm) as $\bm{\Theta}_0$. To simulate the sub-Gaussian data in Assumption \ref{assumpt1}, for simplicity, we use $\bm{x}_k\sim\mathcal{N}(0,\bm{I}_{d_1})$ and   $\bm{\epsilon}_k\sim\mathcal{N}(0,0.1\cdot\bm{I}_{d_2})$. The constrained Lasso is fed with $R=\|\bm{\Theta}_0\|_{nu}$ and optimized by an algorithm based on alternating direction method of multipliers (ADMM)   \cite{boyd2011distributed}. To verify and demonstrate the error rate of $O\big(A\sqrt{\frac{r(d_1+d_2)}{n}}\big)$, we test different choices of $(d_1,d_2,r,\delta_1,\delta_2)$ under $n=1000:500:3500$, with the log-log error plots displayed in Figure \ref{fig1}. Firstly, the experimental curves are   aligned with the dashed line that represents the decreasing rate of $n^{-1/2}$, thus confirming the order regarding the sample size. Then, to illustrate that quantization merely affects multiplicative factors, we   compare the curves of $\delta_2=0.2,0.3,0.4$ in Figure \ref{fig1}(a)  (partial quantization) and the curves for  $\delta_1=\delta_2=0.2,0.3,0.4$ in Figure \ref{fig1}(b) (complete quantization). Note that   these curves     are still parallel to each other, while the ones with larger $\delta_i$ are higher, which is consistent with our theory.
Moreover, we note that increasing $d_1$ (from $50$ to $70$) or $r$ (from $5$ to $8$) also leads to larger estimation errors. This is also predicted by  the theoretical bound $O\big(\sqrt{\frac{r(d_1+d_2)}{n}}\big)$, that is, LRMR with more coefficients or weaker low-rank structure is harder.

\begin{figure}[ht!]
    \centering
    \includegraphics[scale = 0.43]{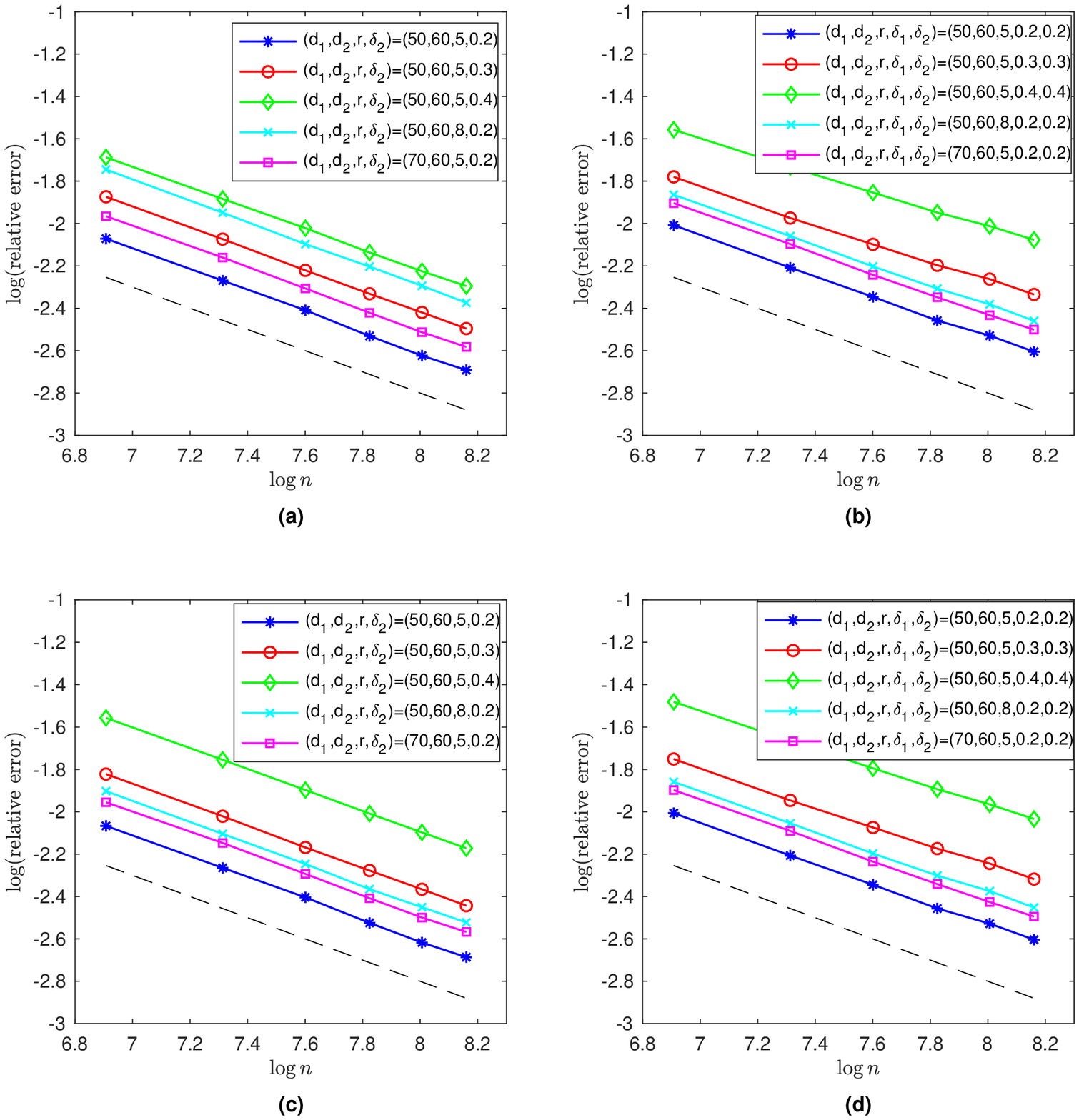}
    \caption{(a): constrained Lasso (partial quantization); (b): constrained Lasso (complete quantization); (c): regularized Lasso (partial quantization); (d): regularized Lasso (complete quantization).}
    \label{fig1}
\end{figure}


\begin{figure}[ht!]
    \centering
    \includegraphics[scale = 0.5]{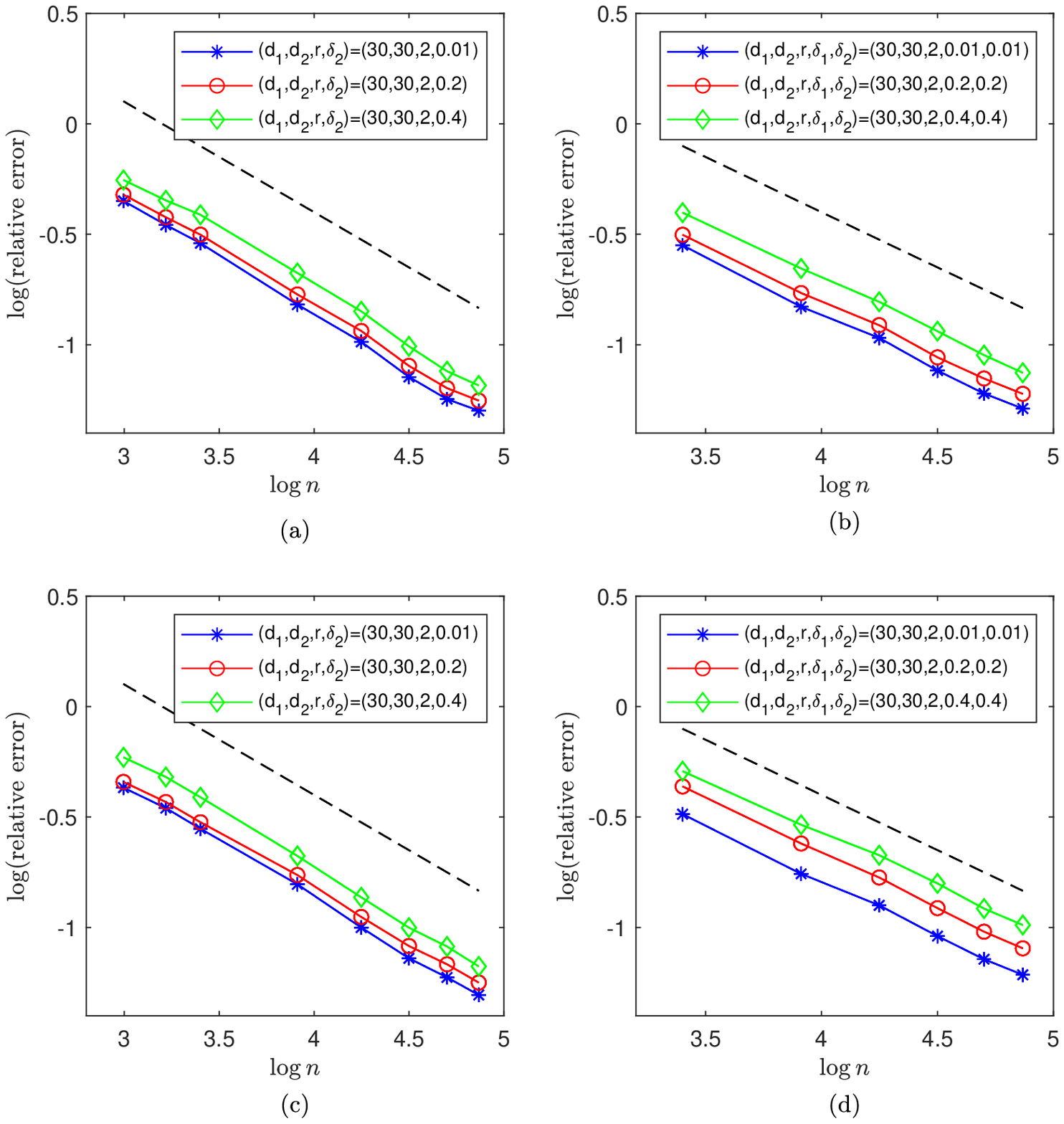}
    \caption{(a): constrained Lasso (partial quantization); (b): constrained Lasso (complete quantization); (c): regularized Lasso (partial quantization); (d): regularized Lasso (complete quantization).}
    \label{fig6}
\end{figure}

\subsubsection{Regularized Lasso for quantized LRMR}
We   switch to the Regularized Lasso estimator, which is more practically appealing in that it does not requires a pre-estimate on $\|\bm{\Theta}\|_{nu}$. The choices of parameters, data generation and quantization are exactly the same as before. We follow the instruction in Theorem \ref{thm2} for choosing $\lambda$ in (\ref{lrmrregu}). That is, for each curve we slightly tune $C(\lambda)$ and then set $\lambda=C(\lambda)\sqrt{\frac{r(d_1+d_2)}{n}}$. We solve the regularized Lasso with ADMM algorithm and show the results in Figure \ref{fig1}(c)-(d).  Note that these results have implications similar to the previous ones for constrained Lasso, in terms of the $O(n^{-1/2})$ decreasing rate, the effect of quantization, problem size, low-rank structure. Thus, we do not repeat the demonstrations.

\textcolor{black}{As suggested by an anonymous reviewer, we simulate quantized LRMR under a sample size $n$ closer or even smaller than $d_1,d_2$. Specifically, we generate the low-rank $\Theta_0\in \mathbb{R}^{30\times 30}$ using the same mechanism, and then test the constrained/regularized Lasso estimators under sample size $n=[20,25,30,50,70,90,110,130]$ for partial quantization, or under $n = [30,50,70,90,110,130]$ for complete quantization.\footnote{We do not test complete quantization under $n<d_1$ because this leads to non-convex program, see (\ref{empi}).} The results in Figure \ref{fig6} indicate that, using sample size close to $d_1$ and $d_2$, the theoretical error bounds   still characterize the estimation errors of our Lasso estimators fairly well.}


\subsubsection{Lasso for quantized L2RM with matrix response}
Now we move to  the problem of low-rank linear regression  with matrix response. Specifically, we 
set $s=4$ in (\ref{matrep}) and  thus there are $\bm{\Theta}_0^{(1)},...,\bm{\Theta}_0^{(4)}$ as underlying coefficients matrices. We generate each   $\bm{\Theta}^{(i)}_0\in\mathbb{R}^{p\times q}$ with rank $\frac{r}{4}$ as before. To fulfill Assumption \ref{assumption2}, we adopt covariates $\bm{x}_k \sim \mathcal{N}(0,\bm{I}_s)$ and the noise matrices $\bm{E}_k\sim \mathcal{N}^{p\times q}(0,0.01)$. We simulate different choices of $(p,q,r,\delta_1,\delta_2)$ under $n=4000:1000:8000$. We note the following facts from the results in Figure \ref{fig2}  that can support our theoretical error rate $O\big(\sqrt{\frac{r(p+q)}{n}}\big)$: all experimental curves decrease with $n$ in a rate of $O(n^{-1/2})$; coarser quantization     only lifts the curve a little bit; larger $(p,q,r)$  results in larger estimation error.

\begin{figure}[ht!]
    \centering
    \includegraphics[scale = 0.34]{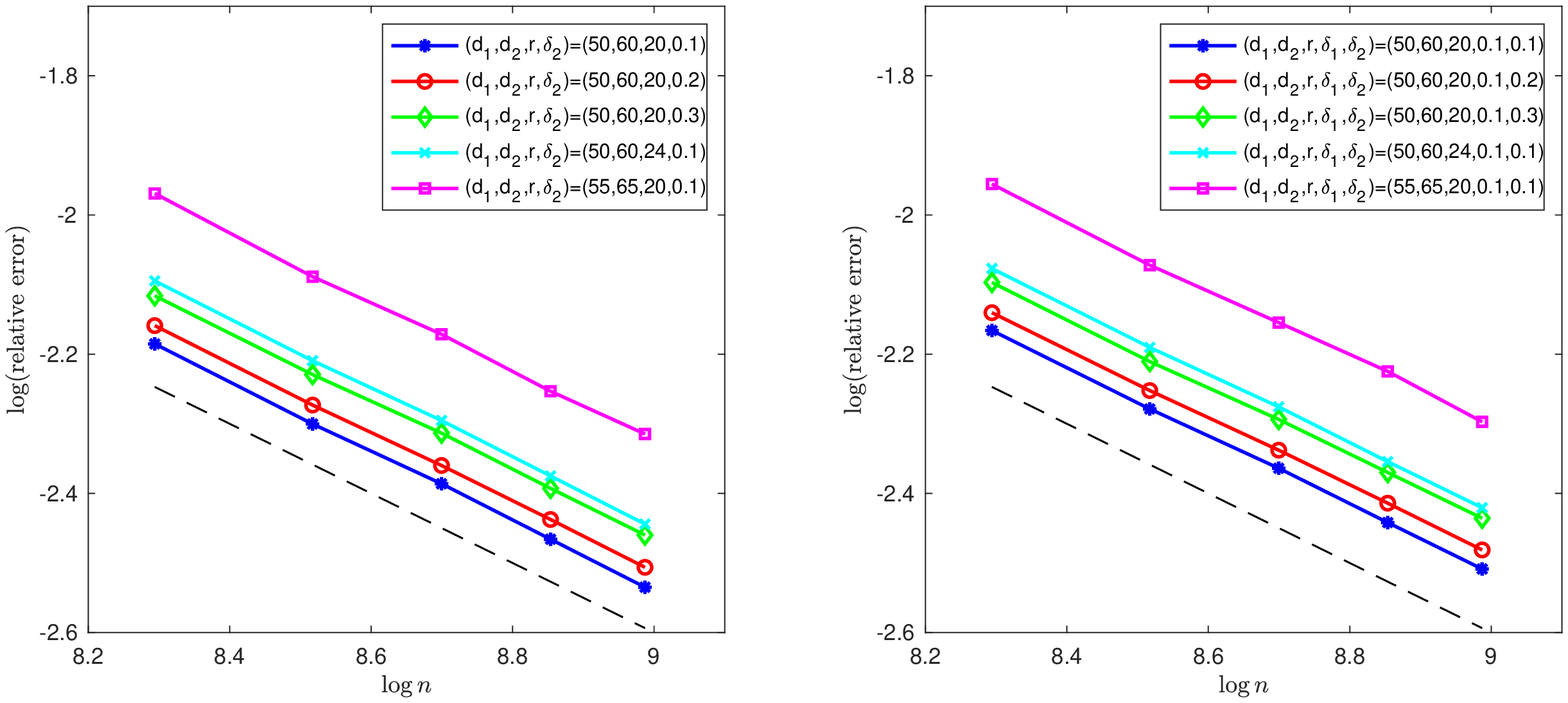}
    \caption{(left): partial quantization; (right):  complete quantization; we denote   $(d_1,d_2)=(p,q)$ in the labels.}
    \label{fig2}
\end{figure}
\subsubsection{The importance of dithering}

As already analysed in section \ref{sec1}, under a direct uniform quantization without dithering, it is in general not possible to estimate the low-rank parameter matrix to arbitrarily small error. To demonstrate this, we use  covariates with entries i.i.d. drawn from $\{\pm 1\}$-valued Bernoulli distribution to simulate LRMR with $50\times 60$ underlying low-rank matrix given by \begin{equation}
    \nonumber
    \bm{\Theta}_0=\begin{bmatrix}
        \bm{A} & 0 \\0 &0
    \end{bmatrix},\bm{B}=\begin{bmatrix}
        0.5 & 0.5 \\ 0.4 & 0.4
    \end{bmatrix},\bm{A}=\mathrm{diag}\big(\underbrace{\bm{B},...,\bm{B}}_{\text{ten}~\bm{B}\text{'s}}\big).
\end{equation}
Also, we simulate (\ref{matrep}) with $s=4$ and $50\times 60$ true matrices \begin{equation}
    \nonumber
    \begin{aligned}
        &\bm{\Theta}_0^{(1)}= \begin{bmatrix}
            \frac{1}{2} \bm{C} &0\\0&0
        \end{bmatrix} ,~\bm{\Theta}_0^{(2)}= \begin{bmatrix}
            \frac{2}{5}\bm{C}&0\\0&0
        \end{bmatrix} ,\\
        &\bm{\Theta}_0^{(3)}=\begin{bmatrix}
            0&0\\0&\frac{1}{2}\bm{C}
        \end{bmatrix},~\bm{\Theta}_0^{(4)}=\begin{bmatrix}
            0 & 0 \\0& \frac{2}{5}\bm{C}
        \end{bmatrix},
    \end{aligned}
\end{equation}
where $\bm{C}=\mathrm{diag}(\bm{D},...,\bm{D})\in \mathbb{R}^{10\times 10}$ has five $\bm{D}$'s, and $\bm{D}$ is the $2\times 2$ all-ones matrix. Under  Gaussian noise, we   quantize the responses with $\delta_2 = 1$ either under the uniform dither $\mathscr{U}[-\frac{1}{2},\frac{1}{2}]$, or directly without dithering. Then we estimate the parameters via regularized Lasso under different sample sizes, the   results are shown in Figure \ref{fig4}. We find that, compared to a direct quantization, using dithering significantly reduces estimation errors; more prominently,   the errors under dithering decrease at a sharp rate, whereas the curves without dithering reach some error floor where more data can no longer improve the estimation. We refer to \cite[Figure 1]{sun2022quantized}, \cite[Figure 5]{chen2022quantizing} for similar experimental results in the contexts of 
compressed sensing, matrix completion, and covariance estimation.

\begin{figure}[ht!]
    \centering
    \includegraphics[scale = 0.63]{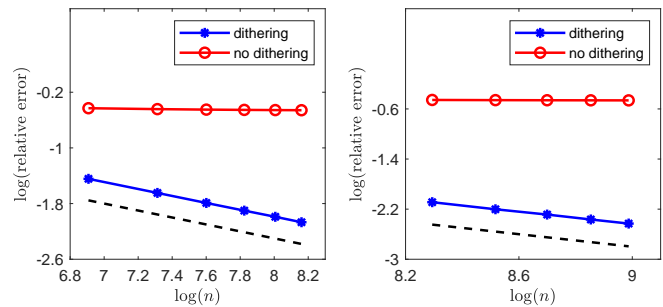}
    \caption{We compare the quantized setting with or without random dithering. (left): quantized LRMR; (right):  quantized L2RM with matrix response.}
    \label{fig4}
\end{figure}
\subsection{Simulations of image restoration}
\textcolor{black}{Note that natural images are approximately low-rank\footnote{This means that  its singular values decrease rapidly and only the first  few are dominant.} (e.g., \cite{chen2022color,chen2019low}), and our theoretical results can   be easily extended to approximately low-rank case by slightly more work (e.g., \cite{negahban2011estimation,fan2021shrinkage,chen2022color}). To better visualize the effect of quantization,  following prior work like \cite{kong2019l2rm},   we conduct simulations with  images as underlying low-rank matrices in this part. } 
\subsubsection{Quantized LRMR}
This numerical example simulates (\ref{3.1}) with each channel of "Peppers" as $\bm{\Theta}_0$, aiming to test the effect of quantization in a relatively high-noise setting. We also demonstrate the advantage of LRMR over the ordinary least squares (OLS) estimation (see Remark \ref{ls}). In the experiment, we separately deal with each channel, which is a $256\times 256$ approximately low-rank matrix (see the left bottom of Figure \ref{fig5}). Specifically, we draw entries of $\bm{x}_k$ from $\mathcal{N}(0,1)$; let $e$ be the average magnitude of the signal part $(\bm{\Theta}_0^\top\bm{x}_k)_{k=1}^n$, we  use $\bm{\epsilon}_k\sim \frac{2e}{5}\cdot \mathcal{N}(0,\bm{I}_{256})$ to simulate a relatively large noise (signal-to-noise ratio less than 7); in the quantized setting, we use uniform dithering and quantize $\bm{y}_k$ with $\delta_2=\frac{e}{8}$. Under $n=300$ or $n=400$, we test regularized Lasso with noisy unquantized/quantized $\bm{y}_k$, as well as OLS with noisy quantized $\bm{y}_k$. The results in Figure \ref{fig5} indicate that, quantization does not notably harm the restoration (comparing columns 2 and 3). Moreover, in such a noisy and quantized setting, Lasso estimator significantly outperforms the OLS estimation that is ignorant of the low-rank structure (comparing columns 3 and 4).
\begin{figure}[ht!]
    \centering
    \includegraphics[scale = 0.35]{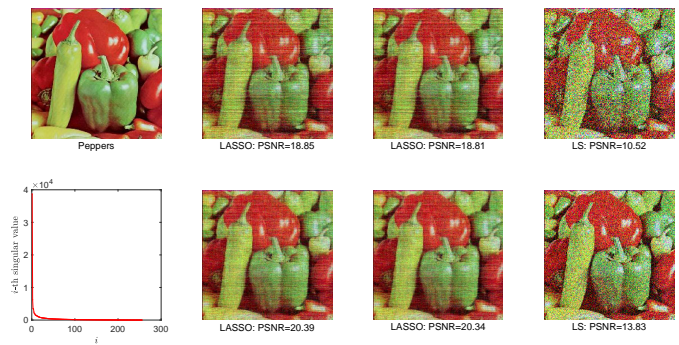}
    \caption{row 1: \underline{$n=300$}; row 2: \underline{$n=400$}. column 1: the true image (peppers) and the singular values of its red channel, (which indicates approximate low-rankness); column 2: Lasso with noisy unquantized data; column 3: Lasso with noisy quantized data; column 4: Least squares estimation with noisy quantized data.}
    \label{fig5}
\end{figure}
\subsubsection{Quantized L2RM with matrix response}
    We follow the experiment in \cite[Figures 1-2]{kong2019l2rm}. Specifically, we simulate (\ref{matrep}) with $s=4$ where $\bm{\Theta}_0^{(i)}$'s are $64\times 64$ $0$-$1$ matrices and shown as images in the first row of Figure \ref{fig3}. It can be verified that they are approximately low-rank. We also adopt the method of generating $(\bm{x}_k,\bm{E}_k)$ in \cite{kong2019l2rm}. While   the experiment in \cite{kong2019l2rm} aims at comparing different methods of recovering $\bm{\Theta}_0^{(i)}$, however, our main goal here is to exhibit how quantization resolution affects the recovery. Thus, we simulate the regularized Lasso (\ref{regulasso}) under   response quantization with  $\delta_2=0.0,0.5,1.0,3.0$. Under the sample size of $n=2000$, the reconstructed images are shown in   rows two through five in Figure \ref{fig3}. We also run 100 independent trials and report the mean (relative) Frobenius norm error   and standard deviation for each $\bm{\Theta}_0^{(i)}$ in Table \ref{tab2}. It is clear both visually and on the mean error that, under quantization with relatively high resolution ($\delta_2=0.5,1.0$),  Lasso returns estimations fairly close to the ones obtained in a full-data regime. In fact, even if we quantize $\bm{Y}_k$ with $\delta_2=3$,\footnote{This represents rather low resolution because in the simulation, entries of $\bm{Y}_k$ have magnitude about $1.6$ in average.} the Lasso estimator still delivers quite acceptable results. Therefore, we conclude that the dithered quantization will not significantly deteriorate one's ability to recover the underlying low-rank parameters; rather, the dithered uniform quantizer preserves the information fairly well. Generally speaking, there should be a trade-off between quantization resolution and recovery accuracy in practice. Note that the smaller sample size $n=400$ is  also simulated, see Table \ref{tab1} for the results with similar implications.

\begin{figure}[ht!]
    \centering
    \includegraphics[scale = 0.7]{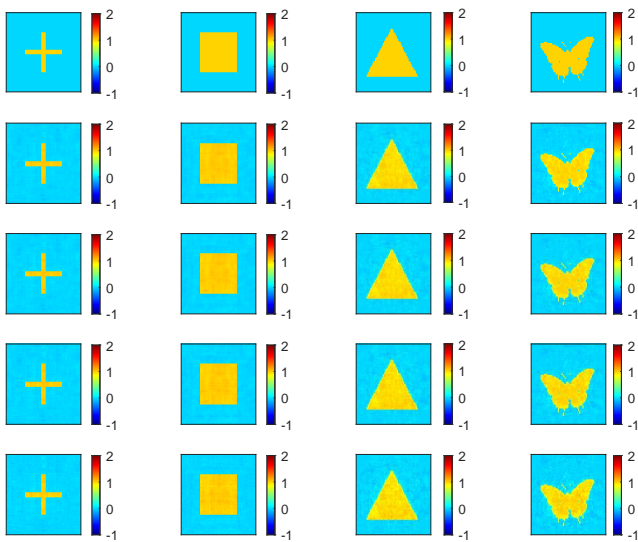}
    \caption{\underline{$n=2000$}. (row 1): the  true images $\bm{\Theta}_0^{(i)}$; (row 2-5): images reconstructed by regularized Lasso (\ref{regulasso}) under response quantization with $\delta_2=0.0$ (no quantization), $\delta_2=0.5$, $\delta_2 = 1.0$ and $\delta_2=3.0$, respectively.}
    \label{fig3}
\end{figure}

\begin{table}[H]
\begin{center}
\begin{tabular}{c|c|c|c|c}
\toprule
$\delta_2$& $\bm{\Theta}_0^{(1)}$ &$\bm{\Theta}_0^{(2)}$& $\bm{\Theta}_0^{(3)}$&$\bm{\Theta}_0^{(4)}$\\
 \midrule
0.0&0.0532(7.91)&0.0263(5.53)&	0.0791(4.71)&0.0801(5.90)\\
0.5&0.0536(7.86)&0.0265(5.48)&	0.0795(4.67)&0.0804(5.86)\\
1.0&0.0551(7.67)&0.0272(5.39)&0.0805(4.62)&0.0813(5.77)
\\
3.0&0.0769(5.96)&0.0376(4.23)	&0.0920(4.12)&0.0923(5.00)\\
\bottomrule
\end{tabular}
\end{center}
\caption{\underline{$n=2000$}. Mean relative Frobenius norm errors (standard deviation($\times 10^{-3}$))}
\label{tab2}
\end{table}

\begin{table}[H]
\begin{center}
\begin{tabular}{c|c|c|c|c}
\toprule
$\delta_2$& $\bm{\Theta}_0^{(1)}$ &$\bm{\Theta}_0^{(2)}$& $\bm{\Theta}_0^{(3)}$&$\bm{\Theta}_0^{(4)}$\\
 \midrule
0.0&0.1470(8.60)&	0.0775(9.57)&	0.1233(10.29)&	0.1249(10.50)\\
0.5&0.1496(8.34)&	0.0787(9.48)&	0.1245(10.22)&	0.1261(10.36)\\
1.0&0.1575(8.08)&	0.0825(9.09)&	0.1284(9.81)&	0.1300(10.09)\\
3.0&0.2507(6.05)&	0.1278(6.30)&	0.1699(7.67)&	0.1723(7.36)\\
\bottomrule
\end{tabular}
\end{center}
\caption{\underline{$n=400$}. Mean relative Frobenius norm errors (standard deviation($\times 10^{-3}$))}
\label{tab1}
\end{table}

\subsection{\textcolor{black}{A real data application}}

\textcolor{black}{
To confirm the efficacy of the  proposed method, we perform the quantization and estimation in a 
genetic association study for examining the regulatory control mechanisms in gene networks for isoprenoids in Arabidopsis thaliana \cite{she2017robust,wille2004sparse}. We adopt the LRMR model (\ref{3.1}) with $\bm{x}_k$ being the expression levels of $d_1=39$ genes from the two isoprenoid biosynthesis pathways, $\bm{y}_k$ being the expression levels of $d_2=62$ genes from four downstream pathways, and we use $n=115$ samples in total.\footnote{There are originally  $118$   samples in this real data study, but we remove $3$ samples that are detected as (potential) outliers in \cite{she2017robust}, see Figure 1 therein.} Besides, the mean magnitudes of the entries of $\bm{x}_k$ and $\bm{y}_k$ are $2160$ and $3707$, respectively. }

\textcolor{black}{
 We will focus on how the dithered quantization of $(\bm{x}_k,\bm{y}_k)$ affects the estimation and prediction of regularized Lasso (\ref{3.3}). 
  Note that the two major differences between this real data application and the previous simulations are that the data here may not be nicely captured by the sub-Gaussian distributions (Assumption \ref{assumpt1}), and that the relation between $\bm{x}_k,\bm{y}_k$ may not be perfectly modeled by LRMR (\ref{3.1}).  Thus, there is not an underlying $\bm{\Theta}_0$ serving as the ground truth. Alternatively, since the emphasis is on the effect of quantization, we use the Lasso estimator with suitable $\lambda$
  from unquantized data as $\bm{\Theta}_0$.}
  
  \textcolor{black}{
  For partial quantization, we quantize $\bm{y}_k$ to $\bm{\dot{y}}_k$ under $\delta_2=0:100:1000$ and obtain $\bm{\Theta}_\delta$ from $(\bm{x}_k,\bm{\dot{y}}_k)$ as in (\ref{lrmrregu}), where the parameter $\lambda$ increases with $\delta_2$, as instructed by Theorem \ref{thm2}. The relative estimation error $\frac{\|\bm{\Theta}_\delta-\bm{\Theta}_0\|_F}{\|\bm{\Theta}_0\|_F}$ and relative prediction error $\frac{\|\bm{Y}-\bm{\Theta}_\delta^\top \bm{X}\|_F}{\|\bm{Y}\|_F}$ are reported as their mean values in 50 independent trials in Figure \ref{fig10} (a)-(b). Specifically, the curves slowly increase with $\delta_2$; compared to the unquantized case $\delta_2=0$, the estimation and prediction under the coarse quantization $\delta_2=1000$ are still acceptable. We also test the complete quantization setting where $\bm{x}_k$ is quantized to $\bm{\dot{x}}_k$ with $\delta_1=0:5:50$, $\bm{y}_k$ is quantized to $\bm{\dot{y}}_k$ with $\delta_2=0:50:500$. Similar results are reported in Figure \ref{fig10} (c)-(d), but comparing Figure \ref{fig10}(c) and Figure \ref{fig10}(a),
  we also note that $\bm{\Theta}_\delta$ deviates from $\bm{\Theta}_0$ more significantly  in complete quantization (even though $\delta_1=0:5:50$ is relatively small compared to the mean magnitude of $\bm{x}_k$); that is, the quantization of $\bm{x}_k$ affects the estimation more severely. Finally, we conduct a more practical learning and prediction setting as follows:   randomly dividing the columns  of $\bm{X}\in \mathbb{R}^{39\times 115},\bm{Y}\in \mathbb{R}^{62\times 115}$ into the "training data" $\bm{X}_1 \in \mathbb{R}^{39\times 95},\bm{Y}_1\in \mathbb{R}^{62\times 95}$ and the "testing data" $\bm{X}_2 \in \mathbb{R}^{39\times 20},\bm{Y}_2\in \mathbb{R}^{62\times 20}$, we quantize $\bm{Y}_1$ to $\bm{\dot{Y}}_1$ with $\delta_2= 0:100:1000$ and use $(\bm{X}_1,\bm{\dot{Y}}_1)$ to obtain the estimator $\bm{\Theta}_\delta$ defined in (\ref{lrmrregu}),  then we track the relative prediction error over the testing data, i.e., $\frac{\|\bm{Y}_2-\bm{\Theta}_\delta^\top\bm{X}_2\|_F}{\|\bm{Y}_2\|_F}$, whose mean value in $50$ independent trials is reported in Figure \ref{fig10}(e). Compared to Figure \ref{fig10}(b), the prediction error increases even more slowly with $\delta_2$. In conclusion, our quantization scheme well preserves the data information for subsequent estimation and prediction procedures.}

\begin{figure*}[!t]
    \centering
    \includegraphics[scale = 0.66]{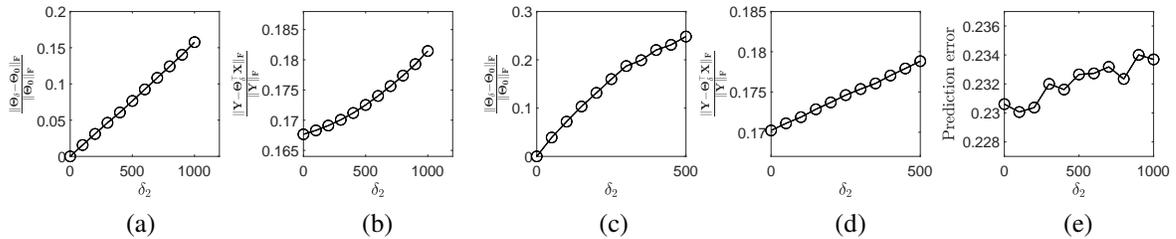}

     ~~~~~~~~~~~~~~   (a) \hspace{2.5cm} (b)  \hspace{2.5cm} (c) \hspace{2.5
     cm} (d) \hspace{2.5cm}(e)
    \caption{(a)-(b): estimation/prediction errors, partial quantization; (c)-(d): estimation/prediction errors, complete quantization; (e): prediction error over "testing data", partial quantization.}
    \label{fig10}
\end{figure*}

 \section{Conclusions}\label{sec6}
This paper, for the first time, studied low-rank multivariate regression (LRMR) in a realistic setting that involves data quantization. We proposed to use the dithered uniform quantizer, associated with uniform dither for the response, or with triangular dither for the covariate. We proposed the Lasso estimators based on quantized data in a constrained or regularized manner. With the aid of random dithering, albeit losing information in quantization, our estimators achieve minimax optimal error rate. In fact, the derived error bounds demonstrate that the quantization only results in slightly worse multiplicative factors, which is reminiscent of similar results in quantized CS (Remark \ref{rem2}) and has been clearly observed in our simulations (e.g., Figure \ref{fig1}). Moreover, we similarly applied the quantization scheme to a low-rank regression problem with matrix response and established the theoretical results accordingly. Experimental results were reported to complement our theoretical developments.

For future work, our first direction is to study LRMR under the more extreme 1-bit quantization, which only retains the sign of the data. Secondly, while we separately worked on LRMR and L2RM with matrix response in this paper, it would be of interest to attempt to unify their analyses, and ideally build a general theoretical framework for quantized multiresponse regression. \textcolor{black}{Last but not least, it is desired to investigate whether our quantization method and theoretical results could be extended to a high-dimensional setting where $n<d_1$, which probably requires  new machinery in the technical proofs and  structure on $\bm{\Theta}_0$ beyond low-rankness.}

 \bibliographystyle{ieeetran}
\bibliography{libr}
\begin{appendix}

   \subsection{The proof of Corollary \ref{coro1}}
   \begin{proof}
 Following   Lemma \ref{lem1}, the proof can be done by some elementary algebraic manipulation. For the first part of the claim, we only need to verify both choices of $\tau_i$ satisfy the condition in Lemma 1(a): If $\tau_i\sim \mathscr{U}\big([-\frac{\delta}{2},\frac{\delta}{2}]\big)$, then  \begin{equation}
    \begin{aligned}\nonumber
     \mathbbm{E} \exp(\textbf{i}u\tau_i) &= \int_{-\frac{\delta}{2}}^{\frac{\delta}{2}} \frac{1}{\delta}\big(\cos (ux) + \textbf{i}\sin (ux)\big) \mathrm{d}x \\&= \frac{2}{\delta u}\sin\big(\frac{\delta u}{2}\big),
    \end{aligned}
\end{equation}
which obviously vanishes at $u= \frac{2\pi l}{\delta}$ for non-zero integer $l$; It is similar for triangular dither. For the second part of the claim, let us show the triangular dither satisfies the condition in Lemma 1(b). Let $Z \sim \mathscr{U}\big([-\frac{\delta}{2},\frac{\delta}{2}]\big)$ be independent of $\tau_i$, then \begin{equation}
    \begin{aligned}\nonumber
    &g(u):= \mathbbm{E} (\exp(\textbf{i}uZ))\mathbbm{E} (\exp(\textbf{i}u\tau_i))\\&= \big[\mathbbm{E} (\exp(\textbf{i}uZ))\big]^3 = \Big[\frac{2}{\delta u} \sin\big(\frac{\delta u}{2}\big)\Big]^3.
    \end{aligned}
\end{equation}
It is evident that $g''(u)$ contains a common factor $\sin\big(\frac{\delta u}{2}\big)$, thus $g''(\frac{2\pi l}{\delta })=0$ holds for any non-zero integer $l$. Hence, $$\mathbbm{E}\xi_i^2=\mathbbm{E}(Z+\tau_i)^2= \mathbbm{E}Z^2+\mathbbm{E}\tau_i^2 =3\int_{-\delta/2}^{\delta/2}~\frac{x^2}{\delta}~\mathrm{d}x=\frac{\delta^3}{4},$$ the proof is complete. 
\end{proof}
\subsection{The proof of Theorem \ref{thm2}}
\begin{proof}
We start with the optimality of $\bm{\widehat{\Theta}}_p$
$$\dot{\mathcal{L}}(\bm{\widehat{\Theta}}_p)+\lambda\|\bm{\widehat{\Theta}}_p\|_{nu}\leq \dot{\mathcal{L}}(\bm{{\Theta}}_0)+\lambda\|\bm{{\Theta}}_0\|_{nu}.$$
Recall that $\bm{\widehat{\Delta}}_p=\bm{\widehat{\Theta}}_p-\bm{\Theta}_0$, by some algebra we arrive at \begin{equation}\label{A.3}
\begin{aligned}    \big<\bm{\widehat{\Delta}}_p\bm{\widehat{\Delta}}_p^\top,\bm{\widehat{\Sigma}_{xx}}\big> \leq & 2\big<\bm{\widehat{\Delta}}_p,\bm{\widehat{\Sigma}_{xy}}-\bm{\widehat{\Sigma}_{xx}}\bm{\Theta}_0\big> \\&+ \lambda \big(\|\bm{\Theta}_0\|_{nu}-\|\bm{\widehat{\Theta}}_p\|_{nu}\big).\end{aligned}
\end{equation} 
Note that the left-hand side is always non-negative (this holds deterministically when $\delta_1=0$, and holds within the promised probability when $\delta_1>0$, see {\it step 1} in the proof of Theorem \ref{theorem1}). By (\ref{3.16}), (\ref{add1}), (\ref{3.19}) in the proof of Theorem \ref{theorem1}, in both "partial quantization" and "complete quantization", our choices of $\lambda$ can guarantee $\lambda \geq 4 \|\bm{\widehat{\Sigma}_{xy}}-\bm{\widehat{\Sigma}_{xx}\Theta}_0\|_{op}$ holds under the promised probability. Under the same probability, we thus obtain $$0\leq \frac{\lambda}{2}\|\bm{\widehat{\Delta}}_p\|_{nu}+\lambda\big(\|\bm{\Theta}_0\|_{nu}-\|\bm{\widehat{\Theta}}_p\|_{nu}\big),$$ i.e., $\frac{1}{2}\|\bm{\widehat{\Delta}}_p\|_{nu}+\|\bm{\Theta}_0\|_{nu}\geq \|\bm{\widehat{\Theta}}_p\|_{nu}$. Recall that (\ref{3.15}) can provide $\|\bm{\widehat{\Theta}}_p\|_{nu}-\|\bm{\Theta}_0\|_{nu}\geq \|\mathcal{P}_{\overline{\mathcal{M}}^\bot}\bm{\widehat{\Delta}}_p\|_{nu}-\|\mathcal{P}_{\overline{\mathcal{M}}}\bm{\widehat{\Delta}}_p\|_{nu}$, we deduce $$\big\|\mathcal{P}_{\overline{\mathcal{M}}^\bot} \bm{\widehat{\Delta}}_p\big\|_{nu}\leq \big\|\mathcal{P}_{\overline{\mathcal{M}}}\bm{\widehat{\Delta}}_p\big\|_{nu}+\frac{1}{2}\big\|\bm{\widehat{\Delta}}_p\big\|_{nu},$$
where the involved subspaces and projections are defined in the proof of Theorem \ref{theorem1}. Thus, \begin{equation}\nonumber\begin{aligned}
    &\big\|\bm{\widehat{\Delta}}_p\big\|_{nu}\leq \big\|\mathcal{P}_{\overline{\mathcal{M}}}\bm{\widehat{\Delta}}_p\big\|_{nu}+\big\|\mathcal{P}_{\overline{\mathcal{M}}^\bot}\bm{\widehat{\Delta}}_p\big\|_{nu}\\&\leq 2\big\|\mathcal{P}_{\overline{\mathcal{M}}}\bm{\widehat{\Delta}}_p\big\|_{nu}+\frac{1}{2}\big\|\bm{\widehat{\Delta}}_p\big\|_{nu},\end{aligned}
\end{equation} 
which further gives $$\|\bm{\widehat{\Delta}}_p\|_{nu}\leq 4\|\mathcal{P}_{\overline{\mathcal{M}}}\bm{\widehat{\Delta}}_p\|_{nu}\leq 4\sqrt{2r}\|\bm{\widehat{\Delta}}_p\|_F,$$ and the last inequality is because $\rank(\bm{A})\leq 2r$ if $\bm{A}\in \overline{\mathcal{M}}$.

Having deduced $\|\bm{\widehat{\Delta}}_p\|_{nu}\lesssim \sqrt{r} \|\bm{\widehat{\Delta}}_p\|_F$, we can upper bound the right-hand side of (\ref{A.3}) by 
\begin{equation}\nonumber
    \begin{aligned}
        &\frac{\lambda}{2}\|\bm{\widehat{\Delta}}_p\|_{nu} + \lambda\big(\|\bm{\Theta}_0\|_{nu}-\|\bm{\widehat{\Theta}}_p\|_{nu}\big)\\&~~~~~~\leq \frac{3}{2}\lambda\|\bm{\widehat{\Delta}}_p\|_{nu}\lesssim \lambda\sqrt{r}\|\bm{\widehat{\Delta}}_p\|_F.
    \end{aligned}
\end{equation}
As deduced in (\ref{3.14}), within the promised probability the left-hand side of (\ref{A.3}) is lower bounded by $\frac{\kappa_0}{2}\|\bm{\widehat{\Delta}}_p\|_F^2$. 
Thus, we arrive at $\|\bm{\widehat{\Delta}}_p\|_F\lesssim \kappa_0^{-1}\lambda\sqrt{r}$. To complete the proof, we only need to plug in the value of $\lambda$ in both cases.
\end{proof}
\subsection{The proof of Theorem \ref{theorem3}}
\begin{proof}
    We define $\bm{\Delta}=\bm{\widehat{\Theta}}- \bm{\Theta}_0 \in \mathbb{R}^{p\times sq}$, $\bm{\Delta}^{(i)}=\bm{\widehat{\Theta}}^{(i)}-\bm{\Theta}_0^{(i)}\in \mathbb{R}^{p\times q}$, recall the rearrangement   $\bm{\widetilde{\Theta}}\in \mathbb{R}^{s\times pq}$ defined in (\ref{rearr}). We continue to use prior notation for  quantization noise/error: $$\bm{\dot{x}}_k=\bm{x}_k+\bm{\xi}_{k1}=\bm{x}_k+\bm{\phi}_k+\bm{w}_{k1}$$and $$\bm{\dot{Y}}_k=\bm{Y}_k+\bm{\xi}_{k2}=\bm{Y}_k+\bm{\tau}_k+\bm{w}_{k2}.$$ Now we use the definition and obtain $\dot{\mathcal{L}}_1(\bm{\widehat{\Theta}})+ \lambda \cdot \sum_{i=1}^s\|\bm{\widehat{\Theta}}\|_{nu}\leq \dot{\mathcal{L}}_1(\bm{{\Theta}}_0)+ \lambda \cdot \sum_{i=1}^s\|\bm{{\Theta}}_0\|_{nu}$. Then we perform some algebra to arrive at \begin{equation}
\label{key}\begin{aligned}&\big<\bm{\widetilde{\Delta}}\bm{\widetilde{\Delta}}^\top,\bm{\widehat{\Sigma}}_{\bm{xx}}\big>\leq 2\big<\bm{\widehat{\Sigma}}_{\bm{xy}}-\bm{\widehat{\Sigma}}_{\bm{xx}}\bm{\widetilde{\Theta}}_0,\bm{\widetilde{\Delta}}\big>\\&~~~~+\lambda\cdot \sum_{i=1}^s\big(\|\bm{\Theta}_0^{(i)}\|_{nu}-\|\bm{\widehat{\Theta}}^{(i)}\|_{nu}\big).
        \end{aligned}
    \end{equation}

    \noindent{\it Step 1. Bound the left-hand side from below.} 

    This is exactly the same as {\it Step 1} in the proof of Theorem \ref{theorem1}. In more detail, because $n\gtrsim s$, one can invoke Lemma \ref{lemma4} to show that $\lambda_{\min}(\bm{\widehat{\Sigma}_{xx}})\geq \frac{\kappa_0}{2}$ holds with probability at least $1-2\exp(-s)$. Assume that we are on this event, then evidently we have $$\big<\bm{\widetilde{\Delta}}\bm{\widetilde{\Delta}}^\top,\bm{\widehat{\Sigma}_{xx}}\big>\geq \frac{\kappa_0}{2}\|\bm{\widetilde{\Delta}}\|_F^2=\frac{\kappa_0}{2}\|\bm{\Delta}\|_F^2.$$  

    \vspace{1mm}
    \noindent{\it Step 2. Bound   $\mathscr{T}:=\big<\bm{\widehat{\Sigma}}_{\bm{xy}}-\bm{\widehat{\Sigma}}_{\bm{xx}}\bm{\widetilde{\Theta}}_0,\bm{\widetilde{\Delta}}\big>$ from above.}

    Using $\mathrm{vec}(\bm{Y}_k)=\bm{\widetilde{\Theta}}_0^\top \bm{x}_k+\mathrm{vec}(\bm{E}_k)$ and the meaning of $\bm{\xi}_{k1},\bm{\xi}_{k2}$, we can first simplify $\bm{\widehat{\Sigma}_{xy}}-\bm{\widehat{\Sigma}_{xx}}\bm{\widetilde{\Theta}}_0$ to \begin{equation}
        \begin{aligned}\nonumber
            &\underbrace{\frac{1}{n}\sum_{k=1}^n \bm{\dot{x}}_k \big(\mathrm{vec}(\bm{E}_k)+\mathrm{vec}(\bm{\xi}_{k2})\big)^\top}_{\mathscr{T}_1} -\underbrace{\frac{1}{n}\sum_{k=1}^n\big(\bm{\dot{x}}_k\bm{\xi}_{k1}^\top  - \frac{\delta_1^2}{4}\bm{I}_s\big)\bm{\widetilde{\Theta}}_0}_{\mathscr{T}_2}.
        \end{aligned}
    \end{equation}
     Thus, we have $|\mathscr{T}|\leq \big|\big<\mathscr{T}_1,\bm{\widetilde{\Delta}}\big>\big|+\big|\big<\mathscr{T}_2,\bm{\widetilde{\Delta}}\big>\big|$, and it amounts to estimating $\big|\big<\mathscr{T}_1,\bm{\widetilde{\Delta}}\big>\big|$ and $\big|\big<\mathscr{T}_2,\bm{\widetilde{\Delta}}\big>\big|$. For the first term,  by turning back to the $\mathbb{R}^{p\times q}$ we have \begin{equation}
        \begin{aligned}\label{reviseadd3}            &\big|\big<\mathscr{T}_1,\bm{\widetilde{\Delta}}\big>\big|=\Big| \Big<\frac{1}{n}\sum_{k=1}^n\bm{\dot{x}}_k\big(\mathrm{vec}(\bm{E}_k)+\mathrm{vec}(\bm{\xi}_{k2})\big)^\top,\bm{\widetilde{\Delta}}\Big>\Big|\\&= \Big|\sum_{i=1}^s\Big<\frac{1}{n}\sum_{k=1}^n\dot{x}_{ki}(\bm{E}_k+\bm{\xi}_{k2}), \bm{\Delta}^{(i)}\Big>\Big|\\&\leq \Big(\sum_{i=1}^s \|\bm{\Delta}^{(i)}\|_{nu}\Big)\Big(\max_{i\in [s]}\Big\|\frac{1}{n}\sum_{k=1}^n\dot{x}_{ki}(\bm{E}_k+\bm{\xi}_{k2})\Big\|_{op}\Big)\\&\leq C(K+\delta_1)(E+\delta_2)\sqrt{\frac{p+q}{n}}\cdot \Big(\sum_{i=1}^s\|\bm{\Delta}^{(i)}\|_{nu}\Big),
        \end{aligned}
    \end{equation}where in the last inequality we invoke Lemma \ref{lemma5} and a union bound over $i\in [s]$; it holds with probability at least $1-2\exp(-c(p+q))$ because $\log s =O(p+q)$. Note that the second term $\big|\big<\mathscr{T}_2,\bm{\widetilde{\Delta}}\big>\big|$ vanishes in partial quantization ($\delta_1=0$), thus we estimate it on the complete quantization case $(\delta_1>0)$ where we further assume $\sum_i \|\bm{\Theta}_0^{(i)}\|^2_{op}\leq R^2$ and $s=O(p+q)$. In particular, we define $$\bm{\Psi}=[\psi_{ij}]=\frac{1}{n}\sum_{k=1}^n\big(\bm{\dot{x}}_k\bm{\xi}_{k1}^\top- \frac{\delta_1^2}{4}\bm{I}_s\big)\in \mathbb{R}^{s\times s},$$ and note that we have $\mathbbm{E}\bm{\Psi}=0$. Moreover Lemma \ref{lemma5} provides that, $\|\bm{\Psi}\|_{op}\lesssim (K+\delta_1)\delta_1\sqrt{\frac{s}{n}}$ holds with probability at least $1-\exp(-s)$. On this event, we estimate that
    
\begin{equation}
    \begin{aligned}
        \label{reviseadd1}
&\big|\big<\mathscr{T}_2,\bm{\widetilde{\Delta}}\big>\big|= \big|\big<\bm{\Psi}\bm{\widetilde{\Theta}}_0,\bm{\widetilde{\Delta}}\big>\big|\\&=\Big|\sum_{i=1}^s \big<\sum_{j=1}^s \psi_{ij}\bm{\Theta}_0^{(j)},\bm{\Delta}^{(i)}\big>\Big|\\&\leq \Big(\max_{i\in [s]}\Big\|\sum_{j=1}^s\psi_{ij}\bm{\Theta}_0^{(j)}\Big\|_{op}\Big)\Big(\sum_{i=1}^s\|\bm{\Delta}^{(i)}\|_{nu}\Big)\\&\leq C(K+\delta_1)R\delta_1 \sqrt{\frac{p+q}{n}}\Big(\sum_{i=1}^s\|\bm{\Delta}^{(i)}\|_{nu}\Big),
    \end{aligned}
\end{equation}
    where the last inequality is because for $i\in [s]$, \begin{equation}\label{reviseadd2}
        \begin{aligned}
            &\|\sum_j \psi_{ij}\bm{\Theta}_0^{(j)}\|_{op}\leq \sum_j|\psi_{ij}|\|\bm{\Theta}_0^{(j)}\|_{op}\\&\leq \big(\sum_j\psi_{ij}^2\big)^{1/2}\big(\sum_j\|\bm{\Theta}^{(j)}_0\|_{op}^2\big)^{1/2}\\&\leq R\|\bm{\Psi}\|_{op}=O\big((K+\delta_1)R\delta_1\sqrt{\frac{s}{n}}\big)
        \end{aligned}
    \end{equation} 
     also recall that $s=O(p+q)$.

    To conclude, in "partial quantization" we have shown $\mathscr{T}= O\big(A_6\sqrt{\frac{p+q}{n}}\big)$, and in "complete quantization" $\mathscr{T}=O\big(A_7\sqrt{\frac{p+q}{n}}\big)$. Compared to  our choices of $\lambda$, we can assume $2\mathscr{T}\leq \frac{1}{2}\lambda \big(\sum_i\|\bm{\Delta}^{(i)}\|_{nu}\big)$ with the promised probability. Because the left-hand side of (\ref{key}) is non-negative (deterministically if $\delta_1=0$, with the promised probability if $\delta_1>0$), and $\lambda>0$, we arrive at \begin{equation}
       \label{here} \sum_{i=1}^s\big(\|\bm{\widehat{\Theta}}^{(i)}\|_{nu}-\|\bm{\Theta}^{(i)}_0\|_{nu}\big)\leq \frac{1}{2}\sum_{i=1}^s \|\bm{\Delta}^{(i)}\|_{nu}.
    \end{equation}

    \noindent{\it Step 3. Conclude the proof.}

    We use a decomposability argument. In particular, we let $r_i=\rank(\bm{\Theta}_0^{(i)})$, and exactly the same as the definition of $(\mathcal{M},\overline{\mathcal{M}},\overline{\mathcal{M}}^\bot)$   at the beginning of {\it Step 2} in the proof of Theorem \ref{theorem1} (regarding $\bm{\Theta}_0$ thereof), we now define $(\mathcal{M}_i,\overline{\mathcal{M}}_i,\overline{\mathcal{M}}_i^\bot)$ regarding $\bm{\Theta}_0^{(i)}$. Similarly, we have the decomposability $$\|\mathcal{P}_{\mathcal{M}_i}\bm{A}+\mathcal{P}_{\overline{\mathcal{M}}_i^\bot}\bm{B}\|_{nu}=\|\mathcal{P}_{\mathcal{M}_i}\bm{A}\|_{nu}+\|\mathcal{P}_{\overline{\mathcal{M}}_i^\bot}\bm{B}\|_{nu}$$ holds for all $i\in [s]$ and $\bm{A},\bm{B}\in \mathbb{R}^{p\times q}$. Thus, we can use (\ref{3.15}) to obtain $$\|\bm{\widehat{\Theta}}^{(i)}\|_{nu}-\|\bm{\Theta}_0^{(i)}\|_{nu}\geq \big\|\mathcal{P}_{\overline{\mathcal{M}}_i^\bot}\bm{\Delta}^{(i)}\big\|_{nu}-\big\|\mathcal{P}_{\overline{\mathcal{M}}_i}\bm{\Delta}^{(i)}\big\|_{nu}.$$
    Putting this into the left-hand side of (\ref{here}), and also apply $$\|\bm{\Delta}^{(i)}\|_{nu}\leq \big\|\mathcal{P}_{\overline{\mathcal{M}}_i^\bot}\bm{\Delta}^{(i)}\big\|_{nu}+\big\|\mathcal{P}_{\overline{\mathcal{M}}_i}\bm{\Delta}^{(i)}\big\|_{nu}$$ to the right-hand side, it provides $\sum_i\big\|\mathcal{P}_{\overline{\mathcal{M}}_i^\bot}\bm{\Delta}^{(i)}\big\|_{nu}\leq 3\sum_i\big\|\mathcal{P}_{\overline{\mathcal{M}}_i}\bm{\Delta}^{(i)}\big\|_{nu}$, which leads to \begin{equation}\nonumber
        \begin{aligned}
            &\sum_i\|\bm{\Delta}^{(i)}\|_{nu}\leq 4\sum_i\big\|\mathcal{P}_{\overline{\mathcal{M}}_i}\bm{\Delta}^{(i)}\big\|_{nu}\leq 4\sum_i \sqrt{2r_i}\|\bm{\Delta}^{(i)}\|_F \\&\leq4\sqrt{2}\big({\sum _ir_i}\big)^{1/2}\big({\sum_i\|\bm{\Delta}^{(i)}\|_F^2}\big)^{1/2}\leq 4\sqrt{2r}\cdot\|\bm{\Delta}\|_F.  
        \end{aligned}
    \end{equation} 
Now we are ready to put pieces together. Because $\|\bm{\Theta}_0^{(i)}\|_{nu}-\|\bm{\widehat{\Theta}}^{(i)}\|_{nu}\leq \|\bm{\Delta}^{(i)}\|_{nu}$, overall, the right-hand side of (\ref{key}) has the bound $O\big(\lambda \sum_i\|\bm{\Delta}^{(i)}\|_{nu}\big)=O\big(\sqrt{r}\lambda\|\bm{\Delta}\|_F\big)$, while the left-hand side is lower bounded by $\frac{\kappa_0}{2}\|\bm{\Delta}\|_F^2$, so it holds with the promised probability that, $\|\bm{\Delta}\|_F=O\big(\frac{\sqrt{r}\lambda}{\kappa_0}\big)$. The proof can be concluded by using the chosen value of $\lambda$.     
\end{proof}
  \subsection{Auxiliary facts}
    \subsubsection{The proof of Lemma \ref{lem3}}
    \begin{proof}
    The proof is a standard covering argument for controlling the matrix operator norm. We construct $\mathcal{N}_1\subset \mathbb{S}^{d_1-1}$ as a $\frac{1}{4}$-net of $\mathbb{S}^{d_1-1}$, meaning that for any $\bm{v}\in \mathbb{S}^{d_1-1}$ there exists $\bm{x}\in\mathcal{N}_1$ such that $\|\bm{x}-\bm{v}\|_2\leq \frac{1}{4}$. Similarly, let $\mathcal{N}_2$ be a $\frac{1}{4}$-net of $\mathbb{S}^{d_2-1}$. By \cite[Corollary 4.2.13]{vershynin2018high} we can assume $|\mathcal{N}_1|\leq 9^{d_1}$, $|\mathcal{N}_2|\leq 9^{d_2}$. Note that for any $\bm{u}\in \mathcal{N}_1$, $\bm{v}\in \mathcal{N}_2$, we have \begin{equation}
        \begin{aligned}\label{center}
            &\big\|\bm{u}^\top \bm{a}_k\bm{b}_k^\top\bm{v} -\mathbbm{E}(\bm{u}^\top \bm{a}_k\bm{b}_k^\top\bm{v})\big\|_{\psi_1} \\&\stackrel{(i)}{\leq}C_1\big\|(\bm{u}^\top\bm{a}_k)(\bm{v}^\top\bm{b}_k)\big\|_{\psi_1}\\&\stackrel{(ii)}{\leq}C_1\|\bm{u}^\top\bm{a}_k\|_{\psi_2}\|\bm{v}^\top\bm{b}_k\|_{\psi_2}\leq C_1E_1E_2
        \end{aligned}
    \end{equation}
    Note that $(i)$ is due to centering \cite[Exercise 2.7.10]{vershynin2018high}, and we use (\ref{2.3})  in $(ii)$. Thus, we can use Bernstein's inequality (see \cite[Theorem 2.8.1]{vershynin2018high}) to obtain the concentration of $\frac{1}{n}\sum_k\big(\bm{u}^\top\bm{a}_k\bm{b}_k^\top\bm{v}-\mathbbm{E}(\bm{u}^\top\bm{a}_k\bm{b}_k^\top\bm{v})\big)$; Followed by a union bound over $(\bm{u},\bm{v})\in \mathcal{N}_1\times\mathcal{N}_2$, then for any $t>0$\begin{equation}
        \begin{aligned}
\label{A.1}            &\mathbbm{P}\Big(\sup_{\bm{u}\in \mathcal{N}_1}\sup_{\bm{v}\in \mathcal{N}_2}\big|\frac{1}{n}\sum_k\big(\bm{u}^\top\bm{a}_k\bm{b}_k^\top\bm{v}-\mathbbm{E}(\bm{u}^\top\bm{a}_k\bm{b}_k^\top\bm{v})\big|\geq t\Big)\\&\leq 2\exp\Big((d_1+d_2)\log 9-c_2n\min\big\{\big(\frac{t}{E_1E_2}\big)^2,\frac{t}{E_1E_2}\big\}\Big).
        \end{aligned}
    \end{equation}
    We take $t=C_3E_1E_2\sqrt{\frac{d_1+d_2}{n}}$ with sufficiently large $C_3$, recall that $n\geq d_1+d_2$, then the event \begin{equation}\begin{aligned}
        \sup_{\bm{u}\in \mathcal{N}_1}\sup_{\bm{v}\in \mathcal{N}_2}&\big|\frac{1}{n}\sum_k\big(\bm{u}^\top\bm{a}_k\bm{b}_k^\top\bm{v}-\mathbbm{E}(\bm{u}^\top\bm{a}_k\bm{b}_k^\top\bm{v})\big|\\&\leq C_3E_1E_2\sqrt{\frac{d_1+d_2}{n}}\label{A.2}\end{aligned}
    \end{equation}
    holds with probability at least $1-\exp(-c_4(d_1+d_2))$. Note that \cite[Exercise 4.4.3]{vershynin2018high} gives $\|\frac{1}{n}\sum_k\{\bm{a}_k\bm{b}_k^\top -\mathbbm{E}(\bm{a}_k\bm{b}_k^\top) \}\|_{op}\leq 2\cdot (\mathrm{the~left~hand~side~of~(\ref{A.2})})$, the proof is complete.  
\end{proof}
\subsubsection{A Lemma for the proof of Theorem \ref{theorem3}}

\begin{lemma}\label{lemma5}
    Assume $a_1,...,a_n\in \mathbb{R}$ are independent   and satisfy $\max_k \|a_k\|_{\psi_2}\leq K$; $\bm{B}_1,...,\bm{B}_n\in \mathbb{R}^{p\times q}$ are independent  and satisfy $ \sup_{\bm{u}\in \mathbb{R}^{p-1}}\sup_{\bm{v}\in \mathbb{R}^{q-1}}\|\bm{u}^\top\bm{B}_k\bm{v}\|_{\psi_2}\leq E$  for each $k$. Assume $n\gtrsim p+q$, then it holds with probability at least $1-2\exp(-c(p+q))$ that, \begin{equation}\nonumber
        \Big\|\frac{1}{n}\sum_{k=1}^n\big\{a_k\bm{B}_k - \mathbbm{E}(a_k\bm{B}_k)\big\}\Big\|_{op}\leq CKE\sqrt{\frac{p+q}{n}}.
    \end{equation}
\end{lemma}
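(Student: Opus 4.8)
The plan is to run essentially the same covering argument used for Lemma~\ref{lem3}, the only difference being that one of the two vector factors has collapsed to a scalar, so the bilinear forms factor as $\bm{u}^\top\!\big(a_k\bm{B}_k\big)\bm{v}=a_k(\bm{u}^\top\bm{B}_k\bm{v})$. First I would fix a $\tfrac14$-net $\mathcal{N}_1\subset\mathbb{S}^{p-1}$ of $\mathbb{S}^{p-1}$ and a $\tfrac14$-net $\mathcal{N}_2\subset\mathbb{S}^{q-1}$ of $\mathbb{S}^{q-1}$, which by \cite[Corollary 4.2.13]{vershynin2018high} may be taken with $|\mathcal{N}_1|\le 9^p$ and $|\mathcal{N}_2|\le 9^q$. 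For fixed $\bm{u}\in\mathcal{N}_1$, $\bm{v}\in\mathcal{N}_2$, write $\bm{u}^\top\big(a_k\bm{B}_k-\mathbbm{E}(a_k\bm{B}_k)\big)\bm{v}=a_k(\bm{u}^\top\bm{B}_k\bm{v})-\mathbbm{E}\big[a_k(\bm{u}^\top\bm{B}_k\bm{v})\big]$. Since $\|a_k\|_{\psi_2}\le K$ and $\|\bm{u}^\top\bm{B}_k\bm{v}\|_{\psi_2}\le E$, inequality~(\ref{2.3}) gives $\|a_k(\bm{u}^\top\bm{B}_k\bm{v})\|_{\psi_1}\le KE$, and centering via \cite[Exercise 2.7.10]{vershynin2018high} keeps the sub-exponential norm of the centered variable $\lesssim KE$.

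Next I would apply Bernstein's inequality \cite[Theorem 2.8.1]{vershynin2018high} to $\frac1n\sum_{k}\big\{a_k(\bm{u}^\top\bm{B}_k\bm{v})-\mathbbm{E}[a_k(\bm{u}^\top\bm{B}_k\bm{v})]\big\}$ for each fixed pair, then union bound over the at most $9^{p+q}$ pairs in $\mathcal{N}_1\times\mathcal{N}_2$. Choosing the deviation level $t\asymp KE\sqrt{(p+q)/n}$ and invoking the hypothesis $n\gtrsim p+q$ — which forces $t/(KE)\lesssim 1$, so $\min\{(t/KE)^2,t/KE\}=(t/KE)^2$ and Bernstein is in its sub-Gaussian regime — the entropy term $(p+q)\log 9$ is absorbed and we obtain, with probability at least $1-2\exp(-c(p+q))$,
\[
\sup_{\bm{u}\in\mathcal{N}_1,\,\bm{v}\in\mathcal{N}_2}\Big|\frac1n\sum_{k=1}^n\big\{a_k(\bm{u}^\top\bm{B}_k\bm{v})-\mathbbm{E}[a_k(\bm{u}^\top\bm{B}_k\bm{v})]\big\}\Big|\le C\,KE\sqrt{\frac{p+q}{n}}.
\]

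Finally I would pass from the net back to the operator norm: for $\tfrac14$-nets of both spheres, \cite[Exercise 4.4.3]{vershynin2018high} yields $\|\bm{M}\|_{\op}\le 2\sup_{\bm{u}\in\mathcal{N}_1,\,\bm{v}\in\mathcal{N}_2}|\bm{u}^\top\bm{M}\bm{v}|$ for every $\bm{M}\in\mathbb{R}^{p\times q}$; applying this with $\bm{M}=\frac1n\sum_k\{a_k\bm{B}_k-\mathbbm{E}(a_k\bm{B}_k)\}$ and absorbing the factor $2$ into $C$ gives the claim. I do not anticipate a genuine obstacle: the statement is Lemma~\ref{lem3} with one vector factor replaced by a scalar. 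The only points requiring slight care are that centering does not inflate the $\psi_1$-norm, and that the scaling $n\gtrsim p+q$ is precisely what keeps Bernstein in its sub-Gaussian regime so the bound scales as $\sqrt{(p+q)/n}$ rather than the (weaker) $(p+q)/n$.
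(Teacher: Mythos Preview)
Your proposal is correct and follows essentially the same argument as the paper's own proof: a $\tfrac14$-net covering of both spheres, the sub-exponential bound $\|a_k(\bm{u}^\top\bm{B}_k\bm{v})\|_{\psi_1}\le KE$ via (\ref{2.3}) plus centering, Bernstein's inequality with a union bound over $\mathcal{N}_1\times\mathcal{N}_2$, and the passage from nets to $\|\cdot\|_{op}$ via \cite[Exercise 4.4.3]{vershynin2018high}. The only cosmetic difference is that the paper first reduces to $\mathbbm{E}(a_k\bm{B}_k)=0$ by centering at the outset, whereas you center termwise; both are equivalent.
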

\begin{proof}
Similarly to that of Lemma \ref{lem3}, the proof is essentially a standard covering argument for controlling operator norm of random matrix. For simplicity we assume $\mathbbm{E}(a_k\bm{B}_k)=0$; the proof extends to $\mathbbm{E}(a_k\bm{B}_k)\neq 0$ by simple centering technique \cite[Exercise 2.7.10]{vershynin2018high}.
    We invoke a covering argument: let $\mathcal{N}_1,\mathcal{N}_2$ be the $\frac{1}{4}$-net of $\mathbb{S}^{p-1}$, $\mathbb{S}^{q-1}$, respectively; we can assume $|\mathcal{N}_1|\leq 9^p,|\mathcal{N}_2|\leq 9^q$. By \cite[Exercise 4.4.3]{vershynin2018high} we have \begin{equation}
        \label{discrete}
        \Big\|\frac{1}{n}\sum_{k=1}^na_k\bm{B}_k\Big\|_{op}\leq 2\sup_{\bm{u}\in \mathcal{N}_1}\sup_{\bm{v}\in \mathcal{N}_2}\frac{1}{n}\sum_{k=1}^na_k\bm{u}^\top \bm{B}_k\bm{v}.
    \end{equation}
    For fixed $\bm{u},\bm{v}$,   $$\|a_k\bm{u}^\top\bm{B}_k\bm{v}\|_{\psi_1}\leq \|a_k\|_{\psi_2}\|\bm{u}^\top \bm{B}_k \bm{v}\|_{\psi_2}\leq KE.$$ Thus, we can apply Bernstein's inequality \cite[Theorem 2.8.1]{vershynin2018high}, together with a union bound on $\mathcal{N}_1\times \mathcal{N}_2$, to obtain   that for any $t>0$,\begin{equation}
        \begin{aligned}\nonumber
          &\mathbbm{P}\Big(\sup_{\bm{u}\in\mathcal{N}_1}\sup_{\bm{v}\in \mathcal{N}_2}\frac{1}{n}\sum_{k=1}^n a_k\bm{u}^\top\bm{B}_k\bm{v}\geq t\Big)\\&~~~\leq 2\exp\Big((p+q)\log 9 - cn\cdot\min\big\{\frac{t^2}{K^2E^2},\frac{t}{KE}\big\}\Big).
        \end{aligned}
    \end{equation}
    We set $t=C KE\sqrt{\frac{p+q}{n}}$ with sufficiently large $C$, recall that we assume $n\gtrsim p+q$, we obtain that with probability at least $1-2\exp(-c_1(p+q))$, 
    $$\sup_{\bm{u}\in\mathcal{N}_1}\sup_{\bm{v}\in \mathcal{N}_2}\frac{1}{n}\sum_{k=1}^n a_k\bm{u}^\top\bm{B}_k\bm{v} \leq CKE\sqrt{\frac{p+q}{n}}.$$
    Combined with (\ref{discrete}), the result follows.
\end{proof}
  \end{appendix}
  \begin{IEEEbiographynophoto}{Junren Chen}
is currently pursuing the Ph.D. degree with  Department of Mathematics, The University of Hong Kong. He received a Hong Kong PhD fellowship from Hong Kong Research Grants Council for supporting his Ph.D. study.  Before that, he got the B.Sc. on Mathematics and Applied Mathematics from Sun Yat-Sen University. His research interests include compressed sensing, high-dimensional statistics, signal and image processing, quantization and optimization.
\end{IEEEbiographynophoto}
\begin{IEEEbiographynophoto}{Yueqi Wang}
  received the B.S. degree from Zhejiang University, Zhejiang, China in 2021. She is currently pursuing the Ph.D. degree at  the University of Hong Kong, Hong Kong, China. Her major research interests include Photonic dispersion relation reconstruction, topological optimization, and machine learning.
\end{IEEEbiographynophoto}
  \begin{IEEEbiographynophoto}{Michael K. Ng}
(Senior Member, IEEE)  received the B.Sc. and M.Phil. degrees from The University of Hong Kong, Hong Kong, in 1990 and 1992, respectively, and the Ph.D. degree from The Chinese University of Hong Kong, Hong Kong, in 1995. From 1995 to 1997, he was a Research Fellow with the Computer Sciences Laboratory, The Australian National University, Canberra, ACT, Australia. He was an Assistant Professor/Associate Professor with The University of Hong Kong from 1997 to 2005. He was a Professor/Chair Professor (2005-2019) with the Department of Mathematics, Hong Kong Baptist University, Hong Kong, Chair Professor (2019-2023) with the Department of Mathematics, The University of Hong. He is currently a Chair Professor in Mathematics and Chair Professor in
Data Science at Hong Kong Baptist University. His research interests include 
applied and computational mathematics, machine learning and artificial intelligence, and 
data science.
Dr. Ng serves as an editorial board member of several international journals. He was selected for the 2017 Class of Fellows of the Society for Industrial and Applied Mathematics. He received the Feng Kang Prize for his significant contributions to scientific computing.
\end{IEEEbiographynophoto}
\end{document}